\def\eqref#1{equation~\ref{#1}}
\def\1{\bm{1}}
\DeclareMathAlphabet{\mathsfit}{\encodingdefault}{\sfdefault}{m}{sl}
\SetMathAlphabet{\mathsfit}{bold}{\encodingdefault}{\sfdefault}{bx}{n}
\newtcolorbox{bluebox}{
  colback=blue!3!white,
  colframe=blue!60!black,
}
\newtheorem{definition}{Definition}
\newtheorem{proposition}{Proposition}
\newtheorem{lemma}{Lemma}
\newtheorem{thm}{Theorem}
\newtheorem{coro}{Corollary}
\newtheorem{assum}{Assumption}
\title{Achieve Performatively Optimal Policy for Performative Reinforcement Learning}
\author{Ziyi Chen, Heng Huang\\ 
Department of Computer Science\\
University of Maryland\\
College Park, MD 20742, USA \\
\texttt{\{zc286,heng\}@umd.edu}}
\date{}
\begin{document}
\maketitle

\doparttoc 
\faketableofcontents 

\begin{abstract}
Performative reinforcement learning is an emerging dynamical decision making framework, which extends reinforcement learning to the common applications where the agent's policy can change the environmental dynamics. 
Existing works on performative reinforcement learning only aim at a performatively stable (PS) policy that maximizes an approximate value function. However, there is a provably positive constant gap between the PS policy and the desired performatively optimal (PO) policy that maximizes the original value function. In contrast, this work proposes a 
zeroth-order Frank-Wolfe algorithm (0-FW) algorithm with a zeroth-order approximation of the performative policy gradient in the Frank-Wolfe framework, and obtains \textbf{the first polynomial-time convergence to the desired PO} policy under the standard regularizer dominance condition. For the convergence analysis, we prove two important properties of the nonconvex value function. First, when the policy regularizer dominates the environmental shift, the value function satisfies a certain gradient dominance property, so that any stationary point (not PS) of the value function is a desired PO. Second, though the value function has unbounded gradient, we prove that all the sufficiently stationary points lie in a convex and compact policy subspace $\Pi_{\Delta}$, where the policy value has a constant lower bound $\Delta>0$ and thus the gradient becomes bounded and Lipschitz continuous. Experimental results also demonstrate that our 0-FW algorithm is more effective than the existing algorithms in finding the desired PO policy.
\end{abstract}

\section{Introduction}\label{sec:intro}
Reinforcement learning is a useful dynamic decision making framework with many successes in AI, such as AlphaGo \citep{silver2017mastering}, AlphaStar \citep{vinyals2019grandmaster}, Pluribus \citep{brown2019superhuman}, large language model alignment \citep{bai2022training} and reasoning \citep{havrilla2024teaching}. However, most reinforcement learning works ignore the effect of the deployed policy on the environmental dynamics, including transition kernel and reward function. This effect is significant in multi-agent systems, particularly the Stackelberg game, where leaders' policy change triggers the followers'  policy change, which in turn affects the environmental dynamics faced by the leader \citep{mandal2023performative}. For example, a recommender system (leader) affects the users' (followers) demographics and their interaction strategy with the system \citep{chaney2018algorithmic,mansoury2020feedback}. Autonomous vehicles (leaders) affect the strategies of the pedestrians and the other vehicles (followers)  \citep{nikolaidis2017game}.


To account for such effect of deployed policy on environmental dynamics, performative reinforcement learning has been proposed by \citep{mandal2023performative} where the transition kernel $p_{\pi}$ and reward function $r_{\pi}$ are modeled as functions of the deployed policy $\pi$. 
The ultimate goal is to find the \textit{performatively optimal (PO)} policy that maximizes the \textit{performative value function}, defined as the accumulated discounted reward when deploying a policy $\pi$ to its corresponding environment $(p_{\pi},r_{\pi})$. However, the policy-dependent environmental dynamics pose significant challenges to achieve PO. Hence, \citep{mandal2023performative} pursues a suboptimal \textit{performatively stable (PS)} policy using repeated retraining method with environmental dynamics fixed for the current policy at each policy optimization step. However, \citep{mandal2023performative} shows that PS can have a positive constant distance to PO. 

Extensions of the basic performative reinforcement learning problem \citep{mandal2023performative} have been proposed and all of them focus on the suboptimal PS policy. For example, \cite{rank2024performative} allows the environmental dynamics to gradually adjust to the currently deployed policy, and proposes a mixed delayed repeated retraining algorithm with accelerated convergence to a PS policy. \cite{mandal2024performative} extends \citep{mandal2023performative} from tabular setting to linear Markov decision processes with large number of states, and also obtains the convergence rate of the repeated retraining algorithm to a PS policy. \cite{pollatos2025corruption} obtains a PS policy that is robust to data contamination. \cite{sahitaj2025independent} obtains a performatively stable equilibrium as an extension of PS policy to performative Markov potential games with multiple competitive agents. 

In sum, all these existing performative reinforcement learning works pursue a suboptimal PS policy by repeated retraining algorithms. Therefore, we want to ask the following basic research question: 
\vspace{-3pt}
\begin{bluebox}
\vspace{-3pt}
\textit{\textbf{Q:} Is there an algorithm that converges to the desired performatively optimal (PO) policy?} 
\vspace{-3pt}
\end{bluebox}
\vspace{-3pt}

\subsection{Our Contributions}
We will answer affirmatively to the research question above in the following steps. Each step yields a novel contribution. 


\noindent$\bullet$ \quad We study an entropy regularized performative reinforcement learning problem, compatible with the basic performative reinforcement learning problem in \citep{mandal2023performative}. We prove that the objective function satisfies a certain gradient dominance condition, which implies that an approximate stationary point (not the suboptimal PS) is the desired approximate PO policy, under a standard regularizer dominance condition similar to that used by \citep{mandal2023performative,rank2024performative,mandal2024performative,pollatos2025corruption} to ensure convergence to a suboptimal PS policy. The proof adopts novel techniques such as recursion for $p_{\pi}$-related error term and frequent switch among various necessary and sufficient conditions of smoothness and strong concavity like properties for various variables (see Section \ref{sec:GradDom}). 

\noindent$\bullet$ \quad We obtain a policy lower bound as a decreasing function of a stationary measure. This bound not only implies the unbounded \textit{performative policy gradient} (a challenge to find a stationary policy and thus PO), but also inspires us to find a stationary policy in the policy subspace $\Pi_{\Delta}$ with a constant policy lower bound $\Delta>0$ where we prove the objective function to be Lipschitz continuous and Lipschitz smooth (a solution to this challenge). The lower bound $\Delta$ is obtained using a novel technique which simplifies a complicated inequality of the minimum policy value $\pi[a_{\min}(s)|s]$ in two cases (see Section \ref{sec:CompactLip}). 

\noindent$\bullet$ \quad We construct a zeroth-order estimation of the \textit{performative policy gradient} and obtains its estimation error. This is more challenging than  the existing zeroth-order estimation methods since our objective function is only well-defined on the policy space, a compact subset of a linear subspace of the Euclidean space $\mathbb{R}^{|\mathcal{S}||\mathcal{A}|}$. To solve this puzzle, we adjust a two-point estimation to the linear subspace $\mathcal{L}_0$ of policy difference, and simplify the estimation error analysis by mapping policies onto the Euclidean space $\mathbb{R}^{|\mathcal{S}|(|\mathcal{A}|-1)}$ via orthogonal transformation (see Section \ref{sec:g_hat}). 

\noindent$\bullet$ \quad We propose a zeroth-order Frank-Wolfe (0-FW) algorithm (see Algorithm \ref{alg:0ppg}) by combining the \textit{performative policy gradient} estimation above with the Frank-Wolfe algorithm. Then we obtain a polynomial computation complexity of our 0-FW algorithm to converge to a stationary policy, which is also the desired PO policy under the regularizer dominance condition above. The convergence analysis uses a policy averaging technique to show that an approximate stationary policy on $\Pi_{\Delta}$ is also approximately stationary on the whole policy space $\Pi$ (see Section \ref{sec:0PPG}).  

Finally, we briefly show that the results above, including gradient dominance, Lipschitz properties and the finite-time convergence of 0-FW algorithm to the desired PO, can be adjusted to the performative reinforcement learning problem with the quadratic regularizer used by \citep{mandal2023performative,rank2024performative,pollatos2025corruption} (see Appendix \ref{sec:QuadReg}).

\section{Preliminary: Performative Reinforcement Learning}
\subsection{Problem Formulation}
Performative reinforcement learning is characterized by a Markov decision process (MDP) $\mathcal{M}_{\pi}=(\mathcal{S},\mathcal{A},p_{\pi},r_{\pi},\rho)$ that depends on a certain policy $\pi$. Here, $\mathcal{S}$ and $\mathcal{A}$ denote the finite state and action spaces respectively. The policy $\pi\in[0,1]^{|\mathcal{S}||\mathcal{A}|}$, transition kernel $p_{\pi}\in[0,1]^{|\mathcal{S}|^2|\mathcal{A}|}$, reward $r_{\pi}\in[0,1]^{|\mathcal{S}||\mathcal{A}|}$, and initial state distribution $\rho\in[0,1]^{|\mathcal{S}|}$ are vectors that represent distributions. Specifically, the policy $\pi\in[0,1]^{|\mathcal{S}||\mathcal{A}|}$, with entries $\pi(a|s)$ for any state $s\in\mathcal{S}$ and action $a\in\mathcal{A}$, lies in the policy space $\Pi\overset{\rm def}{=}\big\{\pi\in[0,1]^{|\mathcal{S}|^2|\mathcal{A}|}: \sum_{a\in\mathcal{A}}\pi(a|s)=1, \forall s\in\mathcal{S}\big\}$, such that $\pi(\cdot|s)$ for any state $s$ can be seen as a distribution over $\mathcal{A}$.
The transition kernel 
$p_{\pi}\in[0,1]^{|\mathcal{S}|^2|\mathcal{A}|}$ dependent on policy $\pi\in\Pi$, with entries $p_{\pi}(s'|s,a)$ for any $s,s'\in\mathcal{S}$ and $a\in\mathcal{A}$, lies in the transition kernel space $\mathcal{P}\overset{\rm def}{=}\big\{p\in[0,1]^{|\mathcal{S}|^2|\mathcal{A}|}\!:\! \sum_{s'\in\mathcal{S}}p(s'|s,a)\!=\!1, \forall s\!\in\!\mathcal{S}, a\!\in\!\mathcal{A}\big\}$ such that $p_{\pi}(\cdot|s,a)$ can be seen as a state distribution on $\mathcal{S}$. $r_{\pi}\in\mathcal{R}\overset{\rm def}{=}[0,1]^{|\mathcal{S}||\mathcal{A}|}$ is the reward function with entries $r_{\pi}(s,a)\in[0,1]$ for any $s\in\mathcal{S}$ and $a\in\mathcal{A}$. $\rho\in[0,1]^{|\mathcal{S}|}$ is the initial state distribution such that $\sum_{s\in\mathcal{S}}\rho(s)=1$. Note that we consider $p_{\pi}$, $r_{\pi}$, $\rho$, $\pi$ as Euclidean vectors, so that we can conveniently define their Euclidean norm. For example, we define $\|p_{\pi}\|_q=\big[{\sum_{s,a,s'}|p_{\pi}(s'|s,a)|^q}\big]^{1/q}$ for any $q>1$ and $\|p_{\pi}\|_{\infty}=\max_{s,a,s'}|p_{\pi}(s'|s,a)|$. Such norms can be similarly defined over $r_{\pi}$, $\rho$, $\pi$ by summing or maximizing over all the entries. Specifically, denote $\|\cdot\|=\|\cdot\|_2$ by convention. 

When an agent applies its policy $\pi\in\Pi$ to MDP $\mathcal{M}_{\pi'}=(\mathcal{S},\mathcal{A},p_{\pi'},r_{\pi'},\rho)$, the initial environmental state $s_0\in\mathcal{S}$ is generated from the distribution $\rho$. Then at each time $t=0,1,2,\ldots$, the agent takes a random action $a_t\sim\pi(\cdot|s_t)$ based on the current state $s_t\in\mathcal{S}$, the environment transitions to the next state $s_{t+1}\sim p_{\pi'}(\cdot|s_t,a_t)$ and provides reward $r_t=r_{\pi'}(s_t,a_t)\in[0,1]$ to the agent. The value of applying policy $\pi$ to $\mathcal{M}_{\pi'}$ can be characterized by the following \textit{value function}:
\begin{align}
V_{\lambda,\pi'}^{\pi}\overset{\rm def}{=}\mathbb{E}_{\pi,p_{\pi'},\rho}\Big[\sum_{t=0}^{\infty}\gamma^tr_{\pi'}(s_t,a_t)\Big]-\lambda\mathcal{H}_{\pi'}(\pi).\label{eq:Vfunc_general}
\end{align}
Here, $\mathbb{E}_{\pi,p_{\pi'},\rho}$ is the expectation under policy $\pi$, transition kernel $p_{\pi'}$ and initial state distribution $\rho$. $\gamma\in(0,1)$ is the discount factor. $\mathcal{H}_{\pi'}(\pi)$ is a regularizer with coefficient $\lambda\ge0$ to ensure or accelerate algorithm convergence. Existing works use the quadratic regularizers such as $\mathcal{H}_{\pi'}(\pi)\!=\!\frac{1}{2}\|d_{\pi,p_{\pi'}}\|^2$ \citep{mandal2023performative,rank2024performative,pollatos2025corruption} and $\mathcal{H}_{\pi'}(\pi)\!=\!\frac{1}{2}\|\Phi^{\top} d_{\pi,p_{\pi'}}\|^2$ \citep{mandal2024performative} with a feature matrix $\Phi$, where the occupancy measure $d_{\pi,p}\in[0,1]^{|\mathcal{S}||\mathcal{A}|}$ for any policy $\pi$ and transition kernel $p$ is defined as the following distribution on $\mathcal{S}\times\mathcal{A}$. 
\begin{align}
d_{\pi,p}(s,a)\overset{\rm def}{=}(1-\gamma)\sum_{t=0}^{\infty} \gamma^t\mathbb{P}_{\pi,p,\rho} \{s_t=s,a_t=a\},\label{eq:occup}
\end{align}
\noindent Then the state occupancy measure defined as $d_{\pi,p}(s)\overset{\rm def}{=}\sum_a d_{\pi,p}(s,a)$ satisfies the following well-known Bellman equation for any state $s'\in\mathcal{S}$. 
\begin{align}
d_{\pi,p}(s')&\!=\!(1\!-\!\gamma)\rho(s')\!+\!\gamma\!\sum_{s,a}\!d_{\pi,p}(s)\pi(a|s)p(s'|s,a).\!\label{eq:Bellman_occup}
\end{align}
The goal of performative reinforcement learning is to find the \textit{performatively optimal (PO)} policy $\pi$ that maximizes the \textit{performative value function} $V_{\lambda,\pi}^{\pi}$ (with $\pi'=\pi$ in Eq. (\ref{eq:Vfunc_general})), as defined below.  
\begin{definition}[Ultimate Goal: PO]\label{def:PO}
For any $\epsilon\ge 0$, a policy $\pi\in\Pi$ is defined as \textit{$\epsilon$-performatively optimal} (\textit{$\epsilon$-PO}) if $\max_{\pi'\in\Pi}V_{\lambda,\pi'}^{\pi'}-V_{\lambda,\pi}^{\pi}\le \epsilon$. Specifically, we call a 0-PO policy as a PO policy. 
\end{definition}
Conventional reinforcement learning can be seen as a special case of performative reinforcement learning with fixed environmental dynamics, namely, fixed transition kernel $p_{\pi}\equiv p$ and fixed reward function $r_{\pi}\equiv r$. However, this may fail on applications with policy-dependent environmental dynamics, such as recommender system and autonomous driving as explained in Section \ref{sec:intro}. 
\subsection{Existing Repeated Retraining Methods for Performatively Stable (PS) Policy}
Achieving an $\epsilon$-PO policy (defined by Definition \ref{def:PO}) is challenging, due to the policy-dependent environmental dynamics $p_{\pi}$ and $r_{\pi}$. 
To alleviate the challenge, all the existing works \citep{mandal2023performative,rank2024performative,mandal2024performative,pollatos2025corruption,sahitaj2025independent} aim at a \textit{performatively stable (PS)} policy $\pi_{\rm PS}$ defined as follows, as an approximation to a \textit{PO policy}.
\begin{align}
\pi_{\rm PS}\in\mathop{\arg\max}_{\pi\in\Pi}V_{\lambda,\pi_{\rm PS}}^{\pi}.\label{eq:pi_PS}
\end{align}
In other words, a PS policy $\pi_{\rm PS}$ has the optimal value on the fixed environment $\mathcal{M}_{\pi_{\rm PS}}$. However, \cite{mandal2023performative} shows that a PS policy can be suboptimal. 

Nevertheless, we will briefly introduce the suboptimal repeated retraining algorithms in their works, to later partially inspire our method that converges to the global optimal PO policy. All these repeated retraining algorithms share the fundamental idea that in each iteration $t$, the next policy $\pi_{t+1}\!\!\approx\! \mathop{\arg\max}_{\pi\in\Pi}\!V_{\lambda,\pi_t}^{\pi}$ is obtained by solving the conventional reinforcement learning problem under fixed dynamics $p_{\pi_t}$ and $r_{\pi_t}$. This strategy highly relies on conventional reinforcement learning but fail to make full use of the policy-dependent dynamics, which leads to the suboptimal PS policy. Next, we will propose our significantly different strategies to achieve the desired PO policy.

\section{Entropy Regularized Performative Reinforcement Learning}\label{sec:properties}
In this section, we obtain critical properties of an entropy regularized performative reinforcement learning problem for achieving the desired PO policy.
\subsection{Negative Entropy Regularizer}\label{sec:entropy_reg}

We consider the following negative entropy regularizer of the policy $\pi$, which is widely used in reinforcement learning to encourage environment exploration and accelerate convergence \citep{Mnih2016Asynchronous,mankowitz2019robust,cen2022fast,chen2024acc}. 
\begin{align}
\mathcal{H}_{\pi'}(\pi)=\mathbb{E}_{\pi,p_{\pi'},\rho}\Big[\sum_{t=0}^{\infty}\gamma^t\log\pi(a_t|s_t)\Big].\label{eq:entropy}
\end{align}
In addition, this negative entropy regularizer can be seen as a strongly convex function of the occupancy measure $d_{\pi,p_{\pi'}}$ (proved in Appendix \ref{sec:sc_entropy}), which is critical to develop algorithms convergent to a PO (see Theorem \ref{thm:ToOpt} later) or PS policy \citep{mandal2023performative}. For optimization problem on a probability simplex variable (policy $\pi$ or occupancy measure $d$), negative entropy regularizer is more natural and yields faster theoretical convergence than the quadratic regularizers used in the existing performative reinforcment learning works \citep{mandal2023performative,rank2024performative,pollatos2025corruption} (see pages 43-45 of \citep{chen2020mirror} for explanation).

Therefore, we will mainly focus on the following entropy-regularized value function, which is obtained by substituting the negative entropy regularizer (\ref{eq:entropy}) into the general value function (\ref{eq:Vfunc_general}). 
\begin{align}
V_{\lambda,\pi'}^{\pi}\overset{\rm def}{=}\mathbb{E}_{\pi,p_{\pi'},\rho}\Big[\sum_{t=0}^{\infty}\gamma^t[r_{\pi'}(s_t,a_t)-\lambda\log\pi(a_t|s_t)]\Big].\label{eq:Vfunc}
\end{align}
Specifically, we will study the critical properties of the entropy-regularized value function (\ref{eq:Vfunc}) (Section \ref{sec:alg_all}) to develop algorithm that converges to PO (Sections \ref{sec:g_hat}-\ref{sec:0PPG}). Then we will briefly discuss about how to adjust these results to the existing quadratic regularizers (Appendix \ref{sec:QuadReg}). 

We make the following standard assumptions to study the properties of the value function (\ref{eq:Vfunc}). 
\begin{assum}[Sensitivity]\label{assum:sensitive} There exist constants $\epsilon_p,\epsilon_r>0$ such that for any $\pi,\pi'\in\Pi$, 
\begin{align}
    \|p_{\pi'}\!-\!p_{\pi}\|\!\le\!\epsilon_p\|\pi'\!-\!\pi\|,\quad\|r_{\pi'}\!-\!r_{\pi}\|\!\le\!\epsilon_r\|\pi'\!-\!\pi\|\label{eq:sensitive}
\end{align}    
\end{assum}
\begin{assum}[Smoothness]\label{assum:smooth_pr}
$p_{\pi}$ and $r_{\pi}$ are Lipschitz smooth with modulus $S_p, S_r>0$ respectively, that is, for any $\pi\in\Pi$, $s,s'\in\mathcal{S}$, $a\in\mathcal{A}$, we have
\begin{align}
\|\nabla_{\pi} p_{\pi'}(s'|s,a)-\nabla_{\pi} p_{\pi}(s'|s,a)\|\le& S_p\|\pi'-\pi\|,\label{eq:smooth_p_pi}\\
\|\nabla_{\pi} r_{\pi'}(s,a)-\nabla_{\pi} r_{\pi}(s,a)\|\le& S_r\|\pi'-\pi\|.\label{eq:smooth_r_pi}
\end{align}
\end{assum}
\begin{assum}\label{assum:dmin}
There exists a constant $D>0$ such that $\inf_{\pi\in\Pi,p\in\mathcal{P},s\in\mathcal{S}}d_{\pi,p}(s)\ge D$. 
\end{assum}
Assumptions \ref{assum:sensitive}-\ref{assum:smooth_pr} ensure that the environmental dynamics $p_{\pi}$ and $r_{\pi}$ adjust continuously and smoothly to policy $\pi$, and thus the \textit{performative value function} $V_{\lambda,\pi}^{\pi}$ is differentiable with \textit{performative policy gradient} $\nabla_{\pi} V_{\lambda,\pi}^{\pi}$. Similar versions of Assumption \ref{assum:sensitive} on environmental sensitivity have also been used for performative reinforcement learning \citep{mandal2023performative,rank2024performative,mandal2024performative,pollatos2025corruption,sahitaj2025independent}. 
Assumption \ref{assum:dmin} has been used \citep{zhang2021beyond,sahitaj2025independent} or implied by stronger assumptions \citep{wei2021last,chen2021sample,agarwal2021theory,leonardos2022global,PGrobust_ICML2023,chen2024acc,bhandari2024global} in conventional reinforcement learning (see Appendix \ref{sec:dmin} for the proof), which guarantees that each state is visited sufficiently often. 



\subsection{Gradient Dominance}\label{sec:GradDom}
For the nonconvex policy optimization problem $\max_{\pi\in\Pi}V_{\lambda,\pi}^{\pi}$ in Eq. (\ref{eq:Vfunc}) on the convex policy space $\Pi$, it is natural to consider its approximate stationary solution as defined below. 
\begin{definition}[Stationary Policy]
For any $\epsilon\ge 0$, a policy $\pi\in\Pi$ is $\epsilon$-stationary if $\max_{\Tilde{\pi}\in\Pi}\big\langle \nabla_{\pi}V_{\lambda,\pi}^{\pi},\Tilde{\pi}-\pi\big\rangle\le\epsilon$. We call a 0-stationary policy as a stationary policy. 
\end{definition}
Note that for a policy to be the desired PO, it is necessary to be  stationary, while the PS policy targeted by existing works is neither necessary nor sufficient. Furthermore, we will show that stationary policy can also be a sufficient condition of the desired PO under mild conditions. As a preliminary step, we show the important gradient dominance property of the objective function as follows. 
\begin{thm}[Gradient Dominance]\label{thm:ToOpt}
Under Assumptions \ref{assum:sensitive}-\ref{assum:dmin}, the entropy regularized value function (\ref{eq:Vfunc}) satisfies the following gradient dominance property for any $\pi_0,\pi_1\in\Pi$. 
\begin{align}
V_{\lambda,\pi_1}^{\pi_1}\le& V_{\lambda,\pi_0}^{\pi_0}+D^{-1}\max_{\pi\in\Pi}\big\langle \nabla_{\pi_0}V_{\lambda,\pi_0}^{\pi_0},\pi-\pi_0\big\rangle-\frac{\mu}{2}\|\pi_1-\pi_0\|^2,\label{eq:ToOpt}
\end{align}
where 
\begin{align}
\mu\overset{\rm def}{=}&\frac{D\lambda}{1-\gamma}-\frac{6\gamma|\mathcal{S}|(1+\lambda\log|\mathcal{A}|)}{D(1-\gamma)^3}\big[\epsilon_p\big(\sqrt{|\mathcal{A}|}+\gamma\epsilon_p\sqrt{|\mathcal{S}|}\big)+S_p(1-\gamma)\big]\nonumber\\
&-\frac{S_r(1-\gamma)+4\epsilon_r(\sqrt{|\mathcal{A}|}+\epsilon_p\sqrt{|\mathcal{S}|})}{D^2(1-\gamma)^2},\label{eq:mu_main}
\end{align}
\end{thm}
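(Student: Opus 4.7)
The plan is to decompose
$$V_{\lambda,\pi_1}^{\pi_1}-V_{\lambda,\pi_0}^{\pi_0}=\bigl[V_{\lambda,\pi_0}^{\pi_1}-V_{\lambda,\pi_0}^{\pi_0}\bigr]+\bigl[V_{\lambda,\pi_1}^{\pi_1}-V_{\lambda,\pi_0}^{\pi_1}\bigr]$$
into a ``fixed-environment'' piece in which only the policy moves from $\pi_0$ to $\pi_1$, plus an ``environmental-shift'' remainder. I would extract a strong-concavity bound for the first piece from the negative-entropy regularizer, and control the second piece quadratically in $\|\pi_1-\pi_0\|$ using sensitivity and smoothness of $p_\pi,r_\pi$. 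The uniform lower bound $d(s)\ge D$ from Assumption~\ref{assum:dmin} is used to transfer strong concavity from the occupancy measure $d_{\pi,p_{\pi_0}}$ back to the policy $\pi$, and is also the source of the prefactor $D^{-1}$ on the stationarity measure in the final inequality.

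For the fixed-environment piece, I would rewrite the value function in occupancy coordinates as $V_{\lambda,\pi_0}^{\pi}=\tfrac{1}{1-\gamma}\sum_{s,a}d_{\pi,p_{\pi_0}}(s,a)\bigl[r_{\pi_0}(s,a)-\lambda\log(d_{\pi,p_{\pi_0}}(s,a)/d_{\pi,p_{\pi_0}}(s))\bigr]$, so that the regularizer becomes a conditional-entropy functional in $d$. A short calculation (already isolated by the authors in Appendix~\ref{sec:sc_entropy}) shows it is strongly concave in $d$ with modulus of order $\lambda/(1-\gamma)$ on the slice $\{d:d(s)\ge D\}$. Since the map $d\mapsto\pi$ given by $\pi(a|s)=d(s,a)/d(s)$ has Jacobian norm controlled by $1/D$ under Assumption~\ref{assum:dmin}, strong concavity in $d$ promotes to strong concavity in $\pi$ with modulus of order $D\lambda/(1-\gamma)$ --- exactly the leading positive term of $\mu$ in~(\ref{eq:mu_main}).

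For the environmental-shift piece I would perform a simulation-lemma style expansion of $V_{\lambda,\pi_1}^{\pi_1}-V_{\lambda,\pi_0}^{\pi_1}$ in terms of $r_{\pi_1}-r_{\pi_0}$ weighted by $d_{\pi_1,p_{\pi_1}}$, plus a correction involving $d_{\pi_1,p_{\pi_1}}-d_{\pi_1,p_{\pi_0}}$ that satisfies a Bellman-type recursion driven by $p_{\pi_1}-p_{\pi_0}$ via Eq.~(\ref{eq:Bellman_occup}). Applying Assumption~\ref{assum:sensitive} and Taylor-expanding $p_\pi,r_\pi$ around $\pi_0$ using Assumption~\ref{assum:smooth_pr}, the first-order part of this remainder is cancelled by the environmental half of the performative gradient under the chain rule $\nabla_{\pi_0}V_{\lambda,\pi_0}^{\pi_0}=\nabla_\pi V_{\lambda,\pi_0}^{\pi}|_{\pi=\pi_0}+\nabla_{\pi'}V_{\lambda,\pi'}^{\pi_0}|_{\pi'=\pi_0}$, leaving a quadratic residue that contributes the negative parts of $\mu$. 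Summing the two pieces and upper-bounding the resulting inner product by $D^{-1}\max_{\pi\in\Pi}\langle\nabla_{\pi_0}V_{\lambda,\pi_0}^{\pi_0},\pi-\pi_0\rangle$ --- with $D^{-1}$ again tracking the $d\to\pi$ rescaling --- delivers the claimed inequality.

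The main obstacle will be the environmental-shift bound. Unlike in classical performative prediction, the perturbation $p_{\pi_1}-p_{\pi_0}$ propagates through the entire infinite-horizon trajectory rather than through a single expectation, and the regularizer itself depends on $d$ and hence on $p$. I therefore expect to need the ``recursion for $p_\pi$-related error term'' foreshadowed in the introduction to promote raw Lipschitz bounds to second-order bounds, alternating between Lipschitzness (Assumption~\ref{assum:sensitive}), smoothness (Assumption~\ref{assum:smooth_pr}), strong concavity in $d$, and the $d\to\pi$ conversion via Assumption~\ref{assum:dmin}, so as to use the tightest parametrisation at each step. The appearance of the composite factor $\epsilon_p\bigl(\sqrt{|\mathcal{A}|}+\gamma\epsilon_p\sqrt{|\mathcal{S}|}\bigr)$ in~(\ref{eq:mu_main}) is a useful sanity check: it suggests that two successive applications of the transition-kernel sensitivity inside that recursion are what produce the $\epsilon_p^2$ contribution.
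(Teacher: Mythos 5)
Your high-level ingredients (strong convexity of the entropy regularizer in occupancy space, a Bellman-type recursion for the occupancy perturbation driven by $p_{\pi_1}-p_{\pi_0}$, and alternating between the sensitivity and smoothness of $p_\pi,r_\pi$) all appear in the paper's proof, but your architecture --- the additive split $V_{\lambda,\pi_1}^{\pi_1}-V_{\lambda,\pi_0}^{\pi_0}=[V_{\lambda,\pi_0}^{\pi_1}-V_{\lambda,\pi_0}^{\pi_0}]+[V_{\lambda,\pi_1}^{\pi_1}-V_{\lambda,\pi_0}^{\pi_1}]$ --- is not the paper's, and it has a genuine gap. The paper instead works along a single interpolation path: it sets $d_\alpha=\alpha d_{\pi_1,p_{\pi_1}}+(1-\alpha)d_{\pi_0,p_{\pi_0}}$ and $\pi_\alpha(a|s)=d_\alpha(s,a)/d_\alpha(s)$, proves the strong-concavity-like inequality $V_{\lambda,\pi_\alpha}^{\pi_\alpha}\ge\alpha V_{\lambda,\pi_1}^{\pi_1}+(1-\alpha)V_{\lambda,\pi_0}^{\pi_0}+\tfrac{\mu\alpha(1-\alpha)}{2}\|\pi_1-\pi_0\|^2$ along that path (controlling $e_\alpha=d_{\pi_\alpha,p_{\pi_\alpha}}-d_\alpha$ by the recursion and the reward shift by a smoothness bound on $w(\alpha)$), and only then sends $\alpha\to 0^+$, converting the directional derivative of $V_{\lambda,\pi_\alpha}^{\pi_\alpha}$ at $\alpha=0$ --- which carries the per-state weights $d_1(s)/d_0(s)$ --- into $D^{-1}\max_{\pi\in\Pi}\langle\nabla_{\pi_0}V_{\lambda,\pi_0}^{\pi_0},\pi-\pi_0\rangle$ via a greedy-policy argument. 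The $D^{-1}$ is this distribution-mismatch ratio, not a Jacobian norm of the map $d\mapsto\pi$.

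Two concrete problems with your route. First, $\pi\mapsto V_{\lambda,\pi_0}^{\pi}$ is not concave in $\pi$ (the standard non-concavity of tabular policy optimization), so strong concavity in $d$ does not ``promote to strong concavity in $\pi$''; after the nonlinear reparametrization $d\mapsto\pi$ what survives is a gradient-dominance inequality whose first-order term is $D^{-1}\max_{\pi\in\Pi}\langle\nabla_\pi V_{\lambda,\pi_0}^{\pi}|_{\pi=\pi_0},\pi-\pi_0\rangle$, i.e., a maximum over $\Pi$ of the \emph{policy half} of the performative gradient only. Second, your environmental-shift piece contributes at first order the \emph{environment half} of the gradient contracted linearly with $\pi_1-\pi_0$. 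These two first-order objects do not recombine into the single quantity $D^{-1}\max_{\pi\in\Pi}\langle\nabla_{\pi_0}V_{\lambda,\pi_0}^{\pi_0},\pi-\pi_0\rangle$ of Eq.~(\ref{eq:ToOpt}): replacing $\max_\pi\langle g_{\rm pol},\pi-\pi_0\rangle$ by $\max_\pi\langle g_{\rm pol}+g_{\rm env},\pi-\pi_0\rangle+\max_\pi\langle -g_{\rm env},\pi-\pi_0\rangle$ leaves an additive error of order $(\epsilon_p+\epsilon_r)\,\mathrm{diam}(\Pi)$ that is constant in $\|\pi_1-\pi_0\|$ and cannot be absorbed into $-\tfrac{\mu}{2}\|\pi_1-\pi_0\|^2$. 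The interpolation avoids this because the full performative gradient appears all at once in the derivative at $\alpha=0$, with the same nonnegative weights $d_1(s)/d_0(s)$ multiplying both halves; see Appendix~\ref{sec:proof_thm:ToOpt}.
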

The gradient dominance property above generalizes that used in the conventional unregularized reinforcement learning (see Lemma 4 of \citep{agarwal2021theory}), which implies that stationary policy is close to a PO policy as shown in the corollary below. 
\begin{coro}\label{coro:stat2PO}
Under Assumptions \ref{assum:sensitive}-\ref{assum:dmin}, any $D\epsilon$-stationary policy is an $(\epsilon+|\mu||\mathcal{S}|)$-PO policy. Furthermore, this is 
also the desired $\epsilon$-PO policy if $\mu\ge 0$. The PO policy is unique if $\mu>0$.
\end{coro}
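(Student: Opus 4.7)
The plan is to instantiate the gradient-dominance inequality \eqref{eq:ToOpt} from Theorem \ref{thm:ToOpt} with the $D\epsilon$-stationary policy as the base point $\pi_0$, and let $\pi_1$ vary over the relevant comparison policies (a PO policy for the two bound statements, a second PO policy for uniqueness).

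First I would fix any $\pi^{\star} \in \arg\max_{\pi\in\Pi} V_{\lambda,\pi}^{\pi}$ and apply \eqref{eq:ToOpt} with $\pi_1 = \pi^{\star}$ and $\pi_0$ the given $D\epsilon$-stationary policy. The stationarity hypothesis immediately caps the inner-product term:
\[
D^{-1}\max_{\pi\in\Pi}\big\langle \nabla_{\pi_0}V_{\lambda,\pi_0}^{\pi_0},\pi-\pi_0\big\rangle \le D^{-1}\cdot D\epsilon = \epsilon.
\]
For the quadratic term, I would use the simplex structure: since $\pi^{\star}(\cdot|s)$ and $\pi_0(\cdot|s)$ both lie in the $|\mathcal{A}|$-simplex, the elementary bound $\sum_{a}(\pi^{\star}(a|s)-\pi_0(a|s))^2 \le \sum_{a}|\pi^{\star}(a|s)-\pi_0(a|s)| \le 2$ holds for every state, giving $\|\pi^{\star}-\pi_0\|^2 \le 2|\mathcal{S}|$. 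Therefore $-\tfrac{\mu}{2}\|\pi^{\star}-\pi_0\|^2 \le |\mu||\mathcal{S}|$ always, and $\le 0$ whenever $\mu\ge 0$. Plugging these two bounds into \eqref{eq:ToOpt} yields $V_{\lambda,\pi^{\star}}^{\pi^{\star}} - V_{\lambda,\pi_0}^{\pi_0} \le \epsilon + |\mu||\mathcal{S}|$ in general, and $\le \epsilon$ when $\mu\ge 0$, which are the first two claims.

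For uniqueness when $\mu>0$, I would take $\pi_0$ and $\pi_1$ to be any two PO policies. Every PO policy is in particular $0$-stationary (by the first-order optimality condition on the convex set $\Pi$), so the inner-product term in \eqref{eq:ToOpt} vanishes. The left-hand side $V_{\lambda,\pi_1}^{\pi_1} - V_{\lambda,\pi_0}^{\pi_0}$ is also zero since both attain the maximum. What remains is $0 \le -\tfrac{\mu}{2}\|\pi_1-\pi_0\|^2$, and with $\mu>0$ this forces $\pi_1 = \pi_0$.

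I do not anticipate any real obstacle beyond Theorem \ref{thm:ToOpt} itself; the corollary is a clean specialization. The only small ingredient outside of that theorem is the simplex diameter bound $\|\pi^{\star}-\pi_0\|^2\le 2|\mathcal{S}|$, and the standard observation that PO policies are stationary on the convex set $\Pi$.
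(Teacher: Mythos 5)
Your proposal is correct and follows essentially the same route as the paper's proof: both instantiate the gradient-dominance inequality \eqref{eq:ToOpt} at $\pi_1=\pi^{\star}$, $\pi_0$ the $D\epsilon$-stationary policy, bound the inner-product term by $\epsilon$ and the quadratic term via the diameter bound $\|\pi^{\star}-\pi_0\|^2\le 2|\mathcal{S}|$ (the paper's Lemma~\ref{lemma:pi_diameter}), and prove uniqueness by applying \eqref{eq:ToOpt} to two PO policies, which are $0$-stationary with equal values. No gaps.
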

\noindent\textbf{Remark: } Corollary \ref{coro:stat2PO} implies that a $D\epsilon$-stationary policy is always ($\epsilon+|\mu||\mathcal{S}|$)-close to the desired PO policy with $|\mu|$ proportional to the environmental sensitivity $\mathcal{O}(\epsilon_p+\epsilon_r+S_p+S_r)$. Furthermore, since $\mu=[\mathcal{O}(1)-\mathcal{O}(\epsilon_p+S_p)]\lambda-\mathcal{O}(\epsilon_p+\epsilon_r+S_p+S_r)$ by Eq. (\ref{eq:mu_main}), when $\mathcal{O}(\epsilon_p+S_p)<\mathcal{O}(1)$ and the regularizer strength dominates the environmental shift ($\lambda\ge\frac{\mathcal{O}(\epsilon_p+\epsilon_r+S_p+S_r)}{\mathcal{O}(1)-\mathcal{O}(\epsilon_p+S_p)}$), we have $\mu\ge 0$ so that the $D\epsilon$-stationary policy is also the desired $\epsilon$-PO policy. Note that similar regularizer dominance condition has also been used to guarantee convergence to a suboptimal PS policy \citep{mandal2023performative,rank2024performative,mandal2024performative,pollatos2025corruption}. 

\noindent\textbf{Intuition and Novelty for Proving Theorem \ref{thm:ToOpt}: } 
Define the following more refined value function
\begin{align}
&J_{\lambda}(\pi,\pi',p,r)\overset{\rm def}{=}\mathbb{E}_{\pi,p}\Big[\sum_{t=0}^{\infty}\gamma^t[r(s_t,a_t)\!-\!\lambda\log\pi'(a_t|s_t)]\Big|s_0\!\sim\!\rho\Big].\label{eq:J_txt}
\end{align}
To get the intuition, we will first prove the bound (\ref{eq:ToOpt}) in the special case with fixed $p_{\pi}\equiv p$ and $r_{\pi}\equiv r$. Then we allow non-constant $p_{\pi}$ to inspect the perturbation on the bound (\ref{eq:ToOpt}), and finally see the effect of non-constant $r_{\pi}$ on the bound (\ref{eq:ToOpt}). 


(Step 1): For conventional reinforcement learning with fixed  $p_{\pi}\equiv p$ and $r_{\pi}\equiv r$, denote $d_{\alpha}=\alpha d_{\pi_1,p}+(1-\alpha)d_{\pi_0,p}$ ($\alpha\in[0,1]$). Based on the Bellman equation (\ref{eq:Bellman_occup}), $d_{\alpha}=d_{\pi_{\alpha},p}$ is the occupancy measure of the policy $\pi_{\alpha}(a|s)=\frac{d_{\alpha}(s,a)}{d_{\alpha}(s)}$. Therefore, $V_{\lambda,\pi_{\alpha}}^{\pi_{\alpha}}$ can be rewritten as $J_{\lambda}(\pi_{\alpha},\pi_{\alpha},p,r)=\sum_{s,a}d_{\alpha}(s,a)[r(s,a)-\lambda\log\pi_{\alpha}(a|s)]$, which has the following strong concavity like property by Pinsker's inequality. 
\begin{align}
&J_{\lambda}(\pi_{\alpha},\pi_{\alpha},p,r)-\alpha J_{\lambda}(\pi_1,\pi_1,p,r)-(1-\alpha)J_{\lambda}(\pi_0,\pi_0,p,r)\nonumber\\
=&\frac{1}{1-\gamma}\sum_s\big[\alpha d_1(s){\rm KL}[\pi_1(\cdot|s)\|\pi_{\alpha}(a|s)]+(1-\alpha)d_0(s){\rm KL}[\pi_0(\cdot|s)\|\pi_{\alpha}(a|s)]\big]\nonumber\\
\ge& \frac{D\lambda\alpha(1-\alpha)}{2(1-\gamma)}\|\pi_1-\pi_0\|^2.\label{eq:J_RL}
\end{align} 

(Step 2): Consider a harder case with non-constant $p_{\pi}$ and constant reward $r_{\pi}\equiv r$. Similarly, denote $d_{\alpha}=\alpha d_{\pi_1,p_{\pi_1}}+(1-\alpha)d_{\pi_0,p_{\pi_0}}$ and $\pi_{\alpha}(a|s)=\frac{d_{\alpha}(s,a)}{d_{\alpha}(s)}$. The non-constant $p_{\pi}$ brings a major challenge that $d_{\alpha}=d_{\pi_{\alpha},p_{\pi_{\alpha}}}$ required by Step 1 above no longer holds. To solve this challenge, we need to bound the error term $e_{\alpha}(s)=d_{\pi_{\alpha},p_{\alpha}}(s)-d_{\alpha}(s)$ which we prove to satisfy the following novel recursion. 
\begin{align}
e_{\alpha}(s')=\gamma\sum_{s,a}\big[e_{\alpha}(s)\pi_{\alpha}(a|s)p_{\pi_\alpha}(s'|s,a)+h_{\alpha}(s,a,s')\big],\nonumber  
\end{align}
where $h_{\alpha}(s,a,s')=d_{\alpha}(s,a)p_{\pi_\alpha}(s'|s,a)-\alpha d_1(s,a)p_{\pi_1}(s'|s,a)-(1-\alpha)d_0(s,a)p_{\pi_0}(s'|s,a)$. Since $d_{\alpha}(s,a)p_{\pi_\alpha}(s'|s,a)$ is a Lipschitz smooth function of $\alpha$, we can upper bound  $|h_{\alpha}(s,a,s')|$ and substitute this bound to the recursion above, which yields the following novel error bound.
\begin{align}
\sum_{s}|e_{\alpha}(s)|\le \frac{3\gamma|\mathcal{S}|\alpha(1-\alpha)}{D(1-\gamma)^2}\|\pi_1-\pi_0\|^2\big[\epsilon_p\big(\sqrt{|\mathcal{A}|}+\gamma\epsilon_p\sqrt{|\mathcal{S}|}\big)+S_p(1-\gamma)\big],\nonumber
\end{align}
The bound above reflects the effect of non-constant $p_{\pi}$, which perturbs the bound (\ref{eq:J_RL}) into
\begin{align}
&J_{\lambda}(\pi_{\alpha},\pi_{\alpha},p_{\alpha},r)\!-\!\alpha J_{\lambda}(\pi_1,\pi_1,p_1,r)\!-\!(1\!-\!\alpha)J_{\lambda}(\pi_0,\pi_0,p_0,r)
\ge\frac{\alpha(1\!-\!\alpha)\mu_1}{2}\|\pi_1\!-\!\pi_0\|^2,\label{eq:J_ppi}
\end{align}
where 
$\mu_1\overset{\rm def}{=}\frac{D\lambda}{1-\gamma}-\frac{6\gamma|\mathcal{S}|(1+\lambda\log|\mathcal{A}|)}{D(1-\gamma)^3}\big[\epsilon_p\big(\sqrt{|\mathcal{A}|}+\gamma\epsilon_p\sqrt{|\mathcal{S}|}\big)+S_p(1-\gamma)\big]$ equals $\mu$ in Eq. (\ref{eq:mu_main}) when $\epsilon_r=S_r=0$. 

(Step 3): Now we consider performative reinforcement learning with non-constant $p_{\pi}$ and $r_{\pi}$. The policy $\pi_{\alpha}$ and its occupancy measure $d_{\alpha}$ are the same as in Case II above. Then the function $w(\alpha)=\alpha J_{\lambda}(\pi_1,\pi_1,p_1,r_{\alpha})+(1-\alpha) J_{\lambda}(\pi_0,\pi_0,p_0,r_{\alpha})$ can be proved $\mu_2\|\pi_1-\pi_0\|^2$-Lipschitz smooth with parameter $\mu_2=\mu-\mu_1\ge 0$. Using $r=r_{\alpha}$ in Eq. (\ref{eq:J_ppi}), we obtain the following strong concavity like property with $\mu=\mu_1-\mu_2$. 
\begin{align}
&V_{\lambda,\pi_{\alpha}}^{\pi_{\alpha}}-\alpha V_{\lambda,\pi_1}^{\pi_1}-(1-\alpha)V_{\lambda,\pi_0}^{\pi_0}\nonumber\\
=&J_{\lambda}(\pi_{\alpha},\pi_{\alpha},p_{\alpha},r_{\alpha})-\alpha J_{\lambda}(\pi_1,\pi_1,p_1,r_1)-(1-\alpha)J_{\lambda}(\pi_0,\pi_0,p_0,r_0)\nonumber\\
\ge&\frac{\alpha(1\!-\!\alpha)\mu_1}{2}\|\pi_1\!-\!\pi_0\|^2+w(\alpha)\!-\!\alpha w(1)\!-\!(1\!-\!\alpha) w(0)\ge \frac{\alpha(1-\alpha)\mu}{2}\|\pi_1-\pi_0\|^2.\nonumber
\end{align}
Finally, the dominance property (\ref{eq:ToOpt}) follows from the inequality above as $\alpha\to +0$. 


\subsection{Policy Lower Bound and Lipschitz Properties}\label{sec:CompactLip}
\textbf{Policy Lower Bound:} Based on Section \ref{sec:GradDom}, we can focus on achieving an $\epsilon$-stationary policy. A major challenge is the unbounded \textit{performative policy gradient} $\nabla_{\pi} V_{\lambda,\pi}^{\pi}$ on $\Pi$. Specifically, we will show that as $\pi(a|s)\to 0$ for any state $s$ and action $a$, $\|\nabla_{\pi} V_{\lambda,\pi}^{\pi}\|\to +\infty$. To tackle this challenge, we prove the following policy lower bound. 
\begin{thm}\label{thm:pi_ge}
If Assumptions \ref{assum:sensitive} and \ref{assum:dmin} hold, and $p_{\pi}$, $r_{\pi}$ are differentiable functions of $\pi$, then there exists a constant $\pi_{\min}>0$ (see its value in Eq. (\ref{eq:pi_min}) in Appendix \ref{sec:proof_thm:pi_ge}) the following policy lower bound holds for any $\pi\in\Pi$, $s\in\mathcal{S}$, $a\in\mathcal{A}$.
\begin{align}
\pi(a|s)\ge&\pi_{\min}\exp\Big[-\frac{2|\mathcal{A}|}{\lambda}(1-\gamma)\langle\nabla_{\pi} V_{\lambda,\pi}^{\pi},\pi'-\pi\rangle\Big],\label{eq:pi_ge2}
\end{align}
Here, the policy $\pi'$ is defined as follows depending on $\pi$: 
\begin{align}
\pi'(a|s)=\left\{\begin{aligned}
&\pi[a_{\min}(s)|s], \quad~a=a_{\max}(s)\\
&\pi[a_{\max}(s)|s], \quad~a=a_{\min}(s)\\
&\pi(a|s), \quad\quad\quad~~~{\rm Otherwise}
\end{aligned}\right.,\label{eq:pi_pie}
\end{align}
where $a_{\max}(s)\in{\arg\max}_a\pi(a|s)$ and $a_{\min}(s)\in{\arg\min}_a\pi(a|s)$. 
\end{thm}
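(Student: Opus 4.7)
The plan is to lower bound the directional derivative $\langle \nabla_\pi V_{\lambda,\pi}^\pi, \pi' - \pi\rangle$ by isolating a positive contribution from the negative-entropy regularizer that diverges like $\log(1/\pi(a_{\min}(s)|s))$, and then to invert and exponentiate the resulting inequality. Since $\pi(a|s) \ge \pi(a_{\min}(s)|s)$ for every $a$, it suffices to prove the bound only for $a = a_{\min}(s)$; the claim for all other $a$ follows automatically.

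First I would write $V_{\lambda,\pi}^\pi = F(\pi, p_\pi, r_\pi)$, where $F(\pi, p, r) = \frac{1}{1-\gamma}\sum_{s'} d_{\pi, p}(s') \bigl[\sum_{a'} \pi(a'|s') r(s', a') + \lambda H(\pi(\cdot|s'))\bigr]$ treats $\pi, p, r$ as independent variables, and apply the chain rule to get $\nabla_\pi V_{\lambda,\pi}^\pi = \nabla_1 F + (\nabla_\pi p_\pi)^\top \nabla_2 F + (\nabla_\pi r_\pi)^\top \nabla_3 F$. Taking the inner product with $v := \pi' - \pi$ and using that $v$ is supported only on $\{(s, a_{\max}(s)), (s, a_{\min}(s))\}$ with $v(a_{\max}(s)|s) = -v(a_{\min}(s)|s) = \pi(a_{\min}(s)|s) - \pi(a_{\max}(s)|s)$, I obtain the clean factorization $\langle \nabla V, v\rangle = (\pi(a_{\max}(s)|s) - \pi(a_{\min}(s)|s))\,\Delta$, where $\Delta := \partial V/\partial \pi(a_{\min}(s)|s) - \partial V/\partial \pi(a_{\max}(s)|s)$.

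The key step is to prove $\Delta \ge \frac{D\lambda}{1-\gamma}\log\frac{\pi(a_{\max}(s)|s)}{\pi(a_{\min}(s)|s)} - C$ for a constant $C$ depending only on $\epsilon_p, \epsilon_r, S_p, S_r, |\mathcal{S}|, |\mathcal{A}|, \lambda, \gamma, D$ and not on $\pi(a_{\min}(s)|s)$. The divergent log arises from the direct derivative of the $-\lambda \log \pi$ entropy term inside $\nabla_1 F$: writing $d_{\pi, p_\pi}(s, a) = d_{\pi, p_\pi}(s)\pi(a|s)$, the contribution of that term to $\partial V/\partial \pi(a|s)$ is $\frac{d_{\pi, p_\pi}(s)}{1-\gamma}[-\lambda \log \pi(a|s)]$ plus bounded $O(1)$ pieces, so its gap across $a_{\max}(s)$ and $a_{\min}(s)$ produces exactly the required log, and Assumption~\ref{assum:dmin} replaces $d_{\pi, p_\pi}(s)$ by $D$. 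The remaining pieces --- the soft-$Q$ gap (bounded because $r_\pi \in [0,1]$ and $H \le \log|\mathcal{A}|$), the state-visitation term $\sum_{s'} \langle \nabla_\pi d_{\pi, p_\pi}(s'), v\rangle u(s')$, and the sensitivity contributions $(\nabla_\pi p_\pi)^\top \nabla_2 F$ and $(\nabla_\pi r_\pi)^\top \nabla_3 F$ --- I would bound by differentiating the Bellman equation~(\ref{eq:Bellman_occup}) to obtain $\|\nabla_\pi d_{\pi, p}\|_1 = O((1-\gamma)^{-2})$ through a resolvent expansion of $(I - \gamma P_\pi)^{-1}$, and by invoking Assumption~\ref{assum:sensitive} which supplies $\|\nabla_\pi p_\pi\|_{\mathrm{op}} \le \epsilon_p$ and $\|\nabla_\pi r_\pi\|_{\mathrm{op}} \le \epsilon_r$.

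The argument then concludes by a two-case split on $\pi(a_{\min}(s)|s)$. If $\pi(a_{\min}(s)|s) \ge 1/(2|\mathcal{A}|)$, the bound holds trivially for any $\pi_{\min} \le 1/(2|\mathcal{A}|)$. Otherwise, the identity $\pi(a_{\max}(s)|s) \ge 1/|\mathcal{A}|$ forces $\pi(a_{\max}(s)|s) - \pi(a_{\min}(s)|s) \ge 1/(2|\mathcal{A}|)$, hence $\langle \nabla V, v\rangle \ge \frac{D\lambda}{2|\mathcal{A}|(1-\gamma)}\log\frac{1/|\mathcal{A}|}{\pi(a_{\min}(s)|s)} - C$; taking logs of both sides and exponentiating yields the stated inequality with all problem constants (including $D$) absorbed into a sufficiently small $\pi_{\min}$. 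The \emph{main obstacle} is the uniform control of $\nabla_\pi d_{\pi, p_\pi}$: differentiating the Bellman fixed point requires chasing derivatives both through $\pi$ and through $p_\pi$ and combining them via $(I - \gamma P_\pi)^{-1}$, while carefully tracking the dependence on $D$ so that the constant $C$ does not blow up; this is exactly where the ``two cases'' simplification highlighted in the excerpt collapses what would otherwise be an unwieldy inequality into the clean exponential form of~(\ref{eq:pi_ge2}).
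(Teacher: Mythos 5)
Your architecture matches the paper's: split the performative gradient via the chain rule into the fixed-environment part plus the $\nabla_\pi p_\pi$ and $\nabla_\pi r_\pi$ perturbations (the paper bounds the latter two by $\mathcal{O}(\epsilon_p+\epsilon_r)$ in Eq.~(\ref{eq:dJ_diff})), isolate the $\lambda\log\frac{\pi[a_{\max}(s)|s]}{\pi[a_{\min}(s)|s]}$ divergence coming from the entropy term, bound everything else by constants, and close with the same two-case split on $\pi[a_{\min}(s)|s]$ versus $\frac{1}{2}\pi[a_{\max}(s)|s]$. The one genuinely different sub-step is how you control the non-local part of $\nabla_1 F$: you propose differentiating the occupancy measure through the Bellman fixed point via $(I-\gamma P_\pi)^{-1}$ and bounding $\|\nabla_\pi d_{\pi,p}\|_1$, whereas the paper invokes the regularized policy-gradient identity (Eqs.~(\ref{eq:dJ5}) and (\ref{eq:pi_grad})), which packages the entire derivative as $\frac{d_{\pi,p}(s)[Q_\lambda(\cdot)-\lambda]}{1-\gamma}$ so that only the range of $Q_\lambda$ (Lemma~\ref{lemma:Jrange}) needs bounding. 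Your route is workable but strictly more labor; the closed form makes the ``bounded $O(1)$ pieces'' claim immediate.

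Three concrete points need repair. First, your ``clean factorization'' $\langle\nabla V,v\rangle=(\pi[a_{\max}(s)|s]-\pi[a_{\min}(s)|s])\Delta$ is false as stated: the $\pi'$ of Eq.~(\ref{eq:pi_pie}) swaps the max and min actions at \emph{every} state, so the inner product is a sum over all states (cf.\ Eq.~(\ref{eq:inprod_ge})), and to extract the bound for one state you must also control the contributions of the others (each is bounded below by a state-independent constant since its log term is nonnegative, so this costs only an additive constant, but it must be said). Second, Case I is not ``trivial for any $\pi_{\min}\le 1/(2|\mathcal{A}|)$'': the exponent in Eq.~(\ref{eq:pi_ge2}) can be positive when $\langle\nabla_\pi V_{\lambda,\pi}^\pi,\pi'-\pi\rangle<0$, so you additionally need the uniform lower bound on that inner product (which is exactly why $\pi_{\min}$ in Eq.~(\ref{eq:pi_min}) carries the $e^{-1/(\lambda(1-\gamma))}$ and $|\mathcal{A}|^{-1/(1-\gamma)}$ factors). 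Third, the factor $D$ in your $\frac{D\lambda}{1-\gamma}\log(\cdot)$ multiplies the inner product inside the exponential after inversion and therefore cannot be ``absorbed into $\pi_{\min}$''; it changes the coefficient $\frac{2|\mathcal{A}|}{\lambda}$ in Eq.~(\ref{eq:pi_ge2}) to $\frac{2|\mathcal{A}|}{D\lambda}$, so you must be explicit about whether $d_{\pi,p}(s)$ is lower- or upper-bounded at each occurrence.
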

\noindent\textbf{Implications of Theorem \ref{thm:pi_ge}: } First, as $\pi(a|s)\to 0$, we have $\langle\nabla_{\pi} V_{\lambda,\pi}^{\pi},\pi'\!-\!\pi\rangle\to +\infty$, so $\|\nabla_{\pi} V_{\lambda,\pi}^{\pi}\|\to +\infty$ as aforementioned. Second, any stationary policy $\pi$ satisfies $\langle\nabla_{\pi} V_{\lambda,\pi}^{\pi},\pi'-\pi\rangle\le 0$, so $\pi(a|s)\ge\pi_{\min}$. Therefore, we can search $\epsilon$-stationary policy on the convex and compact policy subspace $\Pi_{\Delta}\overset{\rm def}{=}\{\pi\in\Pi:\pi(a|s)\ge\Delta\}$ with lower bound $\Delta\in(0,\pi_{\min}]$. 

\noindent\textbf{Intuition and Novelty for Proving Theorem \ref{thm:pi_ge}: }  
At first, consider conventional reinforcement learning with fixed environmental dynamics $p_{\pi}\equiv p$ and $r_{\pi}\equiv r$. In this case, $\nabla_{\pi} V_{\lambda,\pi}^{\pi}$ has analytical form (see Eq. (\ref{eq:pi_grad})), so by direct computation we obtain the following inequality with constant $C=1+\frac{\gamma(1+\lambda\log|\mathcal{A}|)}{1-\gamma}$ (see Eq. (\ref{eq:inprod_ge}) for detail)
\begin{align}
\langle\nabla_{\pi} J_{\lambda}(\pi,\pi,p,r),\pi'-\pi\rangle\!\ge\!\frac{1}{1\!-\!\gamma}\!\max_s\!\Big\{\!\big(\pi[a_{\max}(s)|s]\!-\!\pi[a_{\min}(s)|s]\big)\!\Big[\lambda\!\log\!\frac{\pi[a_{\max}(s)|s]}{\pi[a_{\min}(s)|s]}\!-C\Big]\!\Big\}.\nonumber
\end{align}
To obtain a lower bound of $\pi[a_{\min}(s)|s]$, we simplify the inequality above by considering two cases, $\pi[a_{\min}(s)|s]\ge \frac{1}{2}\pi[a_{\max}(s)|s]\ge \frac{1}{2|\mathcal{A}|}$ and $\pi[a_{\min}(s)|s]<\frac{1}{2}\pi[a_{\max}(s)|s]$. In the second case, we replace $\pi[a_{\max}(s)|s]$ and $\pi[a_{\max}(s)|s]-\pi[a_{\min}(s)|s]$ above with their lower bounds $\frac{1}{|\mathcal{A}|}$ and $\frac{1}{2|\mathcal{A}|}$ respectively. Then combining the two cases proves the lower bound (\ref{eq:pi_ge2}) at the special case of $\epsilon_p=\epsilon_r=0$. Then we extend from conventional reinforcement learning to performative reinforcement learning which involves a gradient perturbation with magnitude of at most $\mathcal{O}(\epsilon_p+\epsilon_r)$ (see Eq. (\ref{eq:dJ_diff}) for detail) based on the chain rule and leads to the lower bound (\ref{eq:pi_ge2}) for any $\epsilon_p,\epsilon_r\ge0$. 

\noindent\textbf{Lipschitz Properties:} Theorem \ref{thm:pi_ge} inspires us to find an $\epsilon$-stationary policy in the policy subspace $\Pi_{\Delta}$, where the \textit{performative value function} $V_{\lambda,\pi}^{\pi}$ is Lipschitz continuous and Lipschitz smooth as follows.
\begin{thm}\label{thm:V_Lip}
Under Assumptions \ref{assum:sensitive}-\ref{assum:smooth_pr}, there exist constants $L_{\lambda}, \ell_{\lambda}>0$ (see the values in Eqs. (\ref{eq:Ltau}) and (\ref{eq:ell_tau}) in Appendix \ref{sec:proof_thm:V_Lip}) such that the following Lipschitz propreties hold for any $\Delta>0$ and $\pi, \pi'\in\Pi_{\Delta}$.
\begin{align}
|V_{\lambda,\pi'}^{\pi'}-V_{\lambda,\pi}^{\pi}|\le\frac{L_{\lambda}}{\Delta}\|\pi'-\pi\|,\quad\quad\|\nabla_{\pi'}V_{\lambda,\pi'}^{\pi'}-\nabla_{\pi}V_{\lambda,\pi}^{\pi}\|\le\frac{\ell_{\lambda}}{\Delta}\|\pi'-\pi\|.\label{eq:V_Lip}
\end{align}
\end{thm}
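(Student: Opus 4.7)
The plan is to convert both bounds in (\ref{eq:V_Lip}) into uniform bounds on the gradient and Jacobian of the gradient of $\pi\mapsto V_{\lambda,\pi}^{\pi}$ over the compact set $\Pi_\Delta$. Using the occupancy-measure representation $V_{\lambda,\pi}^{\pi} = \frac{1}{1-\gamma}\sum_{s,a} d_{\pi,p_\pi}(s,a)[r_\pi(s,a) - \lambda\log\pi(a|s)]$, I would first derive an explicit performative policy gradient $\nabla_\pi V_{\lambda,\pi}^{\pi}$ by differentiating through the three $\pi$-dependencies: the explicit $\pi$ in the policy/entropy factor, the implicit $\pi$ inside $p_\pi$, and the implicit $\pi$ inside $r_\pi$. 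The key preparatory lemma is that $\pi\mapsto d_{\pi,p_\pi}$ is Lipschitz on $\Pi$ with a constant independent of $\Delta$, and that its Lipschitz modulus is itself Lipschitz in $\pi$; both follow from the Bellman equation (\ref{eq:Bellman_occup}) by a recursion analogous to Step~2 in the proof of Theorem~\ref{thm:ToOpt}, exploiting the contraction factor $\gamma$, Assumptions~\ref{assum:sensitive}--\ref{assum:smooth_pr}, and the state-occupancy lower bound from Assumption~\ref{assum:dmin}.

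For Lipschitz continuity, I would bound $\|\nabla_\pi V_{\lambda,\pi}^{\pi}\|$ termwise on $\Pi_\Delta$. The ``standard'' policy-gradient part and the environmental-shift part coming from $\nabla_\pi p_\pi$ and $\nabla_\pi r_\pi$ are $O(1)$ by Assumptions~\ref{assum:sensitive}--\ref{assum:dmin}; the reward-times-$\nabla d$ part is also $O(1)$; the entropy contribution splits into a $\lambda\log\pi(a|s)$ factor bounded by $\lambda|\log\Delta|$ and a $\lambda d_{\pi,p_\pi}(s,a)/\pi(a|s)$ factor which simplifies to $\lambda d_{\pi,p_\pi}(s)\le\lambda$ via the identity $d(s,a)=d(s)\pi(a|s)$. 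Since $|\log\Delta|\le 1/\Delta$ for $\Delta\in(0,1]$, the total gradient norm is bounded by $L_\lambda/\Delta$ for a constant $L_\lambda$ depending only on $\lambda,\gamma,D,\epsilon_p,\epsilon_r,S_p,S_r,|\mathcal{S}|,|\mathcal{A}|$. Because $\Pi_\Delta$ is convex, integrating this bound along the segment $t\mapsto(1-t)\pi+t\pi'\in\Pi_\Delta$ yields the first inequality in (\ref{eq:V_Lip}).

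For Lipschitz smoothness I would differentiate the gradient expression once more and bound each piece by its $\pi$-Lipschitz modulus on $\Pi_\Delta$. Here Assumption~\ref{assum:smooth_pr} supplies smoothness of $p_\pi$ and $r_\pi$, and the preparatory bound on $d_{\pi,p_\pi}$ controls the occupancy contribution with a $\Delta$-free constant. The only piece that actually scales with $\Delta$ is the entropy Hessian $\frac{\partial^2}{\partial\pi(a|s)^2}[-\lambda d_{\pi,p_\pi}(s)\pi(a|s)\log\pi(a|s)] = -\lambda d_{\pi,p_\pi}(s)/\pi(a|s)$, whose magnitude is at most $\lambda/\Delta$ on $\Pi_\Delta$; this is exactly what produces the $1/\Delta$ (rather than $1/\Delta^2$) scaling, crucially leveraging the cancellation $d(s,a)/\pi(a|s)=d(s)$ inherent to the simplex structure $\sum_a\pi(a|s)=1$. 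Aggregating all contributions gives the second bound in (\ref{eq:V_Lip}). The most delicate step is obtaining the $\Delta$-free Lipschitz and Lipschitz-smoothness bounds on $\pi\mapsto d_{\pi,p_\pi}$, since this object depends on $\pi$ both directly and through $p_\pi$ and is only implicitly defined via the Bellman equation; this will be the main obstacle and requires a careful two-layer Bellman recursion combined with the occupancy lower bound $D$ from Assumption~\ref{assum:dmin}.
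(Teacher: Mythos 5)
Your strategy is sound and would deliver the theorem, but it is organized differently from the paper's proof. The paper never bounds $\|\nabla_\pi V_{\lambda,\pi}^{\pi}\|$ or a Hessian on $\Pi_\Delta$. Instead it writes $V_{\lambda,\pi}^{\pi}=J_\lambda(\pi,\pi,p_\pi,r_\pi)$, splits both the value difference and the gradient difference by a triangle inequality over the three arguments $(\pi,p,r)$, and invokes Lipschitz estimates for $J_\lambda$ (Lemma \ref{lemma:J_lip}) in which the policy-dependence is measured in the \emph{log-policy} metric $\max_s\|\log\pi'(\cdot|s)-\log\pi(\cdot|s)\|$ with $\Delta$-free constants; the $1/\Delta$ appears only at the very last step via $|\log u-\log v|\le|u-v|/\Delta$ on $[\Delta,1]$. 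Your route — uniform bounds on the gradient (via $|\log\Delta|\le 1/\Delta$ and the cancellation $d(s,a)/\pi(a|s)=d(s)$, which is exactly the paper's Eq.~(\ref{eq:dJ5})) and on the Lipschitz modulus of the gradient (via $1/\pi(a|s)\le 1/\Delta$), followed by integration along the segment in the convex set $\Pi_\Delta$ — is more direct and self-contained, and for the first inequality it actually yields the sharper $O(1+\lambda\log(1/\Delta))$ Lipschitz constant. The paper's formulation buys reusability: the same log-metric lemmas feed into Theorems \ref{thm:ToOpt} and \ref{thm:pi_ge}.

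Two caveats you should repair. First, Theorem \ref{thm:V_Lip} assumes only Assumptions \ref{assum:sensitive}--\ref{assum:smooth_pr}; you repeatedly invoke the occupancy lower bound $D$ from Assumption \ref{assum:dmin}, which is neither a hypothesis of the theorem nor actually needed — the Lipschitz properties of $\pi\mapsto d_{\pi,p_\pi}$ (Lemma \ref{lemma:occup_Lip} plus Assumption \ref{assum:sensitive}) and the upper bound $d_{\pi,p}(s)\le 1$ suffice, and the paper's proof nowhere uses $D$. Second, for the smoothness bound you cannot literally ``differentiate the gradient expression once more'': Assumption \ref{assum:smooth_pr} gives only Lipschitz continuity of $\nabla_\pi p_\pi$ and $\nabla_\pi r_\pi$, not existence of second derivatives, so the pieces involving the environment must be handled by bounding differences of gradients with moduli $S_p,S_r$ (as you in fact suggest when you say ``bound each piece by its $\pi$-Lipschitz modulus''); the genuine second derivative you may take is only of the entropy term. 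Relatedly, the entropy Hessian $-\lambda d(s)/\pi(a|s)$ is not quite ``the only piece that scales with $\Delta$'': the terms $\nabla_\pi d_{\pi,p_\pi}\cdot\log\pi$ and the $\log\pi$-dependence inside $\nabla_p J_\lambda$ (Eq.~(\ref{eq:dJ3})) contribute $O(\log(1/\Delta))$ and $O(1/\Delta)$ factors respectively — all are still $O(1/\Delta)$, so the conclusion survives, but the accounting must include them.
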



\section{Zeroth-Order Frank-Wolfe (0-FW) Algorithm}\label{sec:alg_all}
\subsection{Performative Policy Gradient Estimation}\label{sec:g_hat}
In Section \ref{sec:properties}, we have obtained important properties of the entropy regularized \textit{performative value function} $V_{\lambda,\pi}^{\pi}$ (defined by Eq. (\ref{eq:Vfunc})), which indicates that it suffices to find an $\epsilon$-stationary policy in the subspace $\Pi_{\Delta}$ for $\Delta\in(0,\pi_{\min}]$. To achieve this goal, an accurate estimation of the \textit{performative policy gradient} $\nabla_{\pi} V_{\lambda,\pi}^{\pi}$ is important {but also challenging, since the performative policy gradient involves the unknown gradients $\nabla_{\pi}p_{\pi}(s'|s,a)$ and $\nabla_{\pi}r_{\pi}(s,a)$.} 

Despite these challenges in estimating $\nabla_{\pi} V_{\lambda,\pi}^{\pi}$, note that $V_{\lambda,\pi}^{\pi}$ for any policy $\pi$ can be evaluated by policy evaluation in conventional reinforcement learning under fixed environment $p_{\pi}$ and $r_{\pi}$ (for fixed $\pi$). Furthermore, for any $\epsilon_V>0$ and $\eta\in(0,1)$, many existing policy evaluation algorithms such as temporal difference \citep{bhandari2018finite,li2023sharp,samsonov2023finite}, can obtain $\hat{V}_{\lambda,\pi}^{\pi}\approx V_{\lambda,\pi}^{\pi}$ with small error bound $|\hat{V}_{\lambda,\pi}^{\pi}-V_{\lambda,\pi}^{\pi}|\le\epsilon_V$ with probability at least $1-\eta$. 

As a result, we will consider a zeroth-order estimation of $\nabla_{\pi} V_{\lambda,\pi}^{\pi}$ using policy evaluation. However, this has another challenge that $V_{\lambda,\pi}^{\pi}$ is only well-defined on $\pi\in\Pi$, so we cannot directly apply the existing zeroth-order estimation methods \citep{agarwal2010optimal,shamir2017optimal,malik2020derivative} which require the objective function to be well-defined on a sphere. Fortunately, for any $\pi,\pi'\in\Pi$, the policy difference $\pi'-\pi$ lies in the following linear subspace of dimensionality $|\mathcal{S}|(|\mathcal{A}|-1)$.
\begin{align}
\mathcal{L}_0\overset{\rm def}{=}\Big\{u\in\mathbb{R}^{|\mathcal{S}||\mathcal{A}|}\!:\sum_{a}u(a|s)\!=\!0, \forall s\in\mathcal{S}\Big\}.\label{eq:L0}
\end{align}
Therefore, inspired by the popular two-point zeroth-order estimations, we estimate $\nabla_{\pi} V_{\lambda,\pi}^{\pi}$ as follows.
\begin{align}
\hat{g}_{\lambda,\delta}(\pi)\!=\!\frac{|\mathcal{S}|(|\mathcal{A}|\!-\!1)}{2N\delta}\sum_{i=1}^N\!\big(\hat{V}_{\lambda,\pi+\delta u_i}^{\pi+\delta u_i}\!-\!\hat{V}_{\lambda,\pi-\delta u_i}^{\pi-\delta u_i}\big)u_i, \label{eq:0ppg}
\end{align}
where $\{u_i\}_{i=1}^N$ are i.i.d. samples uniformly from $U_1\cap \mathcal{L}_0$ with $U_1\overset{\rm def}{=}\{u\in\mathbb{R}^{|\mathcal{S}||\mathcal{A}|}\!:\|u\|\!=\! 1\}$.
Our estimation (\ref{eq:0ppg}) above is more tricky than the existing two-point zeroth-order estimations \citep{agarwal2010optimal,shamir2017optimal,malik2020derivative} where $u_i$ is uniformly distributed on $U_1$. To elaborate, we replace their $U_1$ with $U_1\cap \mathcal{L}_0$, a unit sphere on the linear subspace $\mathcal{L}_0$, and further require $\pi\in\Pi_{\Delta}$ and $\delta<\Delta$, to guarantee that $\pi+\delta u_i, \pi-\delta u_i\in \Pi$ for any $u_i\in U_1\cap\mathcal{L}_0$ and thus the gradient estimation (\ref{eq:0ppg}) is well-defined (see Appendix \ref{sec:proof_prop_grad_err} for the proof). 
Moreover, we use the following three steps to obtain $u_i$ uniformly from $U_1\cap \mathcal{L}_0$: (1) Obtain $v_i$ uniformly from $U_1$; (2) Project $v_i$ onto $\mathcal{L}_0$ as Eq. (\ref{eq:proj_vu}) below; (3) Normalize this projection by $u_i={\rm proj}_{\mathcal{L}_0}(v_i)/\|{\rm proj}_{\mathcal{L}_0}(v_i)\|$.
\begin{align}
{\rm proj}_{\mathcal{L}_0}(v_i)(a|s)&=v_i(a|s)-\frac{1}{|\mathcal{A}|}\sum_{a'}v_i(a'|s).\label{eq:proj_vu}
\end{align}
The gradient estimation (\ref{eq:0ppg}) has the following provable error bound. 
\begin{proposition}\label{prop:grad_err}
For any $\Delta>\delta>0$, $\eta\in(0,1)$ and $\pi\in\Pi_{\Delta}$, the stochastic gradient (\ref{eq:0ppg}) is well-defined (i.e., $\pi+\delta u_i$ and $\pi-\delta u_i$ therein are valid policies defined by $\Pi$) and approximates the projected performative policy gradient ${\rm proj}_{\mathcal{L}_0}(\nabla_{\pi} V_{\lambda,\pi}^{\pi})$ with the following error bound (see its full expression in Eq. (\ref{eq:conclude_gerr}) in Appendix \ref{sec:proof_prop_grad_err}), with probability at least $1-\eta$. 
\begin{align}
\|\hat{g}_{\lambda,\delta}(\pi)-{\rm proj}_{\mathcal{L}_0}(\nabla_{\pi} V_{\lambda,\pi}^{\pi})\| \le \mathcal{O}\big(\frac{\epsilon_V}{\delta}+\frac{\log(N/\eta)}{\sqrt{N}}+\delta\big). \label{eq:grad_err}
\end{align}
\end{proposition}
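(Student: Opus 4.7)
\textbf{Proof plan for Proposition \ref{prop:grad_err}.} The plan is to establish well-definedness first and then decompose the estimation error into three standard pieces (policy-evaluation error, stochastic concentration around the mean, and two-point bias), and bound each piece separately.

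\emph{Well-definedness.} For $\pi\in\Pi_\Delta$ and $u_i\in U_1\cap\mathcal{L}_0$ with $\|u_i\|=1$, I would verify that $\pi\pm\delta u_i\in\Pi$ by checking (i) the simplex constraint $\sum_a[\pi\pm\delta u_i](a|s)=1$, which follows directly from $u_i\in\mathcal{L}_0$, and (ii) the entrywise constraint $[\pi\pm\delta u_i](a|s)\in[0,1]$. The lower bound follows from $\pi(a|s)\ge\Delta>\delta\ge\delta|u_i(a|s)|$, while the upper bound follows from $\pi(a|s)\le 1-(|\mathcal{A}|-1)\Delta$ combined with $\delta<\Delta\le(|\mathcal{A}|-1)\Delta$. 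In particular $\pi\pm\delta u_i\in\Pi_{\Delta-\delta}$, which will let us invoke Theorem \ref{thm:V_Lip} on that set.

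\emph{Error decomposition.} Let $g_{\lambda,\delta}(\pi)\overset{\rm def}{=}\frac{|\mathcal{S}|(|\mathcal{A}|-1)}{2N\delta}\sum_{i=1}^N\bigl(V_{\lambda,\pi+\delta u_i}^{\pi+\delta u_i}-V_{\lambda,\pi-\delta u_i}^{\pi-\delta u_i}\bigr)u_i$ denote the idealized estimator that uses exact values instead of $\hat{V}$. Then by the triangle inequality
\begin{align*}
\|\hat{g}_{\lambda,\delta}(\pi)-{\rm proj}_{\mathcal{L}_0}(\nabla_\pi V_{\lambda,\pi}^\pi)\|\le \underbrace{\|\hat{g}_{\lambda,\delta}(\pi)-g_{\lambda,\delta}(\pi)\|}_{\text{(a) eval error}}+\underbrace{\|g_{\lambda,\delta}(\pi)-\mathbb{E}g_{\lambda,\delta}(\pi)\|}_{\text{(b) concentration}}+\underbrace{\|\mathbb{E}g_{\lambda,\delta}(\pi)-{\rm proj}_{\mathcal{L}_0}(\nabla_\pi V_{\lambda,\pi}^\pi)\|}_{\text{(c) bias}}.
\end{align*}
For (a), a union bound applied to the $2N$ policy evaluations with per-call failure probability $\eta/(4N)$ yields $|\hat{V}^\pm_i-V^\pm_i|\le \epsilon_V$ simultaneously with probability at least $1-\eta/2$; term by term this gives $\|\hat{g}_{\lambda,\delta}-g_{\lambda,\delta}\|\le |\mathcal{S}|(|\mathcal{A}|-1)\epsilon_V/\delta$, producing the $\epsilon_V/\delta$ contribution.

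\emph{Concentration (b) and bias (c).} To handle these I would first pass to the subspace $\mathcal{L}_0$ via an orthogonal isometry $Q:\mathbb{R}^{|\mathcal{S}|(|\mathcal{A}|-1)}\to\mathcal{L}_0$ so that $u_i=Q\tilde u_i$ with $\tilde u_i$ uniform on the standard unit sphere, and define $\widetilde V(\tilde\pi)\overset{\rm def}{=}V_{\lambda,\pi+Q\tilde\pi}^{\pi+Q\tilde\pi}$, which inherits Lipschitz continuity and smoothness with constants $L_\lambda/(\Delta-\delta)$ and $\ell_\lambda/(\Delta-\delta)$ on a ball around the origin thanks to Theorem \ref{thm:V_Lip}. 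For (b), each summand $\xi_i=\frac{|\mathcal{S}|(|\mathcal{A}|-1)}{2\delta}(V^+_i-V^-_i)u_i$ has $\|\xi_i\|\le|\mathcal{S}|(|\mathcal{A}|-1)L_\lambda/(\Delta-\delta)$ by the Lipschitz bound and $\|u_i\|=1$, so a vector Hoeffding/Bernstein inequality yields $\|g_{\lambda,\delta}-\mathbb{E}g_{\lambda,\delta}\|=\mathcal{O}(\log(N/\eta)/\sqrt{N})$ with probability $\ge 1-\eta/2$. For (c), the Lipschitz smoothness gives the symmetric Taylor remainder bound $|V^+_i-V^-_i-2\delta\langle\nabla_\pi V_{\lambda,\pi}^\pi,u_i\rangle|\le \ell_\lambda\delta^2/(\Delta-\delta)$, so
\begin{align*}
\mathbb{E}g_{\lambda,\delta}(\pi)=|\mathcal{S}|(|\mathcal{A}|-1)\,\mathbb{E}\bigl[\langle\nabla_\pi V_{\lambda,\pi}^\pi,u_1\rangle u_1\bigr]+\mathcal{O}(\delta).
\end{align*}
The identity $\mathbb{E}[u_1u_1^\top]=\frac{1}{|\mathcal{S}|(|\mathcal{A}|-1)}\,{\rm proj}_{\mathcal{L}_0}$ (which is exactly why the prefactor in \eqref{eq:0ppg} is chosen that way) then gives $\mathbb{E}g_{\lambda,\delta}(\pi)={\rm proj}_{\mathcal{L}_0}(\nabla_\pi V_{\lambda,\pi}^\pi)+\mathcal{O}(\delta)$. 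Combining (a)--(c) through a final union bound yields \eqref{eq:grad_err}.

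\emph{Main obstacle.} The delicate part is (c): the standard two-point-gradient bias analysis assumes the function is defined on all of $\mathbb{R}^d$ and sampling directions are uniform on the full sphere, whereas here the domain is a face of a simplex and directions are confined to $\mathcal{L}_0$. I expect the crux of the proof to be justifying the change of variables to $\mathcal{L}_0$ cleanly (so that $\mathbb{E}[u_1u_1^\top]$ evaluates to the projector onto $\mathcal{L}_0$ with the correct normalization) and keeping the Lipschitz constants in Theorem \ref{thm:V_Lip} uniform over the perturbed policies $\pi\pm\delta u_i$, which requires the restriction $\delta<\Delta$ to control the denominator $\Delta-\delta$.
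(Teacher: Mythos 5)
Your proposal is correct and follows essentially the same route as the paper: the well-definedness check, the passage to $\mathcal{L}_0$ via an orthogonal isometry so that Theorem \ref{thm:V_Lip} supplies uniform Lipschitz/smoothness constants $L_\lambda/(\Delta-\delta)$, $\ell_\lambda/(\Delta-\delta)$ on the perturbed policies, the three-way split into evaluation error, concentration, and bias, and the vector Bernstein bound with a union bound over the $2N$ policy evaluations all match the paper's argument (note that your $\mathbb{E}g_{\lambda,\delta}$ is exactly the paper's $\nabla f_\delta$, so even the decomposition points are identical). The one sub-step you handle differently is the bias term (c): the paper invokes the smoothing identity $\mathbb{E}[g_{\delta,i}]=\nabla f_\delta$ with $f_\delta(x)=\mathbb{E}_{v\sim\mathrm{Unif}(\mathbb{B}_d)}f(x+\delta v)$ and bounds $\|\nabla f_\delta-\nabla f\|\le\delta\ell_f$, whereas you expand symmetrically to second order and use $\mathbb{E}[u_1u_1^{\top}]=\tfrac{1}{|\mathcal{S}|(|\mathcal{A}|-1)}\,\mathrm{proj}_{\mathcal{L}_0}$. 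Your version is more elementary and self-contained, but the Taylor remainder $\tfrac{d}{2\delta}\cdot\ell_f\delta^2$ carries an extra factor of $d=|\mathcal{S}|(|\mathcal{A}|-1)$ relative to the paper's $\delta\ell_\lambda/(\Delta-\delta)$; this is absorbed by the $\mathcal{O}(\delta)$ in the stated bound, so the proposition still follows, though it would not reproduce the sharp constant in Eq.~(\ref{eq:conclude_gerr}).
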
  
\noindent\textbf{Remark: } Proposition \ref{prop:grad_err} above aims to approximate ${\rm proj}_{\mathcal{L}_0}(\nabla_{\pi} V_{\lambda,\pi}^{\pi})$ instead of $\nabla_{\pi} V_{\lambda,\pi}^{\pi}$. This is sufficient to find an $\epsilon$-stationary policy, because for any policies $\pi,\pi'$, the stationarity measure only involves $\langle \nabla_{\pi} V_{\lambda,\pi}^{\pi}, \pi'\!-\!\pi\rangle\!=\!\langle {\rm proj}_{\mathcal{L}_0}(\nabla_{\pi} V_{\lambda,\pi}^{\pi}), \pi'\!-\!\pi\rangle$ as $\pi'\!-\!\pi\in\mathcal{L}_0$. Therefore, we only care about ${\rm proj}_{\mathcal{L}_0}(\nabla_{\pi} V_{\lambda,\pi}^{\pi})$. The estimation error (\ref{eq:grad_err}) above can be arbitrarily small with sufficiently large batchsize $N$ (to reduce the variance), small $\delta$ (to reduce the bias), and policy evaluation error $\epsilon_V\ll\delta$.  

\noindent\textbf{Intuition and Novelty for Proving Proposition \ref{prop:grad_err}: } Unlike existing zeroth-order estimations on the whole Euclidean space, our estimation (\ref{eq:0ppg}) is made on the policy space $\Pi$, which lies in the linear manifold $\mathcal{L}_0+|\mathcal{A}|^{-1}\subset\mathbb{R}^{|\mathcal{S}||\mathcal{A}|}$. The key to our proof is to find an orthogonal transformation $T: \mathbb{R}^{|\mathcal{S}|(|\mathcal{A}|-1)}\to \mathcal{L}_0$, so that the goal is simplified to analyze the gradient estimation of $f_{\lambda}(x)\overset{\rm def}{=}V_{\lambda,T(x)+|\mathcal{A}|^{-1}}^{T(x)+|\mathcal{A}|^{-1}}$ on any $x\in \mathbb{R}^{|\mathcal{S}|(|\mathcal{A}|-1)}$. 
\subsection{Zeroth-Order Frank-Wolfe (0-FW) Algorithm}\label{sec:0PPG}
With the estimated gradient $\hat{g}_{\lambda,\delta}(\pi_t)$ defined by Eq. (\ref{eq:0ppg}), we consider the following Frank-Wolfe algorithm to find an $\epsilon$-stationary policy. 
\begin{align}
\Tilde{\pi}_t=&{\arg\max}_{\pi\in\Pi_{\Delta}}\langle \pi,\hat{g}_{\lambda,\delta}(\pi_t)\rangle,\label{eq:pi_wolfe}\\
\pi_{t+1}=&\pi_t+\beta(\Tilde{\pi}_t-\pi_t).\label{eq:pi_update}
\end{align}
\vspace{-0.03\textwidth}
\begin{lemma}\label{lemma:wolfe}
The step (\ref{eq:pi_wolfe}) has the analytical solution below.
\begin{align}
\Tilde{\pi}_t(a|s)=\left\{
\begin{aligned}
&\Delta; a\ne\Tilde{a}_t(s)\\
&1-\Delta(|\mathcal{A}|-1); a=\Tilde{a}_t(s)
\end{aligned}
\right.,\label{eq:pi_wolfe_sol}
\end{align}
where $\Tilde{a}_t(s)\in {\arg\max}_{a}\hat{g}_{\lambda,\delta}(\pi_t)(a|s)$. 
\end{lemma}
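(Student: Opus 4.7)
The plan is to exploit the linearity of the objective in the Frank-Wolfe subproblem together with the per-state separability of the constraint set $\Pi_{\Delta}$. First I would write
\[
\langle \pi, \hat{g}_{\lambda,\delta}(\pi_t)\rangle = \sum_{s\in\mathcal{S}} \sum_{a\in\mathcal{A}} \pi(a|s)\, \hat{g}_{\lambda,\delta}(\pi_t)(a|s),
\]
and observe that the defining constraints $\sum_a \pi(a|s) = 1$ and $\pi(a|s) \geq \Delta$ couple actions only within the same state. Hence the subproblem decomposes into $|\mathcal{S}|$ independent linear programs, one per state $s$, each over the truncated simplex $T_\Delta := \{x \in \mathbb{R}^{|\mathcal{A}|} : \sum_a x_a = 1,\ x_a \geq \Delta\}$. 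I would note that $T_\Delta$ is nonempty because $\Delta \leq \pi_{\min} \leq 1/|\mathcal{A}|$ (every probability distribution on $|\mathcal{A}|$ actions has at least one entry $\leq 1/|\mathcal{A}|$, so the lower bound from Theorem~\ref{thm:pi_ge} cannot exceed $1/|\mathcal{A}|$), which also guarantees $1 - \Delta(|\mathcal{A}|-1) \geq \Delta$ and hence feasibility of the candidate in Eq.~(\ref{eq:pi_wolfe_sol}).

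For each state $s$, writing $g_a := \hat{g}_{\lambda,\delta}(\pi_t)(a|s)$ and $\tilde{a} := \tilde{a}_t(s) \in \argmax_a g_a$, I would verify optimality of $\tilde{\pi}_t(\cdot|s)$ by a direct pairwise comparison against an arbitrary feasible $\pi(\cdot|s) \in T_\Delta$. From $\sum_a \pi(a|s) = \sum_a \tilde{\pi}_t(a|s) = 1$ I get the identity
\[
\pi(\tilde{a}|s) - \tilde{\pi}_t(\tilde{a}|s) = \sum_{a \neq \tilde{a}}\big(\tilde{\pi}_t(a|s) - \pi(a|s)\big) = \sum_{a \neq \tilde{a}}(\Delta - \pi(a|s)),
\]
and substituting it into the objective gap yields
\[
\sum_a \tilde{\pi}_t(a|s)\, g_a - \sum_a \pi(a|s)\, g_a = \sum_{a \neq \tilde{a}}(\Delta - \pi(a|s))(g_a - g_{\tilde{a}}) \geq 0,
\]
since feasibility gives $\Delta - \pi(a|s) \leq 0$ and the definition of $\tilde{a}$ gives $g_a - g_{\tilde{a}} \leq 0$. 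Summing over $s\in\mathcal{S}$ then proves the claim.

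There is no genuine obstacle: the lemma is essentially the observation that a linear functional on the polytope $T_\Delta$ attains its maximum at a vertex, and the vertices of $T_\Delta$ are precisely the $|\mathcal{A}|$ points obtained by saturating $|\mathcal{A}|-1$ of the lower-bound constraints, which are exactly the candidates listed in Eq.~(\ref{eq:pi_wolfe_sol}). Two minor bookkeeping remarks are worth recording in the write-up: ties in the argmax can be broken arbitrarily (any maximizer $\tilde{a}_t(s)$ produces the same optimal value), and an equivalent derivation via KKT conditions with one equality multiplier per state and $|\mathcal{A}|$ nonnegative multipliers per state would work, but the elementary pairwise comparison above is both shorter and more transparent.
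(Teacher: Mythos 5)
Your proof is correct and follows essentially the same route as the paper's: both verify optimality by comparing $\Tilde{\pi}_t$ against an arbitrary feasible $\pi\in\Pi_{\Delta}$ state by state, using exactly the two facts $\pi(a|s)\ge\Delta$ and $\hat{g}_{\lambda,\delta}(\pi_t)(a|s)\le\hat{g}_{\lambda,\delta}(\pi_t)[\Tilde{a}_t(s)|s]$; your factored form $\sum_{a\ne\tilde{a}}(\Delta-\pi(a|s))(g_a-g_{\tilde{a}})\ge 0$ is just an algebraic rearrangement of the paper's telescoping bound. Your added check that $1-\Delta(|\mathcal{A}|-1)\ge\Delta$ (so the candidate is itself feasible) is a small point the paper leaves implicit.
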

See the proof of Lemma \ref{lemma:wolfe} in Section \ref{sec:FW}. Then combining the \textit{performative policy gradient} estimation (see Section \ref{sec:entropy_reg}) with the Frank-Wolfe algorithm, we propose our zeroth-order Frank-Wolfe (0-FW) algorithm (see Algorithm \ref{alg:0ppg}). 
\begin{wrapfigure}{R}{0.62\textwidth}
\begin{minipage}{0.62\textwidth}
\vspace{-0.03\textwidth}
\begin{algorithm}[H] 
\caption{Zeroth-order Frank-Wolfe (0-FW) Algorithm}
    \begin{algorithmic}[1]\label{alg:0ppg}
    \STATE \textbf{Inputs:} $T$, $N$, $\Delta>\delta>0$, $\epsilon_V\ge 0$, $\beta>0$.
    \STATE \textbf{Initialize:} policy $\pi_0\in\Pi_{\Delta}$.\\
    \FOR{Iterations $t=0,1,\ldots,T-1$}
    {
        \STATE Obtain i.i.d. vectors $\{v_i\}_{i=1}^N$ uniformly from the unit sphere $U_1\!\overset{\rm def}{=}\!\{\!u\!\in\!\mathbb{R}^{|\mathcal{S}||\mathcal{A}|}\!:\|u\|\!=\!1\!\}$. 
        \STATE Obtain $\{{\rm proj}_{\mathcal{L}_0}(v_i)\}_{i=1}^N$ from Eq. (\ref{eq:proj_vu}).
        \STATE Obtain $\{u_i\}_{i=1}^N$ where $u_i={\rm proj}_{\mathcal{L}_0}(v_i)/\|{\rm proj}_{\mathcal{L}_0}(v_i)\|$.
        \STATE Obtain stochastic policy evaluation  $\hat{V}_{\lambda,\pi}^{\pi}\approx V_{\lambda,\pi}^{\pi}$ which satisfies $|\hat{V}_{\lambda,\pi}^{\pi}-V_{\lambda,\pi}^{\pi}|\le\epsilon_V$ for $\pi\in\{\pi_t\pm\delta u_i\}_{i=1}^N$. 
        \STATE Obtain stochastic performative policy gradient estimation $\hat{g}_{\lambda,\delta}(\pi_t)$ using Eq. (\ref{eq:0ppg}).
        \STATE Obtain $\Tilde{\pi}_t$ by Eq. (\ref{eq:pi_wolfe_sol}). 
        \STATE Update $\pi_{t+1}$ by Eq. (\ref{eq:pi_update}).
    }\ENDFOR
    \STATE {\bf Output:} $\pi_{\widetilde{T}}$ where $\widetilde{T}\!\in\!\mathop{\arg\min}_{0\le t\le T\!-\!1}\!\langle\hat{g}_{\lambda,\delta}(\pi_t),\!\Tilde{\pi}_t\!-\!\pi_t\rangle$.
    \end{algorithmic}
\end{algorithm}
\end{minipage}
\vspace{-0.01\textwidth}
\end{wrapfigure}
We obtain the following convergence result of Algorithm \ref{alg:0ppg} in Theorem \ref{thm:0ppg_rate}, the main theoretical result of this work, as follows. 
\begin{thm}\label{thm:0ppg_rate}
Suppose Assumptions \ref{assum:sensitive}-\ref{assum:dmin} hold. For any $\eta\in(0,1)$ and precision $0<\epsilon$$\le \min\big[24\sqrt{2|\mathcal{S}|}\frac{\ell_{\lambda}}{D}, $\\ 
$\frac{2\lambda}{5|\mathcal{A}|D^2(1-\gamma)},\frac{288L_{\lambda}|\mathcal{S}|^{1.5}|\mathcal{A}|}{D\pi_{\min}}\big]$, select the following hyperparameters for Algorithm \ref{alg:0ppg}: $\Delta=\frac{\pi_{\min}}{3}$, $\beta=\frac{D\pi_{\min}\epsilon}{36\ell_{\lambda}|\mathcal{S}|}$, $\delta=\mathcal{O}(\epsilon)$, $\epsilon_V=\mathcal{O}(\epsilon^2)$, $N=\mathcal{O}[\epsilon^{-2}\log(\eta^{-1}\epsilon^{-1})]$, and the number of iterations $T=\mathcal{O}(\epsilon^{-2})$ (see Eqs. (\ref{eq:Delta})-(\ref{eq:N}) in Appendix \ref{sec:proof_0ppg_rate} for detailed expression of these hyperparameters). Then with probability at least $1-\eta$, the output policy $\Tilde{\pi}_{\Tilde{T}}$ of Algorithm \ref{alg:0ppg} is a $D\epsilon$-stationary policy. Furthermore, if $\mu\ge 0$, $\Tilde{\pi}_{\Tilde{T}}$ is also an $\epsilon$-PO policy. The total number of policy evaluations is $2NT=\mathcal{O}[\epsilon^{-4}\log(\eta^{-1}\epsilon^{-1})]$. \end{thm}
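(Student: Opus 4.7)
The plan is to carry out a Frank--Wolfe convergence analysis with inexact gradient on the compact set $\Pi_\Delta$, then lift the resulting $\Pi_\Delta$-stationarity to $\Pi$-stationarity and invoke Corollary~\ref{coro:stat2PO}. First I would establish a one-step improvement inequality: every iterate stays in $\Pi_\Delta$ because $\tilde{\pi}_t\in\Pi_\Delta$ by Lemma~\ref{lemma:wolfe} and $\pi_{t+1}=(1-\beta)\pi_t+\beta\tilde{\pi}_t$ is a convex combination; Theorem~\ref{thm:V_Lip} then gives $(\ell_\lambda/\Delta)$-Lipschitz smoothness along $[\pi_t,\pi_{t+1}]$, yielding the standard quadratic lower bound
\begin{equation*}
V_{\lambda,\pi_{t+1}}^{\pi_{t+1}}\ge V_{\lambda,\pi_t}^{\pi_t}+\beta\langle \nabla_\pi V_{\lambda,\pi_t}^{\pi_t},\tilde{\pi}_t-\pi_t\rangle-\frac{\ell_\lambda \beta^2}{2\Delta}\|\tilde{\pi}_t-\pi_t\|^2.
\end{equation*}
I would then swap $\nabla_\pi V_{\lambda,\pi_t}^{\pi_t}$ for $\hat g_{\lambda,\delta}(\pi_t)$ using Proposition~\ref{prop:grad_err}; since $\tilde{\pi}_t-\pi_t\in\mathcal{L}_0$ the projection in that proposition is transparent, and the resulting Cauchy--Schwarz penalty is controlled by $\|\tilde{\pi}_t-\pi_t\|\le\sqrt{2|\mathcal{S}|}$.

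Next I would telescope across $t=0,\ldots,T-1$. Because $\pi_t(a|s)\ge\Delta$ the regularized value is bounded, $V_{\lambda,\pi}^\pi\in[0,(1+\lambda|\log\Delta|)/(1-\gamma)]$, so the telescoping sum is $\mathcal{O}(1)$; a union bound on Proposition~\ref{prop:grad_err} across the $T$ iterations (failure budget $\eta/T$ each) gives a uniform gradient-error bound with overall probability $1-\eta$. Rearranging and minimizing over $t$ yields
\begin{equation*}
\min_{0\le t\le T-1}\langle \hat g_{\lambda,\delta}(\pi_t),\tilde{\pi}_t-\pi_t\rangle\le \frac{\mathcal{O}(1)}{\beta T}+\frac{\ell_\lambda \beta|\mathcal{S}|}{\Delta}+\sqrt{2|\mathcal{S}|}\cdot\mathcal{O}\!\left(\frac{\epsilon_V}{\delta}+\frac{\log(NT/\eta)}{\sqrt{N}}+\delta\right).
\end{equation*}
Plugging in $\beta,\delta=\Theta(\epsilon)$, $\epsilon_V=\Theta(\epsilon^2)$, $N=\Theta(\epsilon^{-2}\log(\eta^{-1}\epsilon^{-1}))$, $T=\Theta(\epsilon^{-2})$, and $\Delta=\pi_{\min}/3$ balances the three terms at $\mathcal{O}(\epsilon)$, so by Lemma~\ref{lemma:wolfe} the output $\pi_{\tilde T}$ satisfies $\max_{\pi\in\Pi_\Delta}\langle\nabla_\pi V_{\lambda,\pi_{\tilde T}}^{\pi_{\tilde T}},\pi-\pi_{\tilde T}\rangle=\mathcal{O}(\epsilon)$.

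The remaining step, which is the hard part, is lifting $\Pi_\Delta$-stationarity to $\Pi$-stationarity via policy averaging. For any $\pi^\star\in\Pi$ the mixed policy $\bar{\pi}^\star=(1-\Delta|\mathcal{A}|)\pi^\star+\Delta|\mathcal{A}|\pi_U$ with uniform $\pi_U(a|s)=1/|\mathcal{A}|$ lies in $\Pi_\Delta$, and solving for $\pi^\star-\pi_{\tilde T}$ gives
\begin{equation*}
\langle \nabla_\pi V_{\lambda,\pi_{\tilde T}}^{\pi_{\tilde T}},\pi^\star-\pi_{\tilde T}\rangle=\frac{\langle \nabla V,\bar{\pi}^\star-\pi_{\tilde T}\rangle-\Delta|\mathcal{A}|\langle \nabla V,\pi_U-\pi_{\tilde T}\rangle}{1-\Delta|\mathcal{A}|}.
\end{equation*}
The first numerator term is bounded by the $\Pi_\Delta$-stationarity just established; the naive bound $\|\nabla V\|\le L_\lambda/\Delta$ on the second gives a problem-dependent constant $\mathcal{O}(|\mathcal{A}|\sqrt{|\mathcal{S}|}L_\lambda)$ that does not vanish with $\epsilon$, and this is the main obstacle. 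To kill it I would feed the near-stationarity back into Theorem~\ref{thm:pi_ge} to obtain a refined policy lower bound $\pi_{\tilde T}(a|s)\ge\pi_{\min}(1-\mathcal{O}(\epsilon))$, which makes $\pi_{\tilde T}$ strictly interior to $\Pi_\Delta$ with margin $\ge 2\pi_{\min}/3-\mathcal{O}(\epsilon)$; testing the $\Pi_\Delta$-Frank--Wolfe gap along arbitrary unit tangent directions $v\in\mathcal{L}_0$ (each admissible because $\pi_{\tilde T}\pm\tau v\in\Pi_\Delta$ for $\tau$ up to the margin) then upgrades the gap into the projected-gradient bound $\|\mathrm{proj}_{\mathcal{L}_0}\nabla V_{\lambda,\pi_{\tilde T}}^{\pi_{\tilde T}}\|=\mathcal{O}(\epsilon)$, and consequently the second numerator term above is $\mathcal{O}(\epsilon)$, so $\pi_{\tilde T}$ is $D\epsilon$-stationary on all of $\Pi$. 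The three upper bounds on $\epsilon$ in the theorem statement are precisely what make this self-consistent coupling close. Finally, Corollary~\ref{coro:stat2PO} upgrades $D\epsilon$-stationarity to $\epsilon$-PO when $\mu\ge 0$, and counting $2N$ policy evaluations per iteration across $T$ iterations gives total $2NT=\mathcal{O}(\epsilon^{-4}\log(\eta^{-1}\epsilon^{-1}))$.
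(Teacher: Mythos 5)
Your proposal is correct in its overall architecture and matches the paper's proof for most of the argument: iterates remain in $\Pi_{\Delta}$ by convexity, the $(\ell_{\lambda}/\Delta)$-smoothness from Theorem~\ref{thm:V_Lip} gives the one-step Frank--Wolfe descent inequality, Proposition~\ref{prop:grad_err} with a union bound over the $T$ iterations controls the gradient substitution, telescoping plus the output rule $\widetilde{T}\in\arg\min_t\langle\hat g_{\lambda,\delta}(\pi_t),\Tilde{\pi}_t-\pi_t\rangle$ and Lemma~\ref{lemma:wolfe} yield an $\mathcal{O}(\epsilon)$ Frank--Wolfe gap on $\Pi_{\Delta}$, and Corollary~\ref{coro:stat2PO} finishes. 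Where you genuinely diverge is the lifting from $\Pi_{\Delta}$-stationarity to $\Pi$-stationarity. The paper's Proposition~\ref{prop:2grad_inprods} does this with a one-line midpoint trick: the small $\Pi_{\Delta}$-gap fed into Theorem~\ref{thm:pi_ge} gives $\pi_{\widetilde{T}}(a|s)\ge 2\Delta$, hence $\frac{\pi_2+\pi_{\widetilde{T}}}{2}\in\Pi_{\Delta}$ for every $\pi_2\in\Pi$, and the $\Pi$-gap is exactly twice the $\Pi_{\Delta}$-gap --- a clean, dimension-free factor of $2$ that turns $D\epsilon/2$ into $D\epsilon$. Your route --- mixing with the uniform policy, then using interiority ($\pi_{\widetilde{T}}\pm\tau v\in\Pi_{\Delta}$ for $\tau\le\Delta$ and unit $v\in\mathcal{L}_0$) to upgrade the gap to $\|\mathrm{proj}_{\mathcal{L}_0}\nabla_{\pi}V_{\lambda,\pi_{\widetilde{T}}}^{\pi_{\widetilde{T}}}\|\le\mathrm{gap}/\Delta$ --- is also sound (and once you have that projected-gradient norm bound, the mixture decomposition is redundant: Cauchy--Schwarz with $\|\pi^{\star}-\pi_{\widetilde{T}}\|\le\sqrt{2|\mathcal{S}|}$ bounds the $\Pi$-gap directly). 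The price is a multiplicative constant of order $|\mathcal{A}|\sqrt{|\mathcal{S}|}$ (or $\sqrt{2|\mathcal{S}|}/\Delta$ on the direct route) in front of $\epsilon$, so to land on exactly $D\epsilon$-stationarity as stated you would have to retune $\beta,\delta,\epsilon_V,N,T$ by that factor; this only changes constants in the final $\mathcal{O}[\epsilon^{-4}\log(\eta^{-1}\epsilon^{-1})]$ complexity, but it is why the paper's averaging argument is the sharper choice and why its stated $\epsilon$-thresholds close without any dimension-dependent inflation.
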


\noindent\textbf{Comparison with Existing Works: } 
Theorem \ref{thm:0ppg_rate} indicates that our 0-FW algorithm for the first time converges to the desire PO policy with arbitrarily small precision $\epsilon$ in polynomial computation complexity, under the regularizer dominance condition that $\mu\ge 0$. In contrast, existing works only converge to a suboptimal PS policy under a similar regularizer dominance condition \citep{mandal2023performative,rank2024performative,mandal2024performative,pollatos2025corruption}. Our preferable convergence result is due to the main algorithmic difference that existing works use repeated retraining algorithms with iteration $\pi_{t+1}\!\approx\! {\arg\max}_{\pi\in\Pi}V_{\lambda,\pi}^{\pi_t}$ where the policy $\pi$ is deployed in a fixed environment $\mathcal{M}_{\pi_t}$ with $\pi\ne\pi_t$, while our 0-FW algorithm evaluates $V_{\lambda,\pi}^{\pi}$ where $\pi$ is always deployed at its corresponding environment $\mathcal{M}_{\pi}$. 

\noindent\textbf{Intuition and Novelty for Proving Theorem \ref{thm:0ppg_rate}: } Standard convergence analysis of Frank-Wolfe algorithm yields that $\max_{\Tilde{\pi}\in\Pi_{\Delta}}\langle\nabla_{\pi} V_{\lambda,\pi_{\Tilde{T}}}^{\pi_{\Tilde{T}}},\Tilde{\pi}-\pi_{\Tilde{T}}\rangle\le \frac{D\epsilon}{2}$ on $\Pi_{\Delta}$. However, it requires a trick to prove the following Proposition \ref{prop:2grad_inprods} which implies that $\pi_{\Tilde{T}}$ is $D\epsilon$-stationary on $\Pi$. 
\begin{proposition}\label{prop:2grad_inprods}
If $\Delta\le \pi_{\min}/3$ and a policy $\pi$ satisfies $\max_{\Tilde{\pi}\in\Pi_{\Delta}}\langle\nabla_{\pi} V_{\lambda,\pi}^{\pi},\Tilde{\pi}-\pi\rangle\le \frac{D\lambda}{5|\mathcal{A}|(1-\gamma)}$, then the stationary measures on $\Pi_{\Delta}$ and $\Pi$ bound each other as follows. 
\begin{align}\label{eq:2grad_inprods}
\max_{\Tilde{\pi}\in\Pi}\langle\nabla_{\pi} V_{\lambda,\pi}^{\pi},\Tilde{\pi}-\pi\rangle \le& 2\max_{\Tilde{\pi}\in\Pi_{\Delta}}\langle\nabla_{\pi} V_{\lambda,\pi}^{\pi},\Tilde{\pi}-\pi\rangle
\end{align}
\end{proposition}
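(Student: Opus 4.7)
The plan is to leverage Theorem~\ref{thm:pi_ge} to show that the hypothesis forces $\pi$ into the strictly tighter subspace $\Pi_{2\Delta}$, and then transfer the inner product bound from $\Pi$ down to $\Pi_\Delta$ by a simple midpoint construction. Throughout, I assume $\pi\in\Pi_\Delta$ (as is maintained at every iterate of Algorithm~\ref{alg:0ppg}).

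First, I would apply Theorem~\ref{thm:pi_ge} using the specific comparison policy $\pi'$ defined in~(\ref{eq:pi_pie}). The key observation is that $\pi'$ is obtained from $\pi$ by permuting the two entries $\pi[a_{\max}(s)|s]$ and $\pi[a_{\min}(s)|s]$ at each state, so the multiset of values at each state is preserved; hence $\pi'(a|s)\ge\Delta$ for every $(s,a)$, i.e., $\pi'\in\Pi_\Delta$. Denoting $B:=\max_{\tilde\pi\in\Pi_\Delta}\langle\nabla_\pi V_{\lambda,\pi}^\pi,\tilde\pi-\pi\rangle$, the hypothesis then yields $\langle\nabla_\pi V_{\lambda,\pi}^\pi,\pi'-\pi\rangle\le B\le\frac{D\lambda}{5|\mathcal{A}|(1-\gamma)}$. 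Substituting this into~(\ref{eq:pi_ge2}) collapses the exponent to $-2D/5$. Since $D\le 1/|\mathcal{S}|\le 1$ and $e^{-2/5}>2/3$, I obtain $\pi(a|s)\ge\pi_{\min}e^{-2D/5}\ge\tfrac{2}{3}\pi_{\min}\ge 2\Delta$ using the standing hypothesis $\Delta\le\pi_{\min}/3$. Thus $\pi\in\Pi_{2\Delta}$.

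Second, let $\pi^\ast\in\arg\max_{\tilde\pi\in\Pi}\langle\nabla_\pi V_{\lambda,\pi}^\pi,\tilde\pi-\pi\rangle$ and define the midpoint $\bar\pi:=\tfrac12\pi^\ast+\tfrac12\pi$. Since $\pi^\ast(a|s)\ge 0$ and $\pi(a|s)\ge 2\Delta$ by the previous step, one has $\bar\pi(a|s)\ge\tfrac12\pi(a|s)\ge\Delta$, so $\bar\pi\in\Pi_\Delta$. Linearity then gives
\begin{align*}
\tfrac12\max_{\tilde\pi\in\Pi}\langle\nabla_\pi V_{\lambda,\pi}^\pi,\tilde\pi-\pi\rangle = \langle\nabla_\pi V_{\lambda,\pi}^\pi,\bar\pi-\pi\rangle \le B,
\end{align*}
which is exactly~(\ref{eq:2grad_inprods}).

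The only delicate piece is the boost from $\Pi_\Delta$ to $\Pi_{2\Delta}$ in the first step, and this hinges on two things working together: (i) the comparison policy in Theorem~\ref{thm:pi_ge} must remain in $\Pi_\Delta$, and the swap policy~(\ref{eq:pi_pie}) is precisely the right choice since it merely permutes entries; (ii) the constants must line up cleanly---the threshold $\tfrac{D\lambda}{5|\mathcal{A}|(1-\gamma)}$ cancels against the exponent factor $\tfrac{2|\mathcal{A}|(1-\gamma)}{\lambda}$ to produce $-2D/5$, and the factor $3$ in the assumption $\Delta\le\pi_{\min}/3$ is tuned exactly to $e^{-2/5}>2/3$, which is the minimum gap required for the midpoint $\bar\pi$ to land in $\Pi_\Delta$ in the second step.
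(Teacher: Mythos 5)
Your proof is correct and follows essentially the same route as the paper's: apply Theorem~\ref{thm:pi_ge} with the swap policy $\pi'\in\Pi_\Delta$ to deduce $\pi(a|s)\ge \pi_{\min}e^{-2D/5}\ge \tfrac{2}{3}\pi_{\min}\ge 2\Delta$, then use the midpoint $\tfrac{1}{2}(\pi^\ast+\pi)\in\Pi_\Delta$ and linearity to obtain the factor of $2$. If anything, your tracking of the exponent $-2D/5$ (with $D\le 1$) is slightly more careful than the paper's own substitution step.
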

To prove Proposition \ref{prop:2grad_inprods}, note that $\pi'$ defined by Eq. (\ref{eq:pi_pie}) also belongs to $\Pi_{\Delta}$, so Theorem \ref{thm:pi_ge} implies $\pi(a|s)\ge 2\Delta$. Then for any $\pi_2\in\Pi$, we have $\frac{\pi_2+\pi}{2}\in\Pi_{\Delta}$ and thus 
\begin{align}
\max_{\pi_2\in\Pi}\langle\nabla_{\pi} V_{\lambda,\pi}^{\pi},\pi_2\!-\!\pi\rangle
=2\max_{\pi_2\in\Pi}\Big\langle\nabla_{\pi} V_{\lambda,\pi}^{\pi},\frac{\pi_2+\pi}{2}\!-\!\pi\Big\rangle \le 2\max_{\Tilde{\pi}\in\Pi_{\Delta}}\langle\nabla_{\pi} V_{\lambda,\pi}^{\pi},\Tilde{\pi}-\pi\rangle.\nonumber
\end{align}

\section{Experiments}
We compare our Algorithm \ref{alg:0ppg} with the existing repeated retraining algorithm in a simulation environment. See Appendix \ref{sec:experiment} for the implementation details. 
Then for the policies $\pi_t$ obtained by each algorithm, we plot the training curves of the performative value function $V_{\lambda,\pi_t}^{\pi_t}$ ($\lambda=0.5$) and the unregularized performative value function $V_{0,\pi_t}^{\pi_t}$ in Figure \ref{fig1} in Appendix \ref{sec:experiment}, which show that our Algorithm \ref{alg:0ppg} converges better than the existing repeated retraining algorithm on both regularized and unregularized performative value functions. 

\section{Conclusion}\label{sec:conclusion}
We have studied an entropy-regularized performative reinforcement learning problem, obtained its important properties including gradient dominance, policy lower bound, Lipschitz continuity and smoothness. Based on these properties, we have proposed a zeroth-order Frank-Wolfe (0-FW) algorithm only using sample-based policy evaluation, which for the first time converges to a \textit{performatively optimal (PO)} policy with polynomial number of policy evaluations under the regularizer dominance condition. These theoretical results also holds for the quadratice regularizers used in the existing works on performative reinforcement learning (see Appendix \ref{sec:QuadReg} for discussion). 


\bibliographystyle{iclr2026_conference}
\bibliography{./GS}

\newpage
\onecolumn
\appendix

\addcontentsline{toc}{section}{Appendix} 
\part{Appendix} 
\parttoc 
\section{Related Works}
\textbf{Non-stationary Reinforcement Learning:} The performative reinforcement learning studied in this work relates to some non-stationary reinforcement learning. For example, \cite{gajane2018sliding,fei2020dynamic,cheung2020reinforcement,wei2021non,domingues2021kernel} provide theoretical results assuming that the non-stationary environment (rewards and transitions) change in a bounded amount or number, and \cite{even2004experts,dekel2013better,rosenberg2019online} study reinforcement learning with adversarial reward functions. 

\textbf{Performative Prediction:} Performative prediction proposed by \citep{perdomo2020performative} is a stochastic optimization framework where the data distribution depends on the decision policy. Compared with performative prediction, performative reinforcement learning is similar but more complex due to the policy-dependent transition dynamics. 

Various algorithms have been obtained with finite-time convergence to various solutions of performative prediction. For example, \cite{mendler2020stochastic,brown2022performative,li2022state} converge to a performatively stable solution that approximates the performatively optimal solution (the primary goal). \cite{izzo2021learn,roy2022constrained,haitong2024two} converge to a stationary point of the nonconvex performative prediction objective. 
\cite{miller2021outside,ray2022decision} converge to the performatively optimal solution (the primary goal), which relies on the strong assumptions that the loss function is strongly convex with degree dominating the distribution shift and that the data distribution satisfies mixture dominance condition or belongs to a location-scale family, such that the objective function becomes convex as proved by \citep{miller2021outside}. In contrast, we have proved an analogous result that the objective of performative reinforcement learning (harder than performative prediction) is gradient dominant (see our Theorem \ref{thm:ToOpt}) without these strong assumptions. In particular, our condition of regularizer dominating the environmental shift is analogous to their condition of strong convexity dominating the distribution shift, but our value function still remains nonconvex which is more challenging than their strongly convex losses. 

A survey of performative prediction can be seen in \citep{hardt2023performative}. 

\section{Experimental Details and Results}\label{sec:experiment}
We compare our Algorithm \ref{alg:0ppg} with the existing repeated retraining algorithm in a simulation environment with 5 states, 4 actions, discount factor $\gamma=0.95$, entropy regularizer coefficient $\lambda=0.5$, as well as transition kernel $p_{\pi}(s'|s,a)=\frac{\pi(a|s)+\pi(a|s')+1}{\sum_{s''}[\pi(a|s)+\pi(a|s'')+1]}$ and reward $r_{\pi}(s,a)=\pi(a|s)$ that depend on the policy $\pi$. We implement our Algorithm 1 for 401 iterations with $N=1000$, $\beta=0.01$, $\Delta=10^{-3}$, $\delta=10^{-4}$ and the performative value functions are evaluated by value iteration. The repeated retraining algorithm obtains the next policy $\pi_{t+1}$ by applying the natural policy gradient algorithm \citep{cen2022fast} with 401 iterations and stepsize 0.01 to the entropy-regularized reinforcement learning with transition kernel $p_{\pi_t}$ and reward $r_{\pi_t}$. Both algorithms start from the uniform policy (i.e. $\pi_0(a|s)\equiv 1/4$). The experiment is implemented on Python 3.9, using Apple M1 Pro with 8 cores and 16 GB memory, which costs about 110 minutes in total. 
Then for the policies $\{\pi_t\}_{t=0}^{400}$ obtained by each algorithm, we plot the training curves of the performative value function $V_{\lambda,\pi_t}^{\pi_t}$ (defined by Eq. (\ref{eq:Vfunc}) with $\lambda=0.5$) and the unregularized performative value function $V_{0,\pi_t}^{\pi_t}$ (defined by Eq. (\ref{eq:Vfunc}) with $\lambda=0$) on the left and right side of Figure \ref{fig1} respectively, which show that the existing repeated retraining algorithm stucks at the initial uniform policy $\pi_0$ since $\pi_0$ is a performatively stable (PS) policy, while our Algorithm \ref{alg:0ppg} converges well on both regularized and unregularized performative value functions in a similar pattern. 

\begin{figure*}[t]
\begin{minipage}{.5\textwidth}
    \centering
    \includegraphics[width=\textwidth]{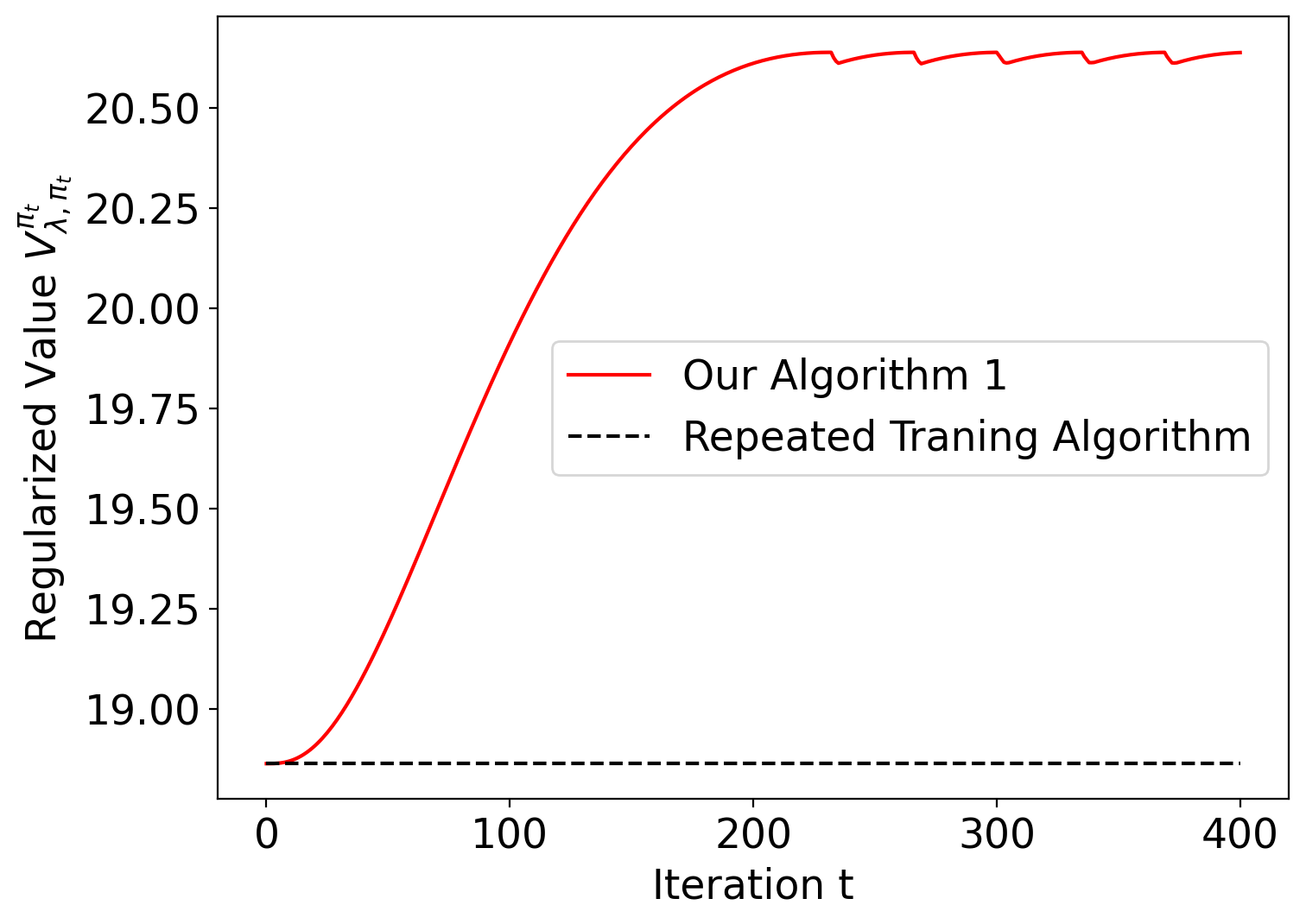}  
\end{minipage} 
\begin{minipage}{.47\textwidth}
    \centering
    \includegraphics[width=\textwidth]{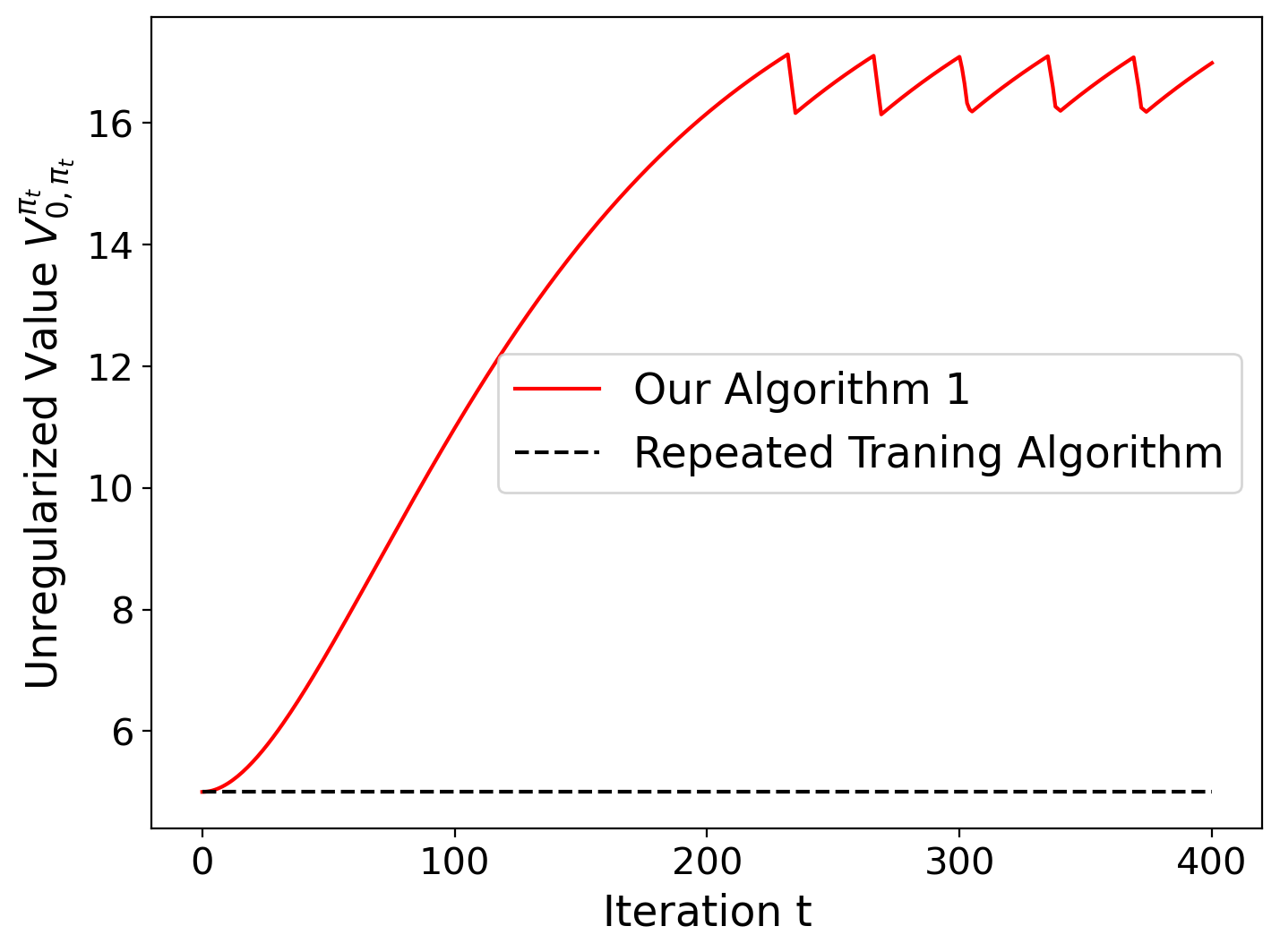}
\end{minipage} 
\caption{Experimental Results.}
\label{fig1}
\vspace{-10pt}
\end{figure*}



\section{Supporting Lemmas}
\subsection{Frank-Wolfe Step}\label{sec:FW}
We repeat Lemma \ref{lemma:wolfe} as follows.
\begin{lemma}\label{lemma:wolfe2}
The step (\ref{eq:pi_wolfe}) has the following analytical solution.
\begin{align}
\Tilde{\pi}_t(a|s)=\left\{
\begin{aligned}
&\Delta; a\ne\Tilde{a}_t(s)\\
&1-\Delta(|\mathcal{A}|-1); a=\Tilde{a}_t(s)
\end{aligned}
\right.,\label{eq:pi_wolfe_sol2}
\end{align}
where $\Tilde{a}_t(s)\in {\arg\max}_{a}\hat{g}_{\lambda,\delta}(\pi_t)(a|s)$. 
\end{lemma}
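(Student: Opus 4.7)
My plan is to exploit the fact that the optimization problem in \eqref{eq:pi_wolfe} decouples across states. The objective can be rewritten as
\begin{align*}
\langle \pi, \hat{g}_{\lambda,\delta}(\pi_t) \rangle = \sum_{s \in \mathcal{S}} \sum_{a \in \mathcal{A}} \pi(a|s)\, \hat{g}_{\lambda,\delta}(\pi_t)(a|s),
\end{align*}
and the constraint $\pi \in \Pi_{\Delta}$ likewise decouples: it requires $\pi(a|s) \ge \Delta$ for all $a$ and $\sum_{a} \pi(a|s) = 1$ for each $s$ separately. Hence it suffices to solve, for each fixed $s$, the per-state linear program $\max_{\pi(\cdot|s)} \sum_a \pi(a|s)\, \hat{g}_{\lambda,\delta}(\pi_t)(a|s)$ subject to these constraints.

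For a fixed $s$, the feasible region is a translated simplex: writing $\pi(a|s) = \Delta + q(a|s)$ with $q(a|s) \ge 0$ and $\sum_a q(a|s) = 1 - \Delta|\mathcal{A}|$ (which is nonnegative whenever $\Delta \le 1/|\mathcal{A}|$, a condition that holds by the choice $\Delta \le \pi_{\min}/3 \le 1/|\mathcal{A}|$ in Theorem~\ref{thm:0ppg_rate}). The objective becomes $\Delta \sum_a \hat{g}_{\lambda,\delta}(\pi_t)(a|s) + \sum_a q(a|s)\,\hat{g}_{\lambda,\delta}(\pi_t)(a|s)$; since the first term is a constant in $q$, the remaining problem is to maximize a linear functional over a scaled probability simplex. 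The optimum is achieved at a vertex, namely by placing all the residual mass $1 - \Delta|\mathcal{A}|$ on any coordinate $\Tilde{a}_t(s) \in \arg\max_a \hat{g}_{\lambda,\delta}(\pi_t)(a|s)$.

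Substituting back gives $\Tilde{\pi}_t(\Tilde{a}_t(s)|s) = \Delta + (1 - \Delta|\mathcal{A}|) = 1 - \Delta(|\mathcal{A}|-1)$ and $\Tilde{\pi}_t(a|s) = \Delta$ for all other $a$, which is exactly the formula in \eqref{eq:pi_wolfe_sol2}. No part of the argument is delicate: the only substantive observation is the per-state decoupling, after which the claim reduces to the elementary fact that a linear functional on a simplex is maximized at a vertex. I do not foresee any real obstacle; the only condition to verify is $\Delta \le 1/|\mathcal{A}|$ so that the feasible region is nonempty, and this is consistent with the hyperparameter choices used elsewhere in the paper.
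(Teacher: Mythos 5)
Your proof is correct and rests on the same elementary observation as the paper's: per state, the problem is a linear program over a truncated simplex, whose maximum is attained by placing all residual mass $1-\Delta|\mathcal{A}|$ on an argmax coordinate of $\hat{g}_{\lambda,\delta}(\pi_t)(\cdot|s)$. The only cosmetic difference is that you derive the maximizer via the change of variables $\pi(a|s)=\Delta+q(a|s)$, whereas the paper verifies directly that the candidate $\Tilde{\pi}_t$ satisfies $\langle \Tilde{\pi}_t-\pi,\hat{g}_{\lambda,\delta}(\pi_t)\rangle\ge 0$ for all $\pi\in\Pi_{\Delta}$; both are sound.
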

\begin{proof}
For $\Tilde{\pi}_t$ defined by Eq. (\ref{eq:pi_wolfe_sol2}) and for any $\pi\in\Pi_{\Delta}$, we have
\begin{align}
&\langle \Tilde{\pi}_t-\pi,\hat{g}_{\lambda,\delta}(\pi_t)\rangle\nonumber\\
=&\sum_{s,a}\hat{g}_{\lambda,\delta}(\pi_t)(a|s)[\Tilde{\pi}_t(a|s)-\pi(a|s)]\nonumber\\
=&\sum_s\Big\{\hat{g}_{\lambda,\delta}(\pi_t)[\Tilde{a}_t(s)|s]\big[1-\Delta(|\mathcal{A}|-1)-\pi[\Tilde{a}_t(s)|s]\big]-\sum_{a\ne\Tilde{a}_t(s)}\hat{g}_{\lambda,\delta}(\pi_t)(a|s)[\pi(a|s)-\Delta]\Big\}\nonumber\\
\overset{(a)}{\ge}&\sum_s\Big\{\hat{g}_{\lambda,\delta}(\pi_t)[\Tilde{a}_t(s)|s]\big[1-\Delta(|\mathcal{A}|-1)-\pi[\Tilde{a}_t(s)|s]\big]\nonumber\\
&-\sum_{a\ne\Tilde{a}_t(s)}\hat{g}_{\lambda,\delta}(\pi_t)[\Tilde{a}_t(s)|s][\pi(a|s)-\Delta]\Big\}\nonumber\\
=&\sum_s\Big\{\hat{g}_{\lambda,\delta}(\pi_t)[\Tilde{a}_t(s)|s]\big[1-\Delta(|\mathcal{A}|-1)-\pi[\Tilde{a}_t(s)|s]\big]\nonumber\\
&-\hat{g}_{\lambda,\delta}(\pi_t)[\Tilde{a}_t(s)|s]\big[1-\pi[\Tilde{a}_t(s)|s]-\Delta(|\mathcal{A}|-1)\big]\Big\}\nonumber\\
=&0,\nonumber
\end{align}
where (a) uses $\pi(a|s)-\Delta\ge 0$ and $\hat{g}_{\lambda,\delta}(\pi_t)(a|s)\le \hat{g}_{\lambda,\delta}(\pi_t)[\Tilde{a}_t(s)|s]$. Therefore, Eq. (\ref{eq:pi_wolfe}) holds, that is, $\Tilde{\pi}_t={\arg\max}_{\pi\in\Pi_{\Delta}}\langle \pi,\hat{g}_{\lambda,\delta}(\pi_t)\rangle$. 
\end{proof}

\subsection{Lipschitz Property of Occupany Measure}
\begin{lemma}\label{lemma:occup_Lip}
The occupancy measure $d_{\pi,p}$ defined by Eq. (\ref{eq:occup}) has the following Lipschitz properties for any $\pi,\pi'\in\Pi$, $p,p'\in\mathcal{P}$ and $\Tilde{s}\in\mathcal{S}$. 
\begin{align}
    \sum_s|d_{\pi',p}(s)-d_{\pi,p}(s)|&\le \frac{\gamma}{1-\gamma}\max_{s}\|\pi'(\cdot|s)-\pi(\cdot|s)\|_1 \le \frac{\gamma\sqrt{|\mathcal{A}|}}{1-\gamma}\|\pi'-\pi\| \label{eq:visit_dpi}\\
    \sum_s|d_{\pi,p'}(s)-d_{\pi,p}(s)|&\le \frac{\gamma}{1-\gamma}\max_{s,a} \|p'(\cdot|s,a)-p(\cdot|s,a)\|_1\le \frac{\gamma\sqrt{|\mathcal{S}|}}{1-\gamma}\|p'-p\|\label{eq:visit_dp}\\
    \sum_{s,a}|d_{\pi',p'}(s,a)-d_{\pi,p}(s,a)|&\le \frac{1}{1-\gamma}\max_{s}\|\pi'(\cdot|s)-\pi(\cdot|s)\|_1\!+\!\frac{\gamma}{1\!-\!\gamma}\!\max_{s,a}\!\|p'(\cdot|s,a)\!-\!p(\cdot|s,a)\|_1\nonumber\\
    &\le\frac{\sqrt{|\mathcal{A}|}}{1-\gamma}\|\pi'-\pi\|
    +\frac{\gamma\sqrt{|\mathcal{S}|}}{1-\gamma}\|p'-p\|\label{eq:visit_dsa}
\end{align}
\end{lemma}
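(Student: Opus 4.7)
The plan is to exploit the fixed-point structure of the Bellman equation (\ref{eq:Bellman_occup}) to set up a self-bounding recursion for each difference $d_{\pi',p'} - d_{\pi,p}$. Concretely, for each of the three bounds I would subtract the Bellman equations for the two occupancy measures and split the resulting product $d_\star(\cdot)\,\pi_\star(\cdot|\cdot)\,p_\star(\cdot|\cdot,\cdot)$ into telescoping pieces, so that on the right-hand side one piece reproduces $\sum_{s}|d_{\pi',p'}(s)-d_{\pi,p}(s)|$ (contracted by $\gamma$) and the remaining pieces are driven only by $\pi'-\pi$ and $p'-p$. Solving the contraction yields the $\gamma/(1-\gamma)$ prefactor, and the final $\|\cdot\|_1\to\|\cdot\|_2$ conversions come from Cauchy--Schwarz ($\|v\|_1\le\sqrt{|\mathcal{A}|}\,\|v\|_2$ for $v\in\mathbb{R}^{|\mathcal{A}|}$, and analogously for $|\mathcal{S}|$).

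For (\ref{eq:visit_dpi}), fix $p$ and write
\[
d_{\pi',p}(s)\pi'(a|s) - d_{\pi,p}(s)\pi(a|s) = [d_{\pi',p}(s)-d_{\pi,p}(s)]\pi'(a|s) + d_{\pi,p}(s)[\pi'(a|s)-\pi(a|s)].
\]
Plugging into the difference of the Bellman equations and summing $|\cdot|$ over $s'$, the first piece contributes $\gamma\sum_s |d_{\pi',p}(s)-d_{\pi,p}(s)|$ (using $\sum_{a,s'}\pi'(a|s)p(s'|s,a)=1$) and the second contributes at most $\gamma\max_s\|\pi'(\cdot|s)-\pi(\cdot|s)\|_1$ (using $\sum_s d_{\pi,p}(s)=1$). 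Rearranging gives the claimed $\gamma/(1-\gamma)$ bound. The proof of (\ref{eq:visit_dp}) is symmetric, fixing $\pi$ and splitting in the $p$ variable instead; the term $\sum_{a,s'}\pi(a|s)p'(s'|s,a)=1$ again yields the contractive piece, and $\sum_{s,a}\pi(a|s)d_{\pi,p}(s)=1$ normalizes the driving term.

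For the joint bound (\ref{eq:visit_dsa}), I would use $d_{\pi,p}(s,a)=d_{\pi,p}(s)\pi(a|s)$ and again split
\[
d_{\pi',p'}(s)\pi'(a|s)-d_{\pi,p}(s)\pi(a|s) = [d_{\pi',p'}(s)-d_{\pi,p}(s)]\pi'(a|s) + d_{\pi,p}(s)[\pi'(a|s)-\pi(a|s)],
\]
sum absolute values over $(s,a)$ to get $\sum_s|d_{\pi',p'}(s)-d_{\pi,p}(s)| + \max_s\|\pi'(\cdot|s)-\pi(\cdot|s)\|_1$, and then control $\sum_s|d_{\pi',p'}(s)-d_{\pi,p}(s)|$ by the triangle inequality through the intermediate $d_{\pi,p'}$, applying (\ref{eq:visit_dpi}) and (\ref{eq:visit_dp}) already proved. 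Collecting terms gives $\tfrac{1}{1-\gamma}\max_s\|\pi'(\cdot|s)-\pi(\cdot|s)\|_1 + \tfrac{\gamma}{1-\gamma}\max_{s,a}\|p'(\cdot|s,a)-p(\cdot|s,a)\|_1$, as required.

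I don't expect a real obstacle here: each step is an algebraic identity plus a contraction argument. The only subtlety is bookkeeping, namely making sure the correct summation normalizations ($\sum \pi=1$, $\sum p=1$, $\sum d=1$) are used in the right place so that one piece of the split becomes the contraction $\gamma\sum_s|d_{\pi',p'}(s)-d_{\pi,p}(s)|$ and the other becomes a clean $\max$ over the policy or transition perturbation; the $\ell_1\to\ell_2$ conversion at the end is standard Cauchy--Schwarz.
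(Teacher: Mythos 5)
Your proposal is correct. The only place it diverges from the paper is in how the first two bounds are obtained: the paper simply cites Lemma 5 of \citep{chen2024acc} for the first inequalities in (\ref{eq:visit_dpi}) and (\ref{eq:visit_dp}), whereas you derive them from scratch by subtracting the Bellman equations (\ref{eq:Bellman_occup}), telescoping the product $d_\star\pi_\star p_\star$, and solving the resulting $\gamma$-contraction --- which is a sound, self-contained argument (the normalizations $\sum_{a,s'}\pi'(a|s)p(s'|s,a)=1$, $\sum_s d_{\pi,p}(s)=1$ and $\sum_{s,a}d_{\pi,p}(s,a)=1$ land exactly where you need them, and the $\ell_1$-to-$\ell_2$ step is the same $\|v\|_1\le\sqrt{d}\|v\|$ the paper uses). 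For (\ref{eq:visit_dsa}) your argument coincides with the paper's: split $d_{\pi',p'}(s)\pi'(a|s)-d_{\pi,p}(s)\pi(a|s)$ into a $d$-difference piece and a $\pi$-difference piece (the paper attaches the weights the other way around, which is immaterial since each weight sums to one), then bound $\sum_s|d_{\pi',p'}(s)-d_{\pi,p}(s)|$ through an intermediate occupancy measure using the first two bounds, and combine $1+\tfrac{\gamma}{1-\gamma}=\tfrac{1}{1-\gamma}$. Net effect: your write-up is slightly longer but fully self-contained, while the paper's is shorter at the cost of an external dependency.
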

\begin{proof}
The first $\le$ of Eqs. (\ref{eq:visit_dpi}) and (\ref{eq:visit_dp}) follows from Lemma 5 of \citep{chen2024acc}. The second $\le$ of Eqs. (\ref{eq:visit_dpi}) and (\ref{eq:visit_dp}) uses $\|x\|_1\le \sqrt{d}\|x\|$ for any $x\in\mathbb{R}^d$. 

Eq. (\ref{eq:visit_dsa}) can be proved as follows.
\begin{align}
&\sum_{s,a}|d_{\pi',p'}(s,a)-d_{\pi,p}(s,a)|\nonumber\\
=&\sum_{s,a}|d_{\pi',p'}(s)\pi'(a|s)-d_{\pi,p}(s)\pi(a|s)|\nonumber\\
\le&\sum_{s,a}d_{\pi',p'}(s)|\pi'(a|s)-\pi(a|s)|+\pi(a|s)|d_{\pi',p'}(s)-d_{\pi,p}(s)|\nonumber\\
\le&\sum_s [d_{\pi',p'}(s)\max_{s'}\|\pi'(\cdot|s')-\pi(\cdot|s')\|_1] +\sum_s|d_{\pi',p'}(s)-d_{\pi,p}(s)|\nonumber\\
\overset{(a)}{\le}&\max_{s'}\|\pi'(\cdot|s')-\pi(\cdot|s')\|_1\!+\!\frac{\gamma}{1-\gamma}\max_{s}\|\pi'(\cdot|s)\!-\!\pi(\cdot|s)\|_1\!+\!\frac{\gamma}{1\!-\!\gamma}\max_{s,a} \|p'(\cdot|s,a)-p(\cdot|s,a)\|_1\nonumber\\
\le&\frac{1}{1-\gamma}\max_{s}\|\pi'(\cdot|s)-\pi(\cdot|s)\|_1+\frac{\gamma}{1-\gamma}\max_{s,a} \|p'(\cdot|s,a)-p(\cdot|s,a)\|_1\nonumber\\
\le&\frac{\sqrt{|\mathcal{A}|}}{1-\gamma}\|\pi'-\pi\|+\frac{\gamma\sqrt{|\mathcal{S}|}}{1-\gamma}\|p'-p\|,\nonumber
\end{align}
where (a) uses Eqs. (\ref{eq:visit_dpi}) and (\ref{eq:visit_dp}). 
\end{proof}

\subsection{Various Value Functions}
Define the following value functions.
\begin{align}
J_{\lambda}(\pi,\pi',p,r)\overset{\rm def}{=}&\mathbb{E}_{\pi,p}\Big[\sum_{t=0}^{\infty}\gamma^t[r(s_t,a_t)-\lambda\log\pi'(a_t|s_t)]\Big|s_0\sim\rho\Big]\nonumber\\
=&\frac{1}{1-\gamma}\sum_{s,a}d_{\pi,p}(s,a)[r(s,a)-\lambda\log\pi'(a|s)],\label{eq:Jtau}\\
V_{\lambda}(\pi,\pi',p,r;s)\overset{\rm def}{=}&\mathbb{E}_{\pi,p}\Big[\sum_{t=0}^{\infty}\gamma^t[r(s_t,a_t)-\lambda\log\pi'(a_t|s_t)]\Big|s_0=s\Big],\label{eq:Vtau}\\
Q_{\lambda}(\pi,\pi',p,r;s,a)\overset{\rm def}{=}&\mathbb{E}_{\pi,p}\Big[\sum_{t=0}^{\infty}\gamma^t[r(s_t,a_t)-\lambda\log\pi'(a_t|s_t)]\Big|s_0=s,a_0=a\Big]\nonumber\\
=&r(s,a)-\lambda\log\pi'(a|s)+\gamma\sum_{s'}p(s'|s,a)V_{\lambda}(\pi,\pi',p,r;s').\label{eq:Qtau}
\end{align}
Note that the value function (\ref{eq:Vfunc}) of interest can be rewritten into the above functions as follows.
\begin{align}
V_{\lambda,\pi'}^{\pi}=&J_{\lambda}(\pi,\pi,p_{\pi'},r_{\pi'})\nonumber\\
=&\sum_s \rho(s)V_{\lambda}(\pi,\pi,p_{\pi'},r_{\pi'};s)\nonumber\\
=&\sum_{s,a}\rho(s)\pi(a|s)Q_{\lambda}(\pi,\pi,p_{\pi'},r_{\pi'};s,a).\nonumber
\end{align}
Hence, we will investigate the properties of the value functions (\ref{eq:Jtau})-(\ref{eq:Qtau}) as follows. 
\begin{lemma}\label{lemma:Jrange}
For any $\pi\in\Pi$, $p\in\mathcal{P}$, $r\in\mathcal{R}$, we have $V_{\lambda,\pi}^{\pi},J_{\lambda}(\pi,\pi,p,r),V_{\lambda}(\pi,\pi,p,r;s),Q_{\lambda}(\pi,\pi,p,r;s,a)\in \Big[0,\frac{1+\lambda\log|\mathcal{A}|}{1-\gamma}\Big]$. 
\end{lemma}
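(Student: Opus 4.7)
The plan is to bound every quantity by splitting the integrand into a reward part and an entropy part, then observing that both parts are non-negative and admit standard upper bounds. For the lower bound I would use that $r(s,a) \in [0,1]$ and that $-\log\pi(a|s) \ge 0$ since $\pi(a|s) \le 1$, so each summand $\gamma^t[r(s_t,a_t)-\lambda\log\pi(a_t|s_t)]$ in the series defining the four value functions is non-negative pointwise along trajectories; hence every expectation, and every initial-state or initial-action-conditioned expectation, is non-negative.

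For the upper bound on $V_{\lambda}(\pi,\pi,p,r;s)$, I would separate
\begin{align*}
V_{\lambda}(\pi,\pi,p,r;s)= \mathbb{E}_{\pi,p}\!\Big[\sum_{t=0}^{\infty}\gamma^t r(s_t,a_t)\,\Big|\,s_0=s\Big] + \lambda\,\mathbb{E}_{\pi,p}\!\Big[\sum_{t=0}^{\infty}\gamma^t(-\log\pi(a_t|s_t))\,\Big|\,s_0=s\Big].
\end{align*}
The first expectation is bounded by $(1-\gamma)^{-1}$ since $r\le 1$. For the second, conditioning on $s_t$ and using the tower property replaces the $a_t$-expectation by the Shannon entropy of $\pi(\cdot|s_t)$, which is at most $\log|\mathcal{A}|$; another geometric sum then yields $\lambda\log|\mathcal{A}|/(1-\gamma)$. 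Adding the two parts gives the stated upper bound. For $J_{\lambda}(\pi,\pi,p,r)$ and $V_{\lambda,\pi}^{\pi}=J_{\lambda}(\pi,\pi,p_{\pi},r_{\pi})$, both are averages of $V_{\lambda}(\pi,\pi,p,r;s)$ over the initial distribution $\rho$, so they inherit the same two-sided bound immediately.

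For $Q_{\lambda}(\pi,\pi,p,r;s,a)$ I would invoke the Bellman identity (\ref{eq:Qtau}), namely $Q_{\lambda}=r(s,a)-\lambda\log\pi(a|s)+\gamma\sum_{s'}p(s'|s,a)V_{\lambda}(\pi,\pi,p,r;s')$, and combine the already-proved $V_{\lambda}$ bound with $r(s,a)\le 1$ and the pointwise bound $-\log\pi(a|s)\le\log|\mathcal{A}|$. The main potential obstacle is exactly this last step: since conditioning on $a_0=a$ fixes a single action, the maximum-entropy inequality no longer applies in expectation, and $-\log\pi(a|s)$ could in principle be arbitrarily large when $\pi(a|s)$ is tiny. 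I expect the bound on $Q_{\lambda}$ to be intended either under the implicit restriction $\pi\in\Pi_{\Delta}$ used throughout Sections \ref{sec:CompactLip}--\ref{sec:0PPG} (where $\pi(a|s)\ge\Delta\ge 1/|\mathcal{A}|$ suffices, or more generally allows replacing $\log|\mathcal{A}|$ by $\log(1/\Delta)$), or in its averaged form $\sum_a\pi(a|s)Q_{\lambda}(\pi,\pi,p,r;s,a)=V_{\lambda}(\pi,\pi,p,r;s)$, which is already covered by the $V_{\lambda}$ bound. Beyond resolving this caveat, the argument is entirely routine bookkeeping on geometric series and the maximum-entropy inequality.
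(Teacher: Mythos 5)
Your argument for $V_{\lambda,\pi}^{\pi}$, $J_{\lambda}(\pi,\pi,p,r)$ and $V_{\lambda}(\pi,\pi,p,r;s)$ is exactly the paper's: the paper writes out only the $J_{\lambda}$ case (non-negativity from $r\ge 0$ and $-\log\pi\ge 0$; upper bound by splitting off the reward, which contributes at most $(1-\gamma)^{-1}$, and bounding the per-step conditional entropy by $\log|\mathcal{A}|$) and then asserts that the other three quantities ``follow the same way.''

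Your caveat about $Q_{\lambda}$ is not a weakness of your proposal but a genuine gap in the lemma as stated, which the paper's one-line dismissal papers over. Because the action at $t=0$ is fixed to $a$, the first-step contribution to $Q_{\lambda}(\pi,\pi,p,r;s,a)$ is $r(s,a)-\lambda\log\pi(a|s)$ rather than an entropy, and since every later term is non-negative one gets $Q_{\lambda}(\pi,\pi,p,r;s,a)\ge -\lambda\log\pi(a|s)$, which exceeds $\frac{1+\lambda\log|\mathcal{A}|}{1-\gamma}$ as soon as $\pi(a|s)$ is small enough. So the stated upper bound on $Q_{\lambda}$ fails on all of $\Pi$; only the lower bound of $0$ and the other three upper bounds are correct as written. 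Your two proposed repairs (restricting to $\Pi_{\Delta}$ so that $-\log\pi(a|s)\le\log(1/\Delta)$, or using only the policy-averaged identity $\sum_a\pi(a|s)Q_{\lambda}(\pi,\pi,p,r;s,a)=V_{\lambda}(\pi,\pi,p,r;s)$) are the natural fixes; note that the $Q_{\lambda}$ bound is invoked downstream, e.g.\ in Eq.~(\ref{eq:QMtau_range}) inside the proof of Lemma~\ref{lemma:J_lip}, so whichever repair is adopted would need to be propagated there.
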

\begin{proof}
We will prove the range of $J_{\lambda}(\pi,\pi,p,r)$ as follows using $r(s,a)\in[0,1]$. The proof for the other value functions follow the same way.
\begin{align}
0\le J_{\lambda}(\pi,\pi,p,r)=&\mathbb{E}_{\pi,p,\rho}\Big[\sum_{t=0}^{\infty}\gamma^t[r(s_t,a_t)-\lambda\log\pi(a_t|s_t)]\Big]\nonumber\\
\le& \sum_{t=0}^{\infty}\gamma^t+\lambda\mathbb{E}_{\pi,p,\rho}\Big[\sum_{t=0}^{\infty}\gamma^t\sum_a[-\pi(a|s_t)\log\pi(a|s_t)]\Big]\nonumber\\
\le&\frac{1}{1-\gamma}+\lambda\sum_{t=0}^{\infty}\gamma^t\log|\mathcal{A}|\nonumber\\
\le&\frac{1+\lambda\log|\mathcal{A}|}{1-\gamma}.\nonumber
\end{align}
\end{proof}
\begin{lemma}\label{lemma:dp}
The gradients of $J_{\lambda}(\pi,\pi',p,r)$ defined by Eq. (\ref{eq:Jtau}) have the following expressions. 
\begin{align}
\frac{\partial J_{\lambda}(\pi,\pi',p,r)}{\partial\pi(a|s)}=&\frac{d_{\pi,p}(s)Q_{\lambda}(\pi,\pi',p,r;s,a)}{1-\gamma},\label{eq:dJ1}\\
\frac{\partial J_{\lambda}(\pi,\pi',p,r)}{\partial\pi'(a|s)}=&-\frac{\lambda d_{\pi,p}(s,a)}{(1-\gamma)\pi'(a|s)},\label{eq:dJ2}\\
\frac{\partial J_{\lambda}(\pi,\pi',p,r)}{\partial p(s'|s,a)}=&\frac{d_{\pi,p}(s,a)}{1-\gamma}\big[r(s,a)-\lambda\log\pi'(a|s)+\gamma V_{\lambda}(\pi,\pi',p,r;s')\big],\label{eq:dJ3}\\
\frac{\partial J_{\lambda}(\pi,\pi',p,r)}{\partial r(s,a)}=&\frac{d_{\pi,p}(s,a)}{1-\gamma},\label{eq:dJ4}\\
\frac{\partial J_{\lambda}(\pi,\pi,p,r)}{\partial\pi(a|s)}=&\frac{d_{\pi,p}(s)[Q_{\lambda}(\pi,\pi,p,r;s,a)-\lambda]}{1-\gamma}.\label{eq:dJ5}
\end{align}
\end{lemma}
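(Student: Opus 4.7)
The plan is to exploit both expressions for $J_\lambda$ given in Eqs.~(\ref{eq:Jtau})--(\ref{eq:Vtau})---the closed-form occupancy-measure representation and the expectation form---and to handle the five identities in three groups, so that the heavy lifting (a Bellman-recursion argument) only needs to be done once and then adapted.

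First I would dispatch Eqs.~(\ref{eq:dJ2}) and (\ref{eq:dJ4}) (the derivatives with respect to $\pi'(a|s)$ and $r(s,a)$) by direct differentiation of the occupancy form $J_\lambda(\pi,\pi',p,r) = (1-\gamma)^{-1}\sum_{\bar s,\bar a}d_{\pi,p}(\bar s,\bar a)[r(\bar s,\bar a)-\lambda\log\pi'(\bar a|\bar s)]$ from Eq.~(\ref{eq:Jtau}). Because $d_{\pi,p}$ depends only on $(\pi,p)$ and not on $(\pi',r)$, differentiation isolates the single summand indexed by $(s,a)$, and the two identities fall out immediately (using $\partial/\partial\pi'(a|s)[-\lambda\log\pi'(a|s)] = -\lambda/\pi'(a|s)$ for (\ref{eq:dJ2})).

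Next I would prove the performative policy-gradient identity (\ref{eq:dJ1}). Writing $J_\lambda = \sum_{s_0}\rho(s_0)V_\lambda(\pi,\pi',p,r;s_0)$, expanding $V_\lambda(\pi,\pi',p,r;s_0)=\sum_{a_0}\pi(a_0|s_0)Q_\lambda(\pi,\pi',p,r;s_0,a_0)$, and using Eq.~(\ref{eq:Qtau}) to differentiate $Q_\lambda$ in turn, I obtain a linear recursion
\begin{align}
\phi(s_0) = \mathbb{1}[s_0=s]\,Q_\lambda(\pi,\pi',p,r;s,a) + \gamma\sum_{s''}K_{\pi,p}(s''|s_0)\,\phi(s''),\nonumber
\end{align}
where $\phi(s_0) := \partial V_\lambda(\pi,\pi',p,r;s_0)/\partial\pi(a|s)$ and $K_{\pi,p}(s''|s_0) := \sum_{a_0}\pi(a_0|s_0)p(s''|s_0,a_0)$ is the $\pi$-induced one-step state kernel. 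Iterating gives $\phi(s_0) = Q_\lambda(\pi,\pi',p,r;s,a)\sum_{t\ge 0}\gamma^t K_{\pi,p}^{(t)}(s|s_0)$; averaging over $s_0\sim\rho$ and identifying the geometric sum with $(1-\gamma)^{-1}d_{\pi,p}(s)$ through the definition in Eq.~(\ref{eq:occup}) yields Eq.~(\ref{eq:dJ1}).

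The derivation of Eq.~(\ref{eq:dJ3}) runs in parallel: differentiating the same Bellman equation with respect to $p(s'|s,a)$, the unrolled Neumann series collapses to an expression proportional to $d_{\pi,p}(s,a)/(1-\gamma)$, and the bracket $[r(s,a)-\lambda\log\pi'(a|s)+\gamma V_\lambda(\pi,\pi',p,r;s')]$ is exactly the one-step cumulative return bundled with the continuation value of transitioning $(s,a)\to s'$; note that $p(\cdot|s,a)$ is constrained to the simplex $\sum_{s'}p(s'|s,a)=1$, so the gradient is defined modulo additive $s'$-constant shifts and the stated form is a natural representative. Finally, Eq.~(\ref{eq:dJ5}) follows from the total-derivative chain rule
\begin{align}
\frac{\partial J_\lambda(\pi,\pi,p,r)}{\partial\pi(a|s)} = \Big[\frac{\partial J_\lambda(\pi,\pi',p,r)}{\partial\pi(a|s)}\Big]_{\pi'=\pi} + \Big[\frac{\partial J_\lambda(\pi,\pi',p,r)}{\partial\pi'(a|s)}\Big]_{\pi'=\pi}\nonumber
\end{align}
combined with Eqs.~(\ref{eq:dJ1})--(\ref{eq:dJ2}) and the identity $d_{\pi,p}(s,a) = \pi(a|s)d_{\pi,p}(s)$, which produces the $-\lambda$ offset in the bracket. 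The main technical obstacle is the book-keeping in the two recursive derivations: one must cleanly separate the explicit dependence of the Bellman right-hand side on the variable being differentiated from its implicit dependence through $V_\lambda$, and then recognize the resulting Neumann series as the normalized discounted occupancy measure.
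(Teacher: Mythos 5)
Your proposal is correct in substance and, for Eqs.~(\ref{eq:dJ2}), (\ref{eq:dJ4}) and (\ref{eq:dJ5}), coincides with the paper's own proof: direct differentiation of the occupancy-measure form (\ref{eq:Jtau}) for the first two, and the total-derivative identity summing (\ref{eq:dJ1}) and (\ref{eq:dJ2}) at $\pi'=\pi$ together with $d_{\pi,p}(s,a)=d_{\pi,p}(s)\pi(a|s)$ for the last. Where you genuinely diverge is on (\ref{eq:dJ1}) and (\ref{eq:dJ3}): the paper disposes of both by citation (Eq.~(7) of \citep{agarwal2021theory} with the reward shifted by $-\lambda\log\pi'$, and Eq.~(9) of \citep{chen2024acc}), whereas you re-derive them from the Bellman relation (\ref{eq:Qtau}) and the resulting Neumann series. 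Your recursion for $\phi(s_0)=\partial V_{\lambda}(\pi,\pi',p,r;s_0)/\partial\pi(a|s)$ is correct, and unrolling it and identifying $\sum_{t}\gamma^{t}\,\mathbb{P}_{\pi,p,\rho}\{s_t=s\}$ with $d_{\pi,p}(s)/(1-\gamma)$ via Eq.~(\ref{eq:occup}) recovers (\ref{eq:dJ1}) exactly. This buys a self-contained proof at the cost of bookkeeping the paper outsources.

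The one place you must be more careful is (\ref{eq:dJ3}). Carrying out your parallel recursion honestly, the inhomogeneous term is $\gamma\,\pi(a|s_0)\,V_{\lambda}(\pi,\pi',p,r;s')$ supported on $s_0=s$, and the Neumann series collapses to $\frac{\gamma\, d_{\pi,p}(s,a)}{1-\gamma}V_{\lambda}(\pi,\pi',p,r;s')$ --- not the stated right-hand side. The two expressions differ by $\frac{d_{\pi,p}(s,a)}{1-\gamma}\big[r(s,a)-\lambda\log\pi'(a|s)\big]$, which is independent of $s'$, so they induce identical directional derivatives along any perturbation $\delta p$ with $\sum_{s'}\delta p(s'|s,a)=0$, i.e.\ along all feasible directions in $\mathcal{P}$; this is also all that is used where the lemma is invoked downstream (e.g.\ Eqs.~(\ref{eq:dJp_r}) and (\ref{eq:lp_all})). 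Your remark that the gradient is defined only modulo additive $s'$-constant shifts is therefore the right resolution, but it is currently an assertion rather than a verification: you should compute the discrepancy explicitly, check that it is $s'$-independent, and state that (\ref{eq:dJ3}) is being read as a representative of the constrained gradient on the simplex rather than as an unconstrained partial derivative. Without that step your derivation appears to contradict the identity as literally written.
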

\begin{proof}
Eq. (\ref{eq:dJ1}) follows from the policy gradient expression in Eq. (7) of \citep{agarwal2021theory}, with reward function $r(s,a)$ replaced by $r(s,a)-\lambda\log\pi'(a|s)$. 

Eq. (\ref{eq:dJ3}) can be proved as follows. 
\begin{align}
p(s'|s,a)\overset{(a)}{=}&\frac{d_{\pi,p}(s)\pi(a|s)}{1-\gamma}\big[r(s,a)-\lambda\log\pi(a|s)+\gamma V_{\lambda}(\pi,\pi',p,r;s')\big]\nonumber\\
=&\frac{d_{\pi,p}(s,a)}{1-\gamma}\big[r(s,a)-\lambda\log\pi(a|s)+\gamma V_{\lambda}(\pi,\pi',p,r;s')\big],\nonumber
\end{align}
where (a) uses Eq. (9) in \citep{chen2024acc}.

Eqs. (\ref{eq:dJ2}) and (\ref{eq:dJ4}) can be proved by taking derivatives of Eq. (\ref{eq:Jtau}). 

Based on the chain rule, Eq. (\ref{eq:dJ5}) can be proved as follows by adding Eqs. (\ref{eq:dJ1}) and (\ref{eq:dJ2}) with $\pi'=\pi$. 
\begin{align}
\frac{\partial J_{\lambda}(\pi,\pi,p,r)}{\partial\pi(a|s)}=&\Big[\frac{\partial J_{\lambda}(\pi,\pi',p,r)}{\partial\pi(a|s)}+\frac{\partial J_{\lambda}(\pi,\pi',p,r)}{\partial\pi'(a|s)}\Big]\Big|_{\pi'=\pi}\nonumber\\
=&\frac{d_{\pi,p}(s)Q_{\lambda}(\pi,\pi,p,r;s,a)}{1-\gamma}-\frac{\lambda d_{\pi,p}(s,a)}{(1-\gamma)\pi(a|s)}\nonumber\\
=&\frac{d_{\pi,p}(s)[Q_{\lambda}(\pi,\pi,p,r;s,a)-\lambda]}{1-\gamma},\nonumber
\end{align}
where the final $=$ uses $d_{\pi,p}(s,a)=d_{\pi,p}(s)\pi(a|s)$. 
\end{proof}

\begin{lemma}\label{lemma:J_lip}
The function $J_{\lambda}$ defined by Eq. (\ref{eq:Jtau}) has the following Lipschitz properties for any $\pi,\pi'\in\Pi$, $p,p'\in\mathcal{P}$ and $r,r'\in\mathcal{R}$. 
\begin{align}
    |J_{\lambda}(\pi',\pi',p,r)-J_{\lambda}(\pi,\pi,p,r)| &\le L_{\pi}\max_{s}\|\log\pi'(\cdot|s)-\log\pi(\cdot|s)\| \label{eq:Lpi}\\
    |J_{\lambda}(\pi,\pi,p',r)-J_{\lambda}(\pi,\pi,p,r)| &\le L_p\|p'-p\|\label{eq:Lp}\\
    |J_{\lambda}(\pi,\pi,p,r')-J_{\lambda}(\pi,\pi,p,r)|&\le \frac{\|r'-r\|_{\infty}}{1-\gamma}\le \frac{\|r'-r\|}{1-\gamma}\label{eq:Lr}\\
    \|\nabla_p J_{\lambda}(\pi',\pi',p,r)-\nabla_p J_{\lambda}(\pi,\pi,p,r)\| &\le \ell_{\pi}\max_{s}\|\log\pi'(\cdot|s)-\log\pi(\cdot|s)\|\label{eq:lpi}\\
    \|\nabla_p J_{\lambda}(\pi,\pi,p',r)-\nabla_p J_{\lambda}(\pi,\pi,p,r)\| &\le \ell_p\|p'-p\|\label{eq:lp}
\end{align}
\begin{align}
    &\|\nabla_p J_{\lambda}(\pi',\pi',p',r')-\nabla_p J_{\lambda}(\pi,\pi,p,r)\| \nonumber\\
    \le& \ell_{\pi}\max_{s}\!\|\!\log\pi'(\cdot|s)\!-\!\log\pi(\cdot|s)\|\!+\!\ell_{p}\|p'\!-\!p\|\!+\!\frac{\sqrt{|\mathcal{S}|}}{(1-\gamma)^2}\|r'\!-\!r\|_{\infty}\label{eq:lp_all}\\
    &\|\nabla_r J_{\lambda}(\pi',\pi',p',r')-\nabla_r J_{\lambda}(\pi,\pi,p,r)\| \nonumber\\
    \le&\frac{\max_{s}\|\pi'(\cdot|s)-\pi(\cdot|s)\|_1+\gamma\max_{s,a} \|p'(\cdot|s,a)-p(\cdot|s,a)\|_1}{(1-\gamma)^2}\label{eq:lr_all}\\
    &\|\nabla_{\pi} J_{\lambda}(\pi',\pi',p',r')-\nabla_{\pi} J_{\lambda}(\pi,\pi,p,r)\|\nonumber\\
    \le&\Big(\frac{|\mathcal{A}|(1+2\lambda\log|\mathcal{A}|)}{(1-\gamma)^2}+\gamma L_{\pi}\Big)\max_{s}\|\log\pi'(\cdot|s)-\log\pi(\cdot|s)\|\nonumber\\
    &+\gamma\sqrt{|\mathcal{A}|}\Big[\frac{2\sqrt{|\mathcal{S}|}(1+\lambda\log|\mathcal{A}|)}{(1-\gamma)^2}+ L_p\Big]\|p'-p\|+\frac{\sqrt{|\mathcal{A}|}\|r'-r\|_{\infty}}{1-\gamma},\label{eq:Lip_pi}
\end{align}
where $L_{\pi}:=\frac{\sqrt{|\mathcal{A}|}(2-\gamma+\gamma\lambda\log|\mathcal{A}|)}{(1-\gamma)^2}$, $L_p:=\frac{\sqrt{|\mathcal{S}|}(1+\lambda\log|\mathcal{A}|)}{(1-\gamma)^2}$, $\ell_{\pi}:=\frac{\sqrt{|\mathcal{S}||\mathcal{A}|}(2+3\gamma\lambda\log|\mathcal{A}|)}{(1-\gamma)^3}$ and $\ell_p:=\frac{2\gamma|\mathcal{S}|(1+\lambda\log|\mathcal{A}|)}{(1-\gamma)^3}$. 
\end{lemma}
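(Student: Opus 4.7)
The plan is to prove the eight inequalities by directly exploiting the closed form
\[
J_{\lambda}(\pi,\pi,p,r) = \frac{1}{1-\gamma}\sum_{s,a} d_{\pi,p}(s,a)\bigl[r(s,a) - \lambda \log\pi(a|s)\bigr],
\]
the gradient formulas (\ref{eq:dJ3})--(\ref{eq:dJ5}) from Lemma \ref{lemma:dp}, and the occupancy-measure Lipschitz bounds of Lemma \ref{lemma:occup_Lip}. The first block (Eqs. (\ref{eq:Lpi})--(\ref{eq:Lr})) concerns Lipschitz bounds on $J_\lambda$ itself. Eq. (\ref{eq:Lr}) is immediate because only $r$ varies and $d_{\pi,p}$ is a probability distribution. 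For Eq. (\ref{eq:Lpi}) I would split
\[
\sum_{s,a}\bigl[d_{\pi',p}(s,a)(r - \lambda\log\pi') - d_{\pi,p}(s,a)(r - \lambda\log\pi)\bigr]
\]
into a term in which only $d$ varies, bounded via (\ref{eq:visit_dsa}) together with the uniform range bound of Lemma \ref{lemma:Jrange}, plus a term in which only the log-policy varies, bounded by $\lambda\max_s\|\log\pi'(\cdot|s) - \log\pi(\cdot|s)\|$ times the total mass of $d_{\pi,p}$. Eq. (\ref{eq:Lp}) follows analogously using (\ref{eq:visit_dp}).

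For the gradients with respect to $p$ and $r$ (Eqs. (\ref{eq:lpi})--(\ref{eq:lr_all})), the plan is to start from the explicit entry $\frac{d_{\pi,p}(s,a)}{1-\gamma}\bigl[r(s,a) - \lambda\log\pi(a|s) + \gamma V_\lambda(\pi,\pi,p,r;s')\bigr]$ given by (\ref{eq:dJ3}) and insert intermediate points so only one factor changes at a time. Each such change is controlled by Lemma \ref{lemma:occup_Lip} (for the change in $d_{\pi,p}$), the uniform bound $\frac{1+\lambda\log|\mathcal A|}{1-\gamma}$ from Lemma \ref{lemma:Jrange} (for the range of the bracket), and, for the inner $V_\lambda$ difference, by applying the $J_\lambda$ Lipschitz bounds (\ref{eq:Lpi})--(\ref{eq:Lr}) pointwise to the initial state, which is legitimate because $V_\lambda(\pi,\pi,p,r;s)$ is just $J_\lambda$ with $\rho$ replaced by $\delta_s$. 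Taking the $\ell_2$ norm over $(s,a,s')$ and using $\|x\|_2 \le \sqrt{n}\,\|x\|_\infty$ where needed produces the $\sqrt{|\mathcal S|},\sqrt{|\mathcal A|}$ factors appearing in $\ell_\pi,\ell_p$ and in (\ref{eq:lp_all}), (\ref{eq:lr_all}). Because $\nabla_r J_\lambda$ has the particularly clean form $\frac{d_{\pi,p}(s,a)}{1-\gamma}$ by (\ref{eq:dJ4}), the bound (\ref{eq:lr_all}) reduces essentially to Lemma \ref{lemma:occup_Lip} alone.

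The most delicate item is Eq. (\ref{eq:Lip_pi}), the smoothness of $\nabla_\pi J_\lambda$. From (\ref{eq:dJ5}) each entry is $\frac{d_{\pi,p}(s)[Q_\lambda(\pi,\pi,p,r;s,a) - \lambda]}{1-\gamma}$, and I would decompose the two-point difference into (a) the change of $d_{\pi,p}(s)$, handled by (\ref{eq:visit_dpi})--(\ref{eq:visit_dp}) times the range of $Q_\lambda$ from Lemma \ref{lemma:Jrange}, and (b) the change of $Q_\lambda$ itself. For (b) I would use the Bellman recursion (\ref{eq:Qtau}) so the $Q$-difference becomes an immediate $r$-difference, a $\lambda\log\pi$-difference, and a $\gamma$-discounted transition-times-value difference; the last of these is controlled by (\ref{eq:Lp}) and the log term by the argument already used for (\ref{eq:Lpi}). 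Summing, converting $\ell_\infty$ to $\ell_2$ with a $\sqrt{|\mathcal{A}|}$ factor and collecting constants should produce exactly the coefficients displayed in (\ref{eq:Lip_pi}). The main obstacle I anticipate is bookkeeping: making the assembled constants line up precisely with the stated $L_\pi, L_p, \ell_\pi, \ell_p$ and with the composite constant in (\ref{eq:Lip_pi}). In particular, keeping track of where the extra factor of $(1-\gamma)^{-1}$ enters through the $Q_\lambda$ Bellman expansion, and which norm conversion introduces which dimensional factor, is what most of the proof really consists of; no new conceptual ingredient beyond Lemmas \ref{lemma:occup_Lip}, \ref{lemma:Jrange}, \ref{lemma:dp} should be needed.
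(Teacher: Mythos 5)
Your treatment of the composite bounds matches the paper's: Eq.~(\ref{eq:Lr}) from the closed form of $J_\lambda$ and the fact that $d_{\pi,p}$ is a distribution; Eq.~(\ref{eq:lr_all}) directly from Eq.~(\ref{eq:dJ4}) and Lemma~\ref{lemma:occup_Lip}; Eq.~(\ref{eq:lp_all}) by inserting intermediate points into Eq.~(\ref{eq:dJ3}) and controlling the inner $V_\lambda$-difference by applying the $J_\lambda$ bounds with $\rho=\delta_s$ (the paper's Eq.~(\ref{eq:dJp_r})); and Eq.~(\ref{eq:Lip_pi}) by splitting the entry $\frac{d_{\pi,p}(s)[Q_\lambda-\lambda]}{1-\gamma}$ into a $d$-change times the $Q$-range and a $d$-weighted $Q$-change expanded through the Bellman recursion (\ref{eq:Qtau}) --- this is exactly the paper's route via Eqs.~(\ref{eq:QMtau_range}), (\ref{eq:Vdiff}), (\ref{eq:Qr_diff}), (\ref{eq:Q_pip_diff}).

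Where you diverge is on Eqs.~(\ref{eq:Lpi}), (\ref{eq:Lp}), (\ref{eq:lpi}), (\ref{eq:lp}): the paper does not prove these at all but imports them from Lemma~6 of \citep{chen2024acc}, whereas you propose to derive them from scratch, and your sketch of that derivation has a concrete gap. You bound the ``$d$ varies'' term by Eq.~(\ref{eq:visit_dsa}) ``together with the uniform range bound of Lemma~\ref{lemma:Jrange},'' but Lemma~\ref{lemma:Jrange} bounds the value functions, not the pointwise integrand $r(s,a)-\lambda\log\pi'(a|s)$, which is unbounded as $\pi'(a|s)\to 0$; a joint-occupancy decomposition over $(s,a)$ therefore cannot be closed by an $\ell_1$--$\ell_\infty$ pairing. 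The fix is to decompose at the state level using $d(s,a)=d(s)\pi(a|s)$ and Eq.~(\ref{eq:visit_dpi})/(\ref{eq:visit_dp}), so that the inner sum $\sum_a\pi'(a|s)[r(s,a)-\lambda\log\pi'(a|s)]\le 1+\lambda\log|\mathcal{A}|$ is bounded by the entropy inequality; but then the residual term contains the entropy difference $\sum_a[\pi'\log\pi'-\pi\log\pi]$, and bounding that by a constant times $\max_s\|\log\pi'(\cdot|s)-\log\pi(\cdot|s)\|$ (rather than ``$\lambda$ times total mass'' as you assert) is precisely the nontrivial content of the cited external lemma. So either cite that result as the paper does, or supply the state-level decomposition and the entropy-difference estimate explicitly; as written, this block of your argument does not go through.
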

\begin{proof}
Eqs. (\ref{eq:Lpi}), (\ref{eq:Lp}), (\ref{eq:lpi}) and (\ref{eq:lp}) directly follow from Lemma 6 of \citep{chen2024acc}. Eq. (\ref{eq:Lr}) can be proved as follows.
\begin{align}
|J_{\lambda}(\pi,p,r')-J_{\lambda}(\pi,p,r)|=&\Big|\frac{1}{1-\gamma}\sum_{s,a}d_{\pi,p}(s,a)[r'(s,a)-r(s,a)]\Big|\nonumber\\
\le&\frac{1}{1-\gamma}\sum_{s,a}d_{\pi,p}(s,a)|r'(s,a)-r(s,a)|\nonumber\\
=&\frac{1}{1-\gamma}\sum_{s,a}d_{\pi,p}(s,a)\|r'-r\|_{\infty} \nonumber\\
=& \frac{1}{1-\gamma}\|r'-r\|_{\infty}\le \frac{1}{1-\gamma}\|r'-r\|.\nonumber
\end{align}
To prove Eq. (\ref{eq:lp_all}), note that
\begin{align}
&\Big|\frac{\partial J_{\lambda}(\pi,\pi,p,r')}{\partial p(s'|s,a)}-\frac{\partial J_{\lambda}(\pi,\pi,p,r)}{\partial p(s'|s,a)}\Big|\nonumber\\
\overset{(a)}{=}&\frac{d_{\pi,p}(s,a)}{1-\gamma}\big|r'(s,a)-r(s,a)+\gamma [V_{\lambda}(\pi,\pi',p,r';s')-V_{\lambda}(\pi,\pi',p,r;s')]\big|\nonumber\\
\overset{(b)}{\le}& \frac{d_{\pi,p}(s,a)}{1-\gamma}\Big[\|r'-r\|_{\infty}+\gamma\sum_{t=0}^{\infty}\gamma^t\|r'-r\|_{\infty}\Big]\nonumber\\
\le& \frac{d_{\pi,p}(s,a)}{(1-\gamma)^2}\|r'-r\|_{\infty}\label{eq:dJp_r}
\end{align}
where (a) uses Eq. (\ref{eq:dJ3}) and (b) uses Eq. (\ref{eq:Vtau}). Therefore, we can prove Eq. (\ref{eq:lp_all}) as follows.
\begin{align}
&\|\nabla_p J_{\lambda}(\pi',\pi',p',r')-\nabla_p J_{\lambda}(\pi,\pi,p,r)\|\nonumber\\
\le&\|\nabla_p J_{\lambda}(\pi',\pi',p',r')-\nabla_p J_{\lambda}(\pi,\pi,p',r')\|+\|\nabla_p J_{\lambda}(\pi,\pi,p',r')-\nabla_p J_{\lambda}(\pi,\pi,p,r')\|\nonumber\\
&+\|\nabla_p J_{\lambda}(\pi,\pi,p,r')-\nabla_p J_{\lambda}(\pi,\pi,p,r)\|\nonumber\\
\overset{(a)}{\le}&\ell_{\pi}\max_{s}\|\log\pi'(\cdot|s)-\log\pi(\cdot|s)\|\!+\!\ell_p\|p'-p\|\!+\!\sqrt{\sum_{s,a,s'}\Big|\frac{\partial J_{\lambda}(\pi,\pi,p,r')}{\partial p(s'|s,a)}\!-\!\frac{\partial J_{\lambda}(\pi,\pi,p,r)}{\partial p(s'|s,a)}\Big|^2}\nonumber\\
\overset{(b)}{\le}&\ell_{\pi}\max_{s}\|\log\pi'(\cdot|s)-\log\pi(\cdot|s)\|+\ell_p\|p'-p\|+\sqrt{\frac{\|r'-r\|_{\infty}^2}{(1-\gamma)^4}\sum_{s,a,s'}d_{\pi,p}^2(s,a)}\nonumber\\
\le&\ell_{\pi}\max_{s}\|\log\pi'(\cdot|s)-\log\pi(\cdot|s)\|+\ell_p\|p'-p\|+\frac{\sqrt{|\mathcal{S}|}}{(1-\gamma)^2}\|r'-r\|_{\infty},\nonumber
\end{align} 
where (a) uses Eqs. (\ref{eq:lpi}) and (\ref{eq:lp}) and (b) uses Eq. (\ref{eq:dJp_r}). 

Then, we prove Eq. (\ref{eq:lr_all}) as follows.
\begin{align}
&\|\nabla_r J_{\lambda}(\pi',\pi',p',r')-\nabla_r J_{\lambda}(\pi,\pi,p,r)\|\nonumber\\
\overset{(a)}{=}&\frac{\|d_{\pi',p'}-d_{\pi,p}\|}{1-\gamma}\nonumber\\
\le&\frac{\|d_{\pi',p'}-d_{\pi,p}\|_1}{1-\gamma}\nonumber\\
\overset{(b)}{\le}&\frac{1}{(1-\gamma)^2}\max_{s}\|\pi'(\cdot|s)-\pi(\cdot|s)\|_1+\frac{\gamma}{(1-\gamma)^2}\max_{s,a} \|p'(\cdot|s,a)-p(\cdot|s,a)\|_1,\nonumber
\end{align}
where (a) uses Eq. (\ref{eq:dJ4}), (b) uses Eq. (\ref{eq:visit_dsa}). 

To prove Eq. (\ref{eq:Lip_pi}), we will first prove the following auxiliary bounds. 
\begin{align}
Q_{\lambda}(\pi,\pi,p,r;s,a)\!-\!\lambda\!\overset{(a)}{\in}\!\Big[\!-\!\lambda,\frac{1\!+\!\lambda\log|\mathcal{A}|}{1-\gamma}\!-\!\lambda\Big]\!\Rightarrow\!\big|Q_{\lambda}(\pi,\pi,p,r;s,a)\!-\!\lambda\big|\!\le\!\frac{1\!+\!\lambda\log|\mathcal{A}|}{1-\gamma},\label{eq:QMtau_range}
\end{align}    
where (a) uses Lemma \ref{lemma:Jrange}. 
\begin{align}
&|V_{\lambda}(\pi',\pi',p',r';s)-V_{\lambda}(\pi,\pi,p,r;s)|\nonumber\\
\le&|V_{\lambda}(\pi',\pi',p',r';s)\!-\!V_{\lambda}(\pi,\pi,p',r';s)|\!+\!|V_{\lambda}(\pi,\pi,p',r';s)\!-\!V_{\lambda}(\pi,\pi,p,r';s)|\nonumber\\
&+\!|V_{\lambda}(\pi,\pi,p,r';s)\!-\!V_{\lambda}(\pi,\pi,p,r;s)|\nonumber\\
\overset{(a)}{\le}&L_{\pi}\max_{s}\|\log\pi'(\cdot|s)-\log\pi(\cdot|s)\|+L_p\|p'-p\|+\frac{\|r'-r\|_{\infty}}{1-\gamma},\label{eq:Vdiff}
\end{align} 
where (a) applies Eqs. (\ref{eq:Lpi})-(\ref{eq:Lr}) to the case where the initial state distribution $\rho$ is probability 1 at $s$ (so $J_{\lambda}(\pi,\pi,p,r)$ becomes $V_{\lambda}(\pi,\pi,p,r;s)$). 
\begin{align}
&|Q_{\lambda}(\pi,\pi,p,r';s,a)-Q_{\lambda}(\pi,\pi,p,r;s,a)|\nonumber\\
\overset{(a)}{=}&\Big|\mathbb{E}_{\pi,p}\Big[\sum_{t=0}^{\infty}\gamma^t[r'(s_t,a_t)-r(s_t,a_t)]\Big|s_0=s,a_0=a\Big]\Big|\nonumber\\
\le&\mathbb{E}_{\pi,p}\Big[\sum_{t=0}^{\infty}\gamma^t[r'(s_t,a_t)-r(s_t,a_t)|\Big|s_0=s,a_0=a\Big]\nonumber\\
\le&\mathbb{E}_{\pi,p}\Big[\sum_{t=0}^{\infty}\gamma^t\|r'-r\|_{\infty}\Big|s_0=s,a_0=a\Big]\nonumber\\
\le&\frac{\|r'-r\|_{\infty}}{1-\gamma},\label{eq:Qr_diff}
\end{align}
where (a) uses Eq. (\ref{eq:Qtau}). 
\begin{align}
&|Q_{\lambda}(\pi',\pi',p',r;s,a)-Q_{\lambda}(\pi,\pi,p,r;s,a)|\nonumber\\
\overset{(a)}{\le}&\lambda|\log\pi'(a|s)-\log\pi(a|s)|+\gamma\Big|\sum_{s'}[p'(s'|s,a)V_{\lambda}(\pi',\pi',p',r;s)-p(s'|s,a)V_{\lambda}(\pi,\pi,p,r;s)]\Big|\nonumber\\
\le&\lambda|\log\pi'(a|s)-\log\pi(a|s)|+\gamma\sum_{s'}p'(s'|s,a)|V_{\lambda}(\pi',\pi',p',r;s)-V_{\lambda}(\pi,\pi,p,r;s)|\nonumber\\
&+\gamma\sum_{s'}|p'(s'|s,a)-p(s'|s,a)||V_{\lambda}(\pi,\pi,p,r;s)|\nonumber\\
\overset{(b)}{\le}&\lambda|\log\pi'(a|s)-\log\pi(a|s)|+\gamma L_{\pi}\max_{s'}\|\log\pi'(\cdot|s')-\log\pi(\cdot|s')\|+\gamma L_p\|p'-p\|\nonumber\\
&+\frac{\gamma(1+\lambda\log|\mathcal{A}|)}{1-\gamma}\|p'(\cdot|s,a)-p(\cdot|s,a)\|_1,\label{eq:Q_pip_diff}
\end{align}
where (a) uses Eq. (\ref{eq:Qtau}), and (b) uses Eq. (\ref{eq:Vdiff}) and Lemma \ref{lemma:Jrange}. 

Note that 
\begin{align}
&(1-\gamma)\Big|\frac{\partial J_{\lambda}(\pi',\pi',p',r')}{\partial\pi'(a|s)}-\frac{\partial J_{\lambda}(\pi,\pi,p,r)}{\partial\pi(a|s)}\Big|\nonumber\\
\overset{(a)}{=}&\big|d_{\pi',p'}(s)[Q_{\lambda}(\pi',\pi',p',r';s,a)-\lambda]-d_{\pi,p}(s)[Q_{\lambda}(\pi,\pi,p,r;s,a)-\lambda]\big|\nonumber\\
\le&\big|[d_{\pi',p'}(s)-d_{\pi,p}(s)][Q_{\lambda}(\pi',\pi',p',r';s,a)-\lambda]\nonumber\\
&+d_{\pi,p}(s)[Q_{\lambda}(\pi',\pi',p',r';s,a)-Q_{\lambda}(\pi',\pi',p',r;s,a)]\nonumber\\
&+d_{\pi,p}(s)[Q_{\lambda}(\pi',\pi',p',r;s,a)-Q_{\lambda}(\pi,\pi,p,r;s,a)]\big|\nonumber\\
\le&\big|d_{\pi',p'}(s)-d_{\pi,p}(s)\big|\cdot\big|Q_{\lambda}(\pi',\pi',p',r';s,a)-\lambda\big|\nonumber\\
&+d_{\pi,p}(s)\big|Q_{\lambda}(\pi',\pi',p',r';s,a)-Q_{\lambda}(\pi',\pi',p',r;s,a)\big|\nonumber\\
&+d_{\pi,p}(s)\big|Q_{\lambda}(\pi',\pi',p',r;s,a)-Q_{\lambda}(\pi,\pi,p,r;s,a)\big|\nonumber\\
\overset{(b)}{\le}&\frac{1+\lambda\log|\mathcal{A}|}{1-\gamma}\big|d_{\pi',p'}(s)-d_{\pi,p}(s)\big|+
\frac{d_{\pi,p}(s)\|r'-r\|_{\infty}}{1-\gamma}\nonumber\\
&+d_{\pi,p}(s)\Big[\lambda|\log\pi'(a|s)-\log\pi(a|s)|+\gamma L_{\pi}\max_{s'}\|\log\pi'(\cdot|s')-\log\pi(\cdot|s')\|\nonumber\\
&+\gamma L_p\|p'-p\|+\frac{\gamma(1+\lambda\log|\mathcal{A}|)}{1-\gamma}\|p'(\cdot|s,a)-p(\cdot|s,a)\|_1\Big],\nonumber
\end{align}
where (a) uses Eq. (\ref{eq:dJ5}), (b) uses Eqs. (\ref{eq:QMtau_range}), (\ref{eq:Qr_diff}) and (\ref{eq:Q_pip_diff}). Applying triangular inequality to the bound above, we can prove Eq. (\ref{eq:Lip_pi}) as follows. 
\begin{align}
&(1-\gamma)\big\|\nabla_{\pi'}J_{\lambda}(\pi',\pi',p',r')-\nabla_{\pi}J_{\lambda}(\pi,\pi,p,r)\big\|\nonumber\\
\le&\frac{1+\lambda\log|\mathcal{A}|}{1-\gamma}\sqrt{\sum_{s,a}\big|d_{\pi',p'}(s)-d_{\pi,p}(s)\big|^2}+
\frac{\|r'-r\|_{\infty}}{1-\gamma}\sqrt{\sum_{s,a}d_{\pi,p}(s)^2}\nonumber\\
&+\lambda\sqrt{\sum_{s,a}d_{\pi,p}(s)^2|\log\pi'(a|s)-\log\pi(a|s)|^2}\nonumber\\
&+\big[\gamma L_{\pi}\max_{s'}\|\log\pi'(\cdot|s')-\log\pi(\cdot|s')\|+\gamma L_p\|p'-p\|\big]\sqrt{\sum_{s,a}d_{\pi,p}(s)^2}\nonumber\\
&+\frac{\gamma(1+\lambda\log|\mathcal{A}|)}{1-\gamma}\sqrt{\sum_{s,a}d_{\pi,p}(s)^2\|p'(\cdot|s,a)-p(\cdot|s,a)\|_1^2}\nonumber\\
\le&\frac{\sqrt{|\mathcal{A}|}(1+\lambda\log|\mathcal{A}|)}{1-\gamma}\sum_s|d_{\pi',p'}(s)-d_{\pi,p}(s)| +\frac{\sqrt{|\mathcal{A}|}\|r'-r\|_{\infty}}{1-\gamma} \nonumber\\
&+\lambda\sqrt{\sum_s d_{\pi,p}(s)\|\log\pi'(\cdot|s)-\log\pi(\cdot|s)\|^2}\nonumber\\
&+\big[\gamma L_{\pi}\max_{s'}\|\log\pi'(\cdot|s')-\log\pi(\cdot|s')\|+\gamma L_p\|p'-p\|\big]\sqrt{|\mathcal{A}|}\nonumber\\
&+\frac{\gamma(1+\lambda\log|\mathcal{A}|)}{1-\gamma}\sqrt{|\mathcal{S}|\sum_{s,a}\|p'(\cdot|s,a)-p(\cdot|s,a)\|^2}\nonumber\\
\overset{(a)}{\le}&\frac{\gamma\sqrt{|\mathcal{A}|}(1+\lambda\log|\mathcal{A}|)}{(1-\gamma)^2}\big[\max_s\|\pi'(\cdot|s)-\pi(\cdot|s)\|_1+\max_{s,a} \|p'(\cdot|s,a)-p(\cdot|s,a)\|_1\big] \nonumber\\
&+\frac{\sqrt{|\mathcal{A}|}\|r'-r\|_{\infty}}{1-\gamma}+\lambda\max_{s'}\|\log\pi'(\cdot|s')-\log\pi(\cdot|s')\|\nonumber\\
&+\sqrt{|\mathcal{A}|}\big[\gamma L_{\pi}\max_{s'}\|\log\pi'(\cdot|s')-\log\pi(\cdot|s')\|+\gamma L_p\|p'-p\|\big]\nonumber\\
&+\frac{\gamma\sqrt{|\mathcal{S}|}(1+\lambda\log|\mathcal{A}|)}{1-\gamma}\|p'-p\|\nonumber\\
\overset{(b)}{\le}&\Big[\frac{|\mathcal{A}|(\gamma+2\lambda\log|\mathcal{A}|)}{(1-\gamma)^2}+\gamma L_{\pi}\Big]\max_{s'}\|\log\pi'(\cdot|s')-\log\pi(\cdot|s')\|\nonumber\\
&+\gamma\sqrt{|\mathcal{A}|}\Big[\frac{2\sqrt{|\mathcal{S}|}(1+\lambda\log|\mathcal{A}|)}{(1-\gamma)^2}+ L_p\Big]\|p'-p\|+\frac{\sqrt{|\mathcal{A}|}\|r'-r\|_{\infty}}{1-\gamma},\nonumber
\end{align}
where (a) uses Lemma \ref{lemma:occup_Lip}, (b) uses $\|\pi'(\cdot|s)-\pi(\cdot|s)\|_1\le\|\log\pi'(\cdot|s)-\log\pi(\cdot|s)\|_1$, \\
$\|p'(\cdot|s,a)-p(\cdot|s,a)\|_1\le \sqrt{|\mathcal{S}|}\|p'(\cdot|s,a)-p(\cdot|s,a)\|\le \sqrt{|\mathcal{S}|}\|p'-p\|$, $\frac{\gamma\sqrt{|\mathcal{S}|}(1+\lambda\log|\mathcal{A}|)}{1-\gamma}\le \frac{\sqrt{|\mathcal{S}||\mathcal{A}|}(1+\lambda\log|\mathcal{A}|)}{(1-\gamma)^2}$ and $\lambda\le\frac{\lambda|\mathcal{A}|\log|\mathcal{A}|}{(1-\gamma)^2}$. 
\end{proof}

\subsection{Zeroth-order Gradient Estimation Error}
We import Theorem 1.6.2 of \citep{tropp2015introduction} as follows. 
\begin{lemma}[Matrix Bernstein Inequality]\label{lemma:MatrixBernstein}
Suppose complex-valued matrices $S_1,\ldots,S_N\in\mathbb{C}^{d_1\times d_2}$ are independently distributed with $\mathbb{E}S_k=0$ and $\|S_k\|\le C$ for each $k=1,\ldots,N$. Denote the sum $Z_N=\sum_{k=1}^NS_k$ its variance statistic as follows
\begin{align}
v(Z_N)=\max\Big[\Big\|\sum_{k=1}^N\mathbb{E}(S_kS_k^*)\Big\|, \Big\|\sum_{k=1}^N\mathbb{E}(S_k^*S_k)\Big\|\Big], \label{eq:vZ}
\end{align}
where $S_k^*$ denotes the conjugate transpose of $S_k$. Then for any $\epsilon\ge 0$, we have
\begin{align}
\mathbb{P}\{\|Z_N\|\ge\epsilon\}\le (d_1+d_2)\exp\Big[\frac{-\epsilon^2/2}{v(Z_N)+C\epsilon/3}\Big].\label{eq:PZbig}
\end{align}
\end{lemma}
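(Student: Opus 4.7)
The plan is to follow the standard matrix Laplace-transform argument (the Ahlswede--Winter / Tropp approach), first reducing the rectangular case to the Hermitian case via Hermitian dilation, then applying Lieb's concavity theorem to control the matrix moment generating function, and finally optimizing over the Laplace parameter.

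First I would introduce the Hermitian dilation
\[
\mathcal{H}(S) = \begin{pmatrix} 0 & S \\ S^* & 0 \end{pmatrix} \in \mathbb{C}^{(d_1+d_2)\times(d_1+d_2)},
\]
which satisfies $\|\mathcal{H}(S)\| = \|S\|$, since $\mathcal{H}(S)^2$ is block-diagonal with blocks $SS^*$ and $S^*S$. Because $\mathcal{H}$ is linear, $\mathcal{H}(Z_N)=\sum_k \mathcal{H}(S_k)$ is a sum of independent Hermitian zero-mean matrices of norm at most $C$, and one checks that $\|\sum_k \mathbb{E}[\mathcal{H}(S_k)^2]\| = v(Z_N)$. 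This reduces the problem to bounding $\lambda_{\max}(\mathcal{H}(Z_N))$ in dimension $d_1+d_2$, which is where the prefactor $(d_1+d_2)$ in the conclusion originates. Note that since the spectrum of $\mathcal{H}(Z_N)$ is symmetric about $0$, $\lambda_{\max}(\mathcal{H}(Z_N))=\|Z_N\|$, so there is no loss in passing to the one-sided tail.

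Next I would apply the matrix Markov inequality: for any Hermitian $Y$ and $\theta>0$,
\[
\mathbb{P}\{\lambda_{\max}(Y)\ge \epsilon\} \le e^{-\theta\epsilon}\,\mathbb{E}\operatorname{tr}\exp(\theta Y),
\]
and then invoke Lieb's concavity theorem to obtain Tropp's master inequality
\[
\mathbb{E}\operatorname{tr}\exp\Bigl(\sum_k \theta X_k\Bigr) \le \operatorname{tr}\exp\Bigl(\sum_k \log \mathbb{E} e^{\theta X_k}\Bigr),
\]
for $X_k=\mathcal{H}(S_k)$. This is the conceptually hardest step: Lieb's theorem (concavity of $A\mapsto \operatorname{tr}\exp(H+\log A)$) is a deep fact, and essentially everything else in the argument reduces to scalar-style manipulations once it is in hand.

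The remaining work is a Bernstein-type per-summand MGF estimate: for mean-zero Hermitian $X$ with $\|X\|\le C$ and $0<\theta<3/C$,
\[
\log \mathbb{E} e^{\theta X} \preceq \frac{\theta^2/2}{1-C\theta/3}\,\mathbb{E}[X^2],
\]
which I would obtain by expanding $e^{\theta X}$ as a power series, using the operator bound $X^k \preceq C^{k-2}X^2$ for $k\ge 2$ (valid since $\|X\|\le C$), summing the geometric series $\sum_{k\ge 2}(C\theta/3)^{k-2}$ that appears from the scalar identity $e^x-1-x\le \tfrac{x^2/2}{1-|x|/3}$, and finally applying the operator monotonicity of $\log$. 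Plugging this into the master inequality and using $\operatorname{tr}\exp(A)\le (d_1+d_2)\exp(\lambda_{\max}(A))$ together with $\lambda_{\max}\bigl(\sum_k \mathbb{E} X_k^2\bigr)=v(Z_N)$ yields
\[
\mathbb{P}\{\|Z_N\|\ge \epsilon\} \le (d_1+d_2)\exp\Bigl(-\theta\epsilon + \tfrac{\theta^2/2}{1-C\theta/3}\,v(Z_N)\Bigr).
\]
Optimizing with $\theta = \epsilon/(v(Z_N)+C\epsilon/3)$ collapses the exponent to $-\epsilon^2/2/(v(Z_N)+C\epsilon/3)$, giving \eqref{eq:PZbig}. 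The dilation reduction and the scalar-to-matrix Bernstein MGF bound are routine; the real obstacle is Lieb's concavity theorem, which I would cite rather than reprove.
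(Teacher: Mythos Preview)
The paper does not prove this lemma at all: it simply imports it verbatim as Theorem~1.6.2 of Tropp's monograph on matrix concentration, with no argument given. Your sketch is the standard Tropp proof (Hermitian dilation, matrix Laplace transform via Lieb's concavity, per-summand Bernstein MGF bound, then optimization in $\theta$), and it is correct, so in that sense you have supplied exactly what the cited reference contains while the paper itself offers nothing to compare against.

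One small imprecision worth flagging: the inequality $X^k \preceq C^{k-2}X^2$ you invoke is not literally valid for odd $k$ when $X$ has negative eigenvalues. The clean way to run this step is the transfer rule applied to the scalar inequality $e^{\theta x}-1-\theta x \le \tfrac{e^{\theta C}-1-\theta C}{C^2}\,x^2$ on $[-C,C]$ (which holds because $x\mapsto (e^{\theta x}-1-\theta x)/x^2$ is increasing), followed by the scalar Bernstein bound $e^{\theta C}-1-\theta C \le \tfrac{(\theta C)^2/2}{1-\theta C/3}$ and operator monotonicity of $\log$. This yields the same semidefinite bound on $\log\mathbb{E}e^{\theta X}$ without the problematic intermediate step. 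Your final choice $\theta=\epsilon/(v(Z_N)+C\epsilon/3)$ does collapse the exponent exactly as claimed.
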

Applying the above lemma to vectors, we obtain the following vector Bernstein inequality. 
\begin{lemma}[Vector Bernstein Inequality]\label{lemma:VecBernstein}
Suppose independently distributed vectors $x_1,\ldots,x_N\in\mathbb{C}^{d}$ satisfies $\|x_k\|\le c$ for each $k=1,\ldots,N$. Then for any $\eta\in(0,1)$, with probability at least $1-\eta$, we have
\begin{align}
\Big\|\frac{1}{N}\sum_{k=1}^N (x_k-\mathbb{E}x_k)\Big\|<\frac{4c}{3N}\log\Big(\frac{d+1}{\eta}\Big)+2c\sqrt{\frac{2}{N}\log\Big(\frac{d+1}{\eta}\Big)}.\label{eq:VecBernstein}
\end{align}
\end{lemma}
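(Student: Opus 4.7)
\noindent\textbf{Proof Proposal for Lemma \ref{lemma:VecBernstein}:} The plan is to reduce the vector-valued tail bound to the matrix Bernstein inequality (Lemma \ref{lemma:MatrixBernstein}) by viewing each vector as a single-column matrix, and then invert the resulting exponential tail into a deviation bound that holds with probability at least $1-\eta$.

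First, I would set $S_k = \frac{1}{N}(x_k - \mathbb{E}x_k) \in \mathbb{C}^{d\times 1}$ for $k=1,\dots,N$, so that the partial sum $Z_N = \sum_{k=1}^N S_k$ equals the quantity of interest $\frac{1}{N}\sum_{k=1}^N(x_k-\mathbb{E}x_k)$. Each $S_k$ is centered by construction and the uniform bound $\|x_k\|\le c$ yields $\|S_k\|\le \frac{\|x_k\|+\|\mathbb{E}x_k\|}{N}\le \frac{2c}{N}=:C$ via Jensen's inequality. For the variance statistic in Eq. (\ref{eq:vZ}), since $S_k$ is a column vector, $S_k^*S_k = \frac{1}{N^2}\|x_k-\mathbb{E}x_k\|^2$ is a nonnegative scalar and $S_kS_k^* = \frac{1}{N^2}(x_k-\mathbb{E}x_k)(x_k-\mathbb{E}x_k)^*$ is a rank-one PSD matrix whose operator norm equals $\|S_k^*S_k\|$. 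Hence both entries in the max defining $v(Z_N)$ are bounded by $\sum_{k=1}^N \frac{1}{N^2}\mathbb{E}\|x_k-\mathbb{E}x_k\|^2 \le \frac{4c^2}{N}=:v$.

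Next, I would invoke Eq. (\ref{eq:PZbig}) with $d_1=d$, $d_2=1$ to get, for any $\epsilon\ge 0$,
\begin{align}
\mathbb{P}\{\|Z_N\|\ge \epsilon\} \le (d+1)\exp\!\Big[\frac{-\epsilon^2/2}{v+C\epsilon/3}\Big].\nonumber
\end{align}
Setting the right-hand side equal to $\eta$ and writing $L = \log\big(\frac{d+1}{\eta}\big)$ yields the quadratic condition $\epsilon^2 - \tfrac{2LC}{3}\epsilon - 2Lv \ge 0$. Solving for the critical $\epsilon$ and applying the elementary inequality $\sqrt{a+b}\le \sqrt{a}+\sqrt{b}$ to the discriminant gives the sufficient choice $\epsilon = \tfrac{2LC}{3} + \sqrt{2Lv}$. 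Substituting $C = 2c/N$ and $v = 4c^2/N$ then produces exactly the bound $\tfrac{4c}{3N}\log\!\big(\tfrac{d+1}{\eta}\big)+2c\sqrt{\tfrac{2}{N}\log\!\big(\tfrac{d+1}{\eta}\big)}$ claimed in Eq. (\ref{eq:VecBernstein}).

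The only subtle point, and the step I would be most careful about, is the inversion of the Bernstein tail: one must avoid losing a constant factor when bounding the square root of a sum by a sum of square roots, and one must verify that the $(d+1)$ prefactor in the matrix Bernstein bound (which comes from $d_1+d_2 = d+1$) matches the logarithmic factor appearing in the stated inequality. Once these bookkeeping issues are handled, no further analytic ingredient is needed beyond the matrix Bernstein inequality already recorded as Lemma \ref{lemma:MatrixBernstein}.
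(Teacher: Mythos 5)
Your proof is correct and follows essentially the same route as the paper: apply the matrix Bernstein inequality with $d_1=d$, $d_2=1$, bound $\|S_k\|\le 2c/N$ and the variance statistic by $4c^2/N$, and invert the tail. Your explicit solution of the quadratic $\epsilon^2-\tfrac{2LC}{3}\epsilon-2Lv\ge 0$ via $\sqrt{a+b}\le\sqrt a+\sqrt b$ is in fact slightly cleaner than the paper's argument, which asserts an intermediate $\epsilon=\tfrac{4c}{3}\log(\tfrac{d+1}{\eta})+c\sqrt{2N\log(\tfrac{d+1}{\eta})}$ that is missing the factor $2$ on the square-root term needed to make the tail $\le\eta$ (the final stated bound, which carries the factor $2$, is the correct one and matches yours).
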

\begin{proof}
Note that $S_k=x_k-\mathbb{E}x_k$ satisfies the conditions of Lemma \ref{lemma:MatrixBernstein} with $d_1=d$, $d_2=1$ and $C$ replaced by $2c$. In addition, $v(Z_N)$ defined by Eq. (\ref{eq:vZ}) satisfies $v(Z_N)\le 4Nc^2$ since
\begin{align}
&\max[\|S_kS_k^*\|,\|S_k^*S_k\|^2]\le \|S_k^*\|^2\|S_k\|^2 \le 4c^2.\nonumber
\end{align}
For any $\eta\in(0,1)$, let 
\begin{align}
\epsilon=\frac{4c}{3}\log\Big(\frac{d+1}{\eta}\Big)+c\sqrt{2N\log\Big(\frac{d+1}{\eta}\Big)}.\nonumber
\end{align}
Therefore, Lemma \ref{lemma:MatrixBernstein} implies that
\begin{align}
\mathbb{P}\Big\{\frac{1}{N}\Big\|\sum_{k=1}^N (x_k-\mathbb{E}x_k)\Big\|\ge\frac{\epsilon}{N}\Big\}\le (d+1)\exp\Big[\frac{-\epsilon^2/2}{4Nc^2+2c\epsilon/3}\Big]\le \eta,\nonumber
\end{align}
which implies that with probability at least $1-\eta$, we have
\begin{align}
\frac{1}{N}\Big\|\sum_{k=1}^N (x_k-\mathbb{E}x_k)\Big\|<\frac{\epsilon}{N}=\frac{4c}{3N}\log\Big(\frac{d+1}{\eta}\Big)+2c\sqrt{\frac{2}{N}\log\Big(\frac{d+1}{\eta}\Big)}.\nonumber
\end{align}
\end{proof}

For any function $f:\mathbb{R}^d\to\mathbb{R}$, obtain the following zeroth-order stochastic estimator of the gradient $\nabla f$. 
\begin{align}
g_{\delta}(x)=\frac{d}{2N\delta}\sum_{i=1}^N [f(x+\delta u_i)-f(x-\delta u_i)]u_i\approx \nabla f(x)\label{eq:gdelta}
\end{align}
where $\delta>0$ and $\{u_i\}_{i=1}^N$ are i.i.d. samples of the uniform distribution on the sphere $\mathbb{S}_d=\{u\in\mathbb{R}^d:\|u\|=1\}$.
\begin{lemma}\label{lemma:gerr}
Suppose $f:\mathbb{R}^d\to\mathbb{R}$ is an $L_f$-Lipschitz continuous and $\ell_f$-smooth function. Then for any $\eta\in(0,1)$, with probability at least $1-\eta$, the gradient estimator $g_{\delta}$ defined by Eq. (\ref{eq:gdelta}) has the following error bound. 
\begin{align}
\|g_{\delta}(x)-\nabla f(x)\|\le&\frac{4L_fd}{3N}\log\Big(\frac{d+1}{\eta}\Big)+2L_fd\sqrt{\frac{2}{N}\log\Big(\frac{d+1}{\eta}\Big)}+\delta\ell_f.\label{eq:gerr}
\end{align}
\end{lemma}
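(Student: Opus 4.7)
\textbf{Proof proposal for Lemma~\ref{lemma:gerr}.} The plan is the standard bias–variance decomposition
\[
g_{\delta}(x)-\nabla f(x)=\bigl(g_{\delta}(x)-\mathbb{E}[g_{\delta}(x)]\bigr)+\bigl(\mathbb{E}[g_{\delta}(x)]-\nabla f(x)\bigr),
\]
controlling the first (stochastic) term by the Vector Bernstein inequality from Lemma~\ref{lemma:VecBernstein}, and the second (bias) term by rewriting $\mathbb{E}[g_{\delta}(x)]$ as the gradient of a ball-smoothed version of $f$.

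For the stochastic term, I will set $X_i=\frac{d}{2\delta}[f(x+\delta u_i)-f(x-\delta u_i)]u_i$, so that $g_{\delta}(x)=\frac{1}{N}\sum_{i=1}^{N}X_i$ and the $X_i$ are i.i.d. The $L_f$-Lipschitz continuity of $f$ together with $\|u_i\|=1$ gives $|f(x+\delta u_i)-f(x-\delta u_i)|\le 2L_f\delta$, hence $\|X_i\|\le L_f d$. Applying Lemma~\ref{lemma:VecBernstein} with $c=L_f d$ yields, with probability at least $1-\eta$,
\[
\|g_{\delta}(x)-\mathbb{E}[g_{\delta}(x)]\|<\frac{4L_fd}{3N}\log\!\Big(\frac{d+1}{\eta}\Big)+2L_fd\sqrt{\frac{2}{N}\log\!\Big(\frac{d+1}{\eta}\Big)}.
\]

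For the bias term, I will use the symmetry $u\leftrightarrow -u$ under the uniform distribution on $\mathbb{S}_d$: since $\mathbb{E}_u[f(x-\delta u)u]=-\mathbb{E}_u[f(x+\delta u)u]$,
\[
\mathbb{E}[g_{\delta}(x)]=\frac{d}{\delta}\,\mathbb{E}_{u\sim\mathbb{S}_d}[f(x+\delta u)u].
\]
By the standard divergence-theorem identity (relating spherical and ball averages), this equals $\nabla \hat f_{\delta}(x)$ where $\hat f_{\delta}(x):=\mathbb{E}_{v\sim B_d}[f(x+\delta v)]$ is the convolution of $f$ with the uniform distribution on the unit ball. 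Differentiating under the expectation and using the $\ell_f$-smoothness of $f$,
\[
\|\nabla \hat f_{\delta}(x)-\nabla f(x)\|=\bigl\|\mathbb{E}_v[\nabla f(x+\delta v)-\nabla f(x)]\bigr\|\le \ell_f\delta\,\mathbb{E}_v[\|v\|]\le \delta\ell_f.
\]

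Summing the two bounds via the triangle inequality reproduces the claimed inequality~\eqref{eq:gerr}. The only nonroutine ingredient is the identification of $\mathbb{E}[g_{\delta}(x)]$ with $\nabla\hat f_{\delta}(x)$, which is a classical fact for spherical two-point estimators; I expect the bookkeeping there (rather than the Bernstein step) to be the main subtlety, but it amounts to Stokes' theorem applied to the indicator of the unit ball.
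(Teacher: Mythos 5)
Your proposal is correct and follows essentially the same route as the paper: the same bound $\|X_i\|\le L_f d$, the same application of the vector Bernstein inequality (Lemma~\ref{lemma:VecBernstein}) with $c=L_f d$, and the same bias bound $\|\nabla f_{\delta}(x)-\nabla f(x)\|\le\delta\ell_f$ via the ball-smoothed function. The only difference is that the paper cites Lemma~1 of \citep{flaxman2005online} for the identity $\mathbb{E}[g_{\delta,i}(x)]=\nabla f_{\delta}(x)$ rather than re-deriving it from the divergence theorem as you sketch.
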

\begin{proof}
Note that $g_{\delta,i}(x)\overset{\rm def}{=}\frac{d}{2\delta}[f(x+\delta u_i)-f(x-\delta u_i)]u_i$ has the following norm bound
\begin{align}
\|g_{\delta,i}(x)\|\le \frac{d}{2\delta} \big|f(x+\delta u_i)-f(x-\delta u_i)\big|\cdot\|u_i\|\le \frac{d}{2\delta}\cdot L_f\|2\delta u_i\|=L_fd. \label{eq:gi_bound}
\end{align}

Define the following smoothed approximation of $f$ as follows.
\begin{align}
f_{\delta}(x)\overset{\rm def}{=}\mathbb{E}_{v\sim{\rm Unif}(\mathbb{B}_d)}[f(x+\delta v)],\label{eq:fdelta}
\end{align}
where ${\rm Unif}(\mathbb{B}_d)$ denotes the uniform distribution on the ball $\mathbb{B}_d\overset{\rm def}{=}\{u\in\mathbb{R}^d:\|u\|\le 1\}$. Then based on Lemma 1 of \citep{flaxman2005online}, we have
\begin{align}
\mathbb{E}[g_{\delta,i}(x)]=\nabla f_{\delta}(x)=\mathbb{E}_{v\sim{\rm Unif}(\mathbb{B}_d)}[\nabla f(x+\delta v)].\label{eq:g_unbiased}
\end{align}

Therefore, applying Lemma \ref{lemma:VecBernstein} to $g_{\delta,i}(x)$, the following bound holds with probability at least $1-\eta$.
\begin{align}
\frac{1}{N}\Big\|\sum_{i=1}^N[g_{\delta,i}(x)-\nabla f_{\delta}(x)]\Big\|<\frac{4L_fd}{3N}\log\Big(\frac{d+1}{\eta}\Big)+2L_fd\sqrt{\frac{2}{N}\log\Big(\frac{d+1}{\eta}\Big)}.\label{eq:gstd} 
\end{align}
Note that 
\begin{align}
\|\nabla f_{\delta}(x)-\nabla f(x)\|=\big\|\mathbb{E}_{v\sim{\rm Unif}(\mathbb{B}_d)}[\nabla f(x+\delta v)-\nabla f(x)]\big\|\le\delta\ell_f.\label{eq:gbias}
\end{align}
As a result, we can prove the conclusion as follows by using Eqs. (\ref{eq:gstd}) and (\ref{eq:gbias}) above.
\begin{align}
\|g_{\delta}(x)-\nabla f(x)\|=&\Big\|\Big[\frac{1}{N}\sum_{i=1}^Ng_{\delta,i}(x)\Big]-\nabla f(x)\Big\|\nonumber\\
\le&\Big\|\Big[\frac{1}{N}\sum_{i=1}^Ng_{\delta,i}(x)\Big]-\nabla f_{\delta}(x)\Big\|+\|\nabla f_{\delta}(x)-\nabla f(x)\|\nonumber\\
<&\frac{4L_fd}{3N}\log\Big(\frac{d+1}{\eta}\Big)+2L_fd\sqrt{\frac{2}{N}\log\Big(\frac{d+1}{\eta}\Big)}+\delta\ell_f.\nonumber
\end{align}

\end{proof}

\subsection{Orthogonal Transformation}
\begin{lemma}\label{lemma:orthoT}
There exists an orthogonal transformation $\mathcal{T}$ from the space $\mathbb{R}^{d-1}$ to $\mathcal{Z}_{d}=\{z=[z_1,\ldots,z_d]\in\mathbb{R}^d:\sum_i z_i=0\}$, that is, $\mathcal{T}$ is invertible and satisfies the following properties for any $x,y\in \mathcal{Z}_{d}$ and $\alpha,\beta\in\mathbb{R}$. 
\begin{align}
\mathcal{T}(\alpha x+\beta y)=& \alpha \mathcal{T}(x)+\beta \mathcal{T}(y),\label{eq:T_linear}\\
\langle \mathcal{T}(x),\mathcal{T}(y)\rangle=& \langle x,y\rangle.\label{eq:T_ortho}
\end{align}
\end{lemma}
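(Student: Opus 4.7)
The plan is to give a direct linear algebra construction. The set $\mathcal{Z}_d = \{z \in \mathbb{R}^d : \sum_i z_i = 0\}$ is the orthogonal complement in $\mathbb{R}^d$ of the span of the all-ones vector $\mathbf{1} \in \mathbb{R}^d$, hence a linear subspace of dimension $d-1$. Since $\mathcal{Z}_d$ inherits the Euclidean inner product from $\mathbb{R}^d$, it is a $(d-1)$-dimensional inner-product space, and my goal reduces to producing an explicit linear isometry between $\mathbb{R}^{d-1}$ (with its standard inner product) and this subspace.

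First I would fix an orthonormal basis $\{v_1, \ldots, v_{d-1}\}$ of $\mathcal{Z}_d$. A convenient concrete choice is to apply Gram--Schmidt to the linearly independent family $\{e_i - e_d\}_{i=1}^{d-1} \subset \mathcal{Z}_d$, where $e_1, \ldots, e_d$ is the standard basis of $\mathbb{R}^d$; any orthonormal basis of $\mathcal{Z}_d$ (e.g.\ a Helmert basis) will work equally well. Assembling these basis vectors as columns gives a matrix $V \in \mathbb{R}^{d \times (d-1)}$ satisfying $V^{\!\top} V = I_{d-1}$ and $\mathbf{1}^{\!\top} V = 0$.

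Next I would define $\mathcal{T} : \mathbb{R}^{d-1} \to \mathcal{Z}_d$ by $\mathcal{T}(x) = V x = \sum_{i=1}^{d-1} x_i v_i$. Linearity (\ref{eq:T_linear}) is immediate from the matrix form. For the inner product identity (\ref{eq:T_ortho}), I compute
\begin{align*}
\langle \mathcal{T}(x), \mathcal{T}(y) \rangle = (Vx)^{\!\top}(Vy) = x^{\!\top} V^{\!\top} V y = x^{\!\top} I_{d-1} y = \langle x, y\rangle,
\end{align*}
using $V^{\!\top} V = I_{d-1}$. (I would also note the apparent typo in the lemma statement: the displayed identities should be quantified over $x, y \in \mathbb{R}^{d-1}$, the domain of $\mathcal{T}$.)

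Finally, invertibility follows because $\mathcal{T}$ is a linear isometry between two finite-dimensional spaces of the same dimension $d-1$: the inner-product-preserving property forces $\mathcal{T}$ to be injective, and dimensions match, so $\mathcal{T}$ is bijective with inverse $\mathcal{T}^{-1}(z) = V^{\!\top} z$. There is no real obstacle here; the construction is entirely routine linear algebra, and the only slightly delicate point is simply to make sure the chosen basis is genuinely orthonormal, which any standard orthogonalization procedure guarantees.
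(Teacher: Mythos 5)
Your proposal is correct and follows essentially the same route as the paper: both construct an orthonormal basis of $\mathcal{Z}_d$ (the paper writes down the Helmert basis explicitly, you invoke Gram--Schmidt or a Helmert basis) and define $\mathcal{T}$ as the linear map sending the standard basis of $\mathbb{R}^{d-1}$ onto it, after which linearity, the inner-product identity, and invertibility follow immediately. Your observation that the quantifier in the lemma statement should read $x,y\in\mathbb{R}^{d-1}$ is also correct and matches how the paper's proof actually treats the variables.
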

\begin{proof}
It can be verified that $\mathbb{R}^{d}$ admits the following orthonormal basis with $\langle e_i,e_j\rangle=0$ for any $i\ne j$ and $\|e_i\|=1$.
\begin{align}
e_k=&\frac{1}{\sqrt{k(k+1)}}[\underbrace{1,1,\ldots,1}_{k~1's},-k,\underbrace{0,0,\ldots,0}_{(d-k-1)~0's}]\in\mathbb{R}^d; k=1,2,\ldots,d-1.\nonumber\\
e_d=&\frac{1}{\sqrt{d}}[\underbrace{1,1,\ldots,1}_{d~1's}]\in\mathbb{R}^d.\nonumber
\end{align} 
Define the transformation $\mathcal{T}$ at $x=[x_1,x_2,\ldots,x_{d-1}]\in\mathbb{R}^{d-1}$ as follows. 
\begin{align}
\mathcal{T}(x)=\sum_{i=1}^{d-1} x_ie_i.\label{eq:Tx}
\end{align}
Since $\mathcal{Z}_d$ is a linear subspace of $\mathbb{R}^d$ orthogonal to $e_d$, $\mathcal{Z}_d$ admits the orthonormal basis $\{e_i\}_{i=1}^{d-1}$. Hence, $\mathcal{T}(x)\in\mathcal{Z}_d$. Conversely, for any $y\in\mathcal{Z}_d$, there exists unique $x\in\mathbb{R}^{d-1}$ such that $y=\sum_{i=1}^{d-1} x_ie_i$. Hence, $\mathcal{T}:\mathbb{R}^{d-1}\to\mathcal{Z}^d$ is invertible. 

For any $x=[x_1,\ldots,x_{d-1}], y=[y_1,\ldots,y_{d-1}]\in\mathbb{R}^{d-1}$ and $\alpha,\beta\in\mathbb{R}$, we can prove Eqs. (\ref{eq:T_linear}) and (\ref{eq:T_ortho}) respectively as follows.
\begin{align}
\mathcal{T}(\alpha x+\beta y)=&\sum_{i=1}^{d-1} (\alpha x_i+\beta y_i)e_i\nonumber\\
=&\alpha \sum_{i=1}^{d-1} x_ie_i+\beta \sum_{i=1}^{d-1} y_ie_i\nonumber\\
=&\alpha \mathcal{T}(x)+\beta \mathcal{T}(y).\nonumber
\end{align}
\begin{align}
\langle \mathcal{T}(x),\mathcal{T}(y)\rangle=&\Big\langle\sum_{i=1}^{d-1} x_ie_i,\sum_{j=1}^{d-1} y_je_j\Big\rangle\nonumber\\
=&\sum_{i=1}^{d-1}\sum_{j=1}^{d-1}x_iy_j\langle e_i,e_j\rangle\nonumber\\
=&\sum_{i=1}^{d-1}x_iy_i=\langle x,y\rangle.\nonumber
\end{align}
\end{proof}

\subsection{Basic Inequalities}
\begin{lemma}\label{lemma:logx_by_x}
For any $\epsilon\in(0,0.5]$ and $x\ge 4\epsilon^{-1}\log(\epsilon^{-1})$, the following inequality holds.
\begin{align}
0<\frac{\log x}{x}\le \epsilon\label{eq:logx_by_x}
\end{align}
Specifically, any $x\ge 3$ satisfies $\frac{\log x}{x}\le \frac{1}{2}$. 
\end{lemma}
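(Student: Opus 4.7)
\textbf{Proof proposal for Lemma \ref{lemma:logx_by_x}.}

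The plan is to exploit monotonicity of $f(x)=\frac{\log x}{x}$ and reduce the inequality to a one-variable comparison in $y=\log(\epsilon^{-1})$. First I will note $f'(x)=\frac{1-\log x}{x^2}$, which is negative for $x>e$, so $f$ is strictly decreasing on $(e,\infty)$. Hence for any $x\ge x_0$ with $x_0>e$, it suffices to bound $f(x_0)$.

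Set $x_0 = 4\epsilon^{-1}\log(\epsilon^{-1})$. Since $\epsilon\in(0,0.5]$, we have $\epsilon^{-1}\ge 2$ and $\log(\epsilon^{-1})\ge \log 2$, which immediately gives $x_0\ge 8\log 2>e$. Positivity of $f$ on $[x_0,\infty)$ is automatic. The upper bound $f(x_0)\le\epsilon$ is equivalent, after multiplying both sides by $x_0$, to
\begin{align*}
\log 4 + \log(\epsilon^{-1}) + \log\log(\epsilon^{-1}) \le 4\log(\epsilon^{-1}),
\end{align*}
i.e., writing $y=\log(\epsilon^{-1})\ge\log 2$, to $\log 4 + \log y \le 3y$. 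The main (and really only) calculation I will carry out is verifying this one-variable inequality: it holds at $y=\log 2$ by direct substitution ($\log 4+\log\log 2\approx 1.02<3\log 2\approx 2.08$), and the derivative comparison $1/y\le 3$ for $y\ge\log 2>1/3$ shows the right-hand side grows faster than the left on $[\log 2,\infty)$.

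For the last sentence of the lemma, I will simply use monotonicity again: $3>e$, so $f$ is decreasing on $[3,\infty)$, and $f(3)=\frac{\log 3}{3}<\frac{1.1}{3}<\frac{1}{2}$, giving the claim for all $x\ge 3$.

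The only even mildly delicate step is the one-variable inequality $\log 4+\log y\le 3y$ on $y\ge\log 2$, and this is entirely routine — there is no real obstacle. Were one to want a cleaner-looking constant, the bound $4$ in $x_0=4\epsilon^{-1}\log(\epsilon^{-1})$ could likely be sharpened, but the version stated is tight enough for the paper's applications and the proof above yields it directly.
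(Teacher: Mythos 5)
Your proof is correct and follows essentially the same route as the paper's: both reduce to the value at $x_0=4\epsilon^{-1}\log(\epsilon^{-1})$ via monotonicity of $\log x/x$ past $e$, then expand $\log x_0$ and verify a one-variable inequality in $\log(\epsilon^{-1})$. The only cosmetic difference is your endpoint-plus-derivative check of $\log 4+\log y\le 3y$ versus the paper's use of $\log u\le u$ to bound $\log\log(\epsilon^{-1})$; both are valid.
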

\begin{proof}
As $\epsilon^{-1}\ge 2$, we have $x\ge 4\epsilon^{-1}\log(\epsilon^{-1})\ge (4)(2)\log(2)>5.54$, so $\log x>\log 5.54>1.71$, which proves the first $<$ of Eq. (\ref{eq:logx_by_x}). 

Note that the function $f(x)=\frac{\log x}{x}$ has the following derivative
\begin{align}
f'(x)=\frac{1-\log x}{x^2}<0,\nonumber
\end{align}
where $<$ uses $\log x>1.71$. Hence, $f$ is monotonic decreasing in $x\ge 4\epsilon^{-1}\log(\epsilon^{-1})>5.54$, Therefore, we prove the second $\le$ of Eq. (\ref{eq:logx_by_x}) as follows.
\begin{align}
\frac{\log x}{x\epsilon}\le&\frac{\log [4\epsilon^{-1}\log(\epsilon^{-1})]}{\epsilon[4\epsilon^{-1}\log(\epsilon^{-1})]}\nonumber\\
=&\frac{\log 4+\log(\epsilon^{-1})+\log[\log(\epsilon^{-1})]}{4\log(\epsilon^{-1})}\nonumber\\
\overset{(a)}{\le}&\frac{\log 4}{4\log (2)}+\frac{\log(\epsilon^{-1})+\log(\epsilon^{-1})}{4\log(\epsilon^{-1})}=1,
\end{align}
where (a) uses $\epsilon^{-1}\ge 2$ and $\log u\le u$ for $u=\log(\epsilon^{-1})$.

When $x\ge 3$, $f'(x)=\frac{1-\log x}{x^2}<0$, so $f(x)\le f(3)=\frac{\log 3}{3}<\frac{1}{2}$. 
\end{proof}

\begin{lemma}\label{lemma:pi_diameter}
For any $\pi,\pi'\in\Pi$, we have $\|\pi'-\pi\|\le\sqrt{2|\mathcal{S}|}$.
\end{lemma}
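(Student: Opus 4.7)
The plan is to exploit the product structure of $\Pi$ as a Cartesian product of simplices, one per state, and bound the squared Euclidean distance state-by-state. Specifically, I would start by expanding
\begin{align*}
\|\pi'-\pi\|^2 = \sum_{s\in\mathcal{S}}\sum_{a\in\mathcal{A}}\bigl(\pi'(a|s)-\pi(a|s)\bigr)^2,
\end{align*}
so the task reduces to showing $\sum_a(\pi'(a|s)-\pi(a|s))^2\le 2$ for each fixed state $s$.

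For the per-state bound, I would use two elementary facts: (i) since $\pi(a|s),\pi'(a|s)\in[0,1]$, we have $|\pi'(a|s)-\pi(a|s)|\le 1$, hence $(\pi'(a|s)-\pi(a|s))^2\le |\pi'(a|s)-\pi(a|s)|$; and (ii) by the triangle inequality and the simplex constraint,
\begin{align*}
\sum_{a}|\pi'(a|s)-\pi(a|s)| \le \sum_a \pi'(a|s)+\sum_a \pi(a|s)=2.
\end{align*}
Chaining (i) and (ii) gives $\sum_a(\pi'(a|s)-\pi(a|s))^2\le 2$. Summing over $s\in\mathcal{S}$ yields $\|\pi'-\pi\|^2\le 2|\mathcal{S}|$, and taking square roots gives the claimed bound.

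There is no real obstacle here; the only thing to be careful about is not accidentally invoking the (false in general) inequality $\|x\|_2\le \|x\|_1$ on the wrong scale. The argument above sidesteps this by directly using $x^2\le|x|$ on $[-1,1]$, which is exactly the regime the simplex constraint forces. This proof is tight up to a constant: taking $\pi(\cdot|s)$ and $\pi'(\cdot|s)$ to be two different deterministic policies for every $s$ gives $\|\pi'-\pi\|=\sqrt{2|\mathcal{S}|}$, so the stated bound is actually attained.
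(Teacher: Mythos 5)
Your proof is correct and essentially matches the paper's: both expand $\|\pi'-\pi\|^2$ entrywise and use the simplex constraint to bound the per-state contribution by $2$ (the paper via $(x-y)^2\le x^2+y^2\le x+y$, you via $(x-y)^2\le|x-y|\le x+y$, which is an immaterial difference). Your added remark that the bound is attained by distinct deterministic policies is a correct bonus not present in the paper.
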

\begin{proof}
\begin{align}
\|\pi'-\pi\|^2=\sum_{s,a}|\pi'(a|s)-\pi(a|s)|^2\le \sum_{s,a}[\pi'^2(a|s)+\pi^2(a|s)]\le \sum_{s,a}[\pi'(a|s)+\pi(a|s)]=2|\mathcal{S}|.\nonumber
\end{align}
\end{proof}

\section{Negative Entropy Regularizer as a Strongly Convex Function of Occupancy Measure}\label{sec:sc_entropy}
The negative entropy regularizer (\ref{eq:entropy}) can be rewritten as follows
\begin{align}
\mathcal{H}_{\pi'}(\pi)=\mathbb{E}_{\pi,p_{\pi'},\rho}\Big[\sum_{t=0}^{\infty}\gamma^t\log\pi(a_t|s_t)\Big]=\frac{1}{1-\gamma}\sum_{s,a}d_{\pi,p_{\pi'}}(s,a)\log\frac{d_{\pi,p_{\pi'}}(s,a)}{d_{\pi,p_{\pi'}}(s)},\label{eq:entropy2}
\end{align}
where $d_{\pi,p_{\pi'}}(s)=\sum_{a'}d_{\pi,p_{\pi'}}(s,a')$. Hence, it suffices to prove that the following function of occupancy measure $d$ is strongly convex.
\begin{align}
H(d)=\sum_{s,a}d(s,a)\log\frac{d(s,a)}{d(s)},\label{eq:Hd}
\end{align}
where $d(s)=\sum_{a'}d(s,a')$. For any $\alpha\in[0,1]$ and occupancy measures $d_1, d_0$, denote $d_{\alpha}=\alpha d_1+(1-\alpha)d_0$ and the corresponding policy as $\pi_{\alpha}(a|s)=\frac{d_{\alpha}(s,a)}{d_{\alpha}(s)}$. Then we have 
\begin{align}
&\alpha H(d_1)+(1-\alpha)H(d_0)-H(d_{\alpha})\nonumber\\
=&\sum_{s,a}\Big[\alpha d_1(s,a)\log\pi_1(a|s)+(1-\alpha)d_0(s,a)\log\pi_0(a|s)\nonumber\\
&-[\alpha d_1(s,a)+(1-\alpha)d_0(s,a)]\log\pi_{\alpha}(a|s)\Big]\nonumber\\
=&\sum_{s,a}\Big[\alpha d_1(s,a)\log\frac{\pi_1(a|s)}{\pi_{\alpha}(a|s)}+(1-\alpha)d_0(s,a)\log\frac{\pi_0(a|s)}{\pi_{\alpha}(a|s)}\Big]\nonumber\\
=&\sum_{s,a}\Big[\alpha d_1(s)\pi_1(a|s)\log\frac{\pi_1(a|s)}{\pi_{\alpha}(a|s)}+(1-\alpha)d_0(s)\pi_0(a|s)\log\frac{\pi_0(a|s)}{\pi_{\alpha}(a|s)}\Big]\nonumber\\
=&\sum_s\Big[\alpha d_1(s){\rm KL}[\pi_1(\cdot|s)\|\pi_{\alpha}(a|s)]+(1-\alpha)d_0(s){\rm KL}[\pi_0(\cdot|s)\|\pi_{\alpha}(a|s)]\Big]\nonumber\\
\overset{(a)}{\ge}&\frac{1}{2}\sum_s\Big[\alpha d_1(s)\|\pi_1(\cdot|s)-\pi_{\alpha}(\cdot|s)\|_1^2+(1-\alpha)d_0(s)\|\pi_0(\cdot|s)-\pi_{\alpha}(\cdot|s)\|_1^2\Big]\nonumber\\
\overset{(b)}{\ge}&\frac{D}{2}\sum_s\Big[\alpha \|\pi_1(\cdot|s)-\pi_{\alpha}(\cdot|s)\|_1^2+(1-\alpha)\|\pi_0(\cdot|s)-\pi_{\alpha}(\cdot|s)\|_1^2\Big]\nonumber\\
\ge&\frac{D}{2}\Big[\alpha \max_s\|\pi_1(\cdot|s)-\pi_{\alpha}(\cdot|s)\|_1^2+(1-\alpha)\max_s\|\pi_0(\cdot|s)-\pi_{\alpha}(\cdot|s)\|_1^2\Big]\nonumber\\
\overset{(c)}{\ge}&\frac{D(1-\gamma)}{2}\Big[\alpha \|d_1-d_{\alpha}\|_1^2+(1-\alpha)\max_s\|d_0-d_{\alpha}\|_1^2\Big]\nonumber\\
=&\frac{D(1-\gamma)}{2}\Big[\alpha(1-\alpha)^2 \|d_1-d_0\|_1^2+(1-\alpha)\alpha^2\|d_1-d_0\|_1^2\Big]\nonumber\\
=&\frac{\alpha(1-\alpha)}{2}\cdot D(1-\gamma)\|d_1-d_0\|_1^2.
\end{align}
where (a) uses Pinsker's inequality, (b) uses Assumption \ref{assum:dmin}, (c) uses Eq. (\ref{eq:visit_dsa}) with $p'=p$. The inequality above implies that $H(d)$ is $D(1-\gamma)$-strongly convex, so the negative entropy regularizer (\ref{eq:entropy2}) can be seen as a $D$-strongly convex function of the occupancy measure $d_{\pi,p_{\pi'}}$. 

\section{Existing Assumptions That Implies Assumption \ref{assum:dmin}}\label{sec:dmin} 
The following assumptions have been used in the reinforcement learning literature. We will show that each of these assumptions implies Assumption \ref{assum:dmin}. 
\begin{assum}\citep{bhandari2024global}\label{assum:rho_pos} $\rho(s)>0$ for any $s\in\mathcal{S}$.
\end{assum}
\begin{assum}\label{assum:d_rho_ratio}\citep{agarwal2021theory,leonardos2022global,PGrobust_ICML2023,chen2024acc} $D_{\rho}:=\sup_{\pi\in\Pi,p\in\mathcal{P}}\|d_{\pi,p}/\rho\|_{\infty}<\infty$.
\end{assum}
\begin{assum}\label{assum:mu}\citep{wei2021last,chen2021sample}
There exists a constant $\mu_{\min}>0$ and mixing time $t_{\text{mix}}\in\mathbb{N}$ such that under any policy $\pi\in\Pi$ and transition kernel $p\in\mathcal{P}$, the stationary state distribution $\mu_{\pi,p}(s)$ has uniform lower bound $\min_{s\in\mathcal{S}}\mu_{\pi,p}(s)\ge \mu_{\min}$, and 
\begin{align}
{d}_{\mathrm{TV}}\big[\mathbb{P}_{\pi,p,\rho}(s_{t_{\text{mix}}}=\cdot), \mu_{\pi,p}\big]\le \frac{1}{4}, \label{eq:dTV}
\end{align}
where $\mathbb{P}_{\pi,p,\rho}(s_{t_{\text{mix}}}=\cdot)$ denotes the state distribution at time $t_{\text{mix}}$, under the policy $\pi$, transition kernel $p$ and initial state distribution $\rho$, and ${d}_{\mathrm{TV}}$ denotes the total variation distance between two probability distributions. 
\end{assum}
\textbf{Proof of Assumption \ref{assum:rho_pos}$\Rightarrow$Assumption \ref{assum:dmin}:} For any policy $\pi\in\Pi$, transition kernel $p\in\mathcal{P}$ and state $s\in\mathcal{S}$, we have
\begin{align}
d_{\pi,p}(s)=&\sum_a d_{\pi,p}(s,a)\nonumber\\
\overset{(a)}{=}&\sum_a (1-\gamma)\sum_{t=0}^{\infty} \gamma^t\mathbb{P}_{\pi,p,\rho} \{s_t=s,a_t=a\}\nonumber\\
=&(1-\gamma)\sum_{t=0}^{\infty} \gamma^t\mathbb{P}_{\pi,p,\rho} \{s_t=s\}\nonumber\\
\ge&(1-\gamma)\mathbb{P}_{\pi,p,\rho} \{s_0=s\}\nonumber\\
=&(1-\gamma)\rho(s)\nonumber\\
\ge&(1-\gamma)\min_{s\in\mathcal{S}}\rho(s).\nonumber
\end{align}
As $\mathcal{S}$ is a finite state space, $\rho(s)>0, \forall s\in\mathcal{S}$ implies that $\min_{s\in\mathcal{S}}\rho(s)>0$. Hence, Assumption \ref{assum:dmin} holds with $D=(1-\gamma)\min_{s\in\mathcal{S}}\rho(s)>0$.

\textbf{Proof of Assumption \ref{assum:d_rho_ratio}$\Rightarrow$Assumption \ref{assum:dmin}:} If $\rho(s)=0$ for a state $s$, then Assumption \ref{assum:d_rho_ratio} implies that $d_{\pi,p}(s)=(1-\gamma)\sum_{t=0}^{\infty} \gamma^t\mathbb{P}_{\pi,p,\rho} \{s_t=s\}=0$ for any $\pi\in\Pi$ and $p\in\mathcal{P}$, which means the state $s$ will never be visited. Therefore, we can exclude all such states $s$ from $\mathcal{S}$ such that Assumption \ref{assum:rho_pos} holds, which implies Assumption \ref{assum:dmin} as proved above. 

\textbf{Proof of Assumption \ref{assum:mu}$\Rightarrow$Assumption \ref{assum:dmin}:} Eq. (\ref{eq:dTV}) implies that for any $n\in\mathbb{N}_+$, we have
\begin{align}
{d}_{\mathrm{TV}}\big[\mathbb{P}_{\pi,p,\rho}(s_{nt_{\text{mix}}}=\cdot), \mu_{\pi,p}\big]=\frac{1}{2}\sum_s |\mathbb{P}_{\pi,p,\rho}\{s_{nt_{\text{mix}}}=s\}-\mu_{\pi,p}(s)|\le \frac{1}{4^n}.\nonumber 
\end{align}
Select $n=\lceil\log(\mu_{\min}^{-1})/\log 4\rceil$. Then the bound above implies $|\mathbb{P}_{\pi,p,\rho}\{s_{nt_{\text{mix}}}=s\}-\mu_{\pi,p}(s)|\le \mu_{\min}/2$ for any state $s$, which along with $\mu_{\pi,p}(s)\ge \mu_{\min}$ implies that $\mathbb{P}_{\pi,p,\rho}\{s_{nt_{\text{mix}}}=s\}\ge \mu_{\min}/2$. Therefore, we can prove Assumption \ref{assum:dmin} as follows.
\begin{align}
d_{\pi,p}(s)=&(1-\gamma)\sum_{t=0}^{\infty} \gamma^t\mathbb{P}_{\pi,p,\rho} \{s_t=s\} \ge(1-\gamma)\gamma^{nt_{\text{mix}}}\mathbb{P}_{\pi,p,\rho}\{s_{nt_{\text{mix}}}=s\} \ge \frac{\mu_{\min}}{2}\gamma^{nt_{\text{mix}}}(1-\gamma). \nonumber
\end{align}

\section{Proof of Theorem \ref{thm:ToOpt}}\label{sec:proof_thm:ToOpt}
Fix any $\pi_0,\pi_1\in\Pi$. For any $\alpha\in[0,1]$, denote $d_{\alpha}=\alpha d_{\pi_1,p_{\pi_1}}+(1-\alpha)d_{\pi_0,p_{\pi_0}}$, $\pi_{\alpha}(a|s)=\frac{d_{\alpha}(s,a)}{d_{\alpha}(s)}$ where $d_{\alpha}(s)=\sum_{a'}d_{\alpha}(s,a')$, and $p_{\alpha}=p_{\pi_\alpha}$. It can be easily verified that $d_0=d_{\pi_0,p_0}$, $d_1=d_{\pi_1,p_1}$ and $d_{\alpha}=\alpha d_0+(1-\alpha)d_1$. Then we can obtain the following derivatives and their bounds about $\pi_{\alpha}, d_{\alpha}$ in Eqs. (\ref{eq:dpi_alpha})-(\ref{eq:pd_smooth}).
\begin{align}
&\frac{d_{\alpha}(s)[d_1(s,a)-d_0(s,a)]-d_{\alpha}(s,a)[d_1(s)-d_0(s)]}{d_{\alpha}^2(s)}\nonumber\\
=&\frac{[\alpha d_1(s)+(1-\alpha)d_0(s)][d_1(s,a)-d_0(s,a)]-[\alpha d_1(s,a)+(1-\alpha)d_0(s,a)][d_1(s)-d_0(s)]}{d_{\alpha}^2(s)}\nonumber\\
=&\frac{d_0(s)d_1(s,a)-d_0(s,a)d_1(s)}{d_{\alpha}^2(s)}\nonumber\\
=&\frac{d_0(s)d_1(s)[\pi_1(a|s)-\pi_0(a|s)]}{d_{\alpha}^2(s)}.\label{eq:dpi_alpha}
\end{align}
Hence,
\begin{align}
\Big\|\frac{d\pi_{\alpha}}{d\alpha}\Big\|^2=&\sum_{s,a}\Big|\frac{d_0(s)d_1(s)[\pi_1(a|s)-\pi_0(a|s)]}{d_{\alpha}^2(s)}\Big|^2\nonumber\\
\overset{(a)}{\le}&\sum_{s,a}\Big[\frac{\max[d_0(s),d_1(s)]\min[d_0(s),d_1(s)]}{\min^2[d_0(s),d_1(s)]}\Big]^2[\pi_1(a|s)-\pi_0(a|s)]^2\nonumber\\
\overset{(b)}{\le}& D^{-2}\sum_{s,a}[\pi_1(a|s)-\pi_0(a|s)]^2\le D^{-2}\|\pi_1-\pi_0\|^2, \label{eq:dpi_alpha_norm}
\end{align}
where (a) uses $d_{\alpha}(s)=\alpha d_1(s)+(1-\alpha)d_0(s)\ge\min[d_0(s),d_1(s)]$ and (b) uses Assumption \ref{assum:dmin}. Then by taking derivative of Eq. (\ref{eq:dpi_alpha}), we have
\begin{align}
\frac{d^2}{d\alpha^2}\pi_{\alpha}(a|s)=-\frac{2d_0(s)d_1(s)[\pi_1(a|s)-\pi_0(a|s)][d_1(s)-d_0(s)]}{d_{\alpha}^3(s)}.\label{eq:d2pi_alpha}
\end{align}
Hence,
\begin{align}
\Big\|\frac{d^2\pi_{\alpha}}{d\alpha^2}\Big\|^2=&\sum_{s,a}\Big|\frac{2d_0(s)d_1(s)[\pi_1(a|s)-\pi_0(a|s)][d_1(s)-d_0(s)]}{[\alpha d_1(s)+(1-\alpha)d_0(s)]^3}\Big|^2\nonumber\\
\overset{(a)}{\le}&\sum_{s,a}\Big[\frac{2\max[d_0(s),d_1(s)]\min[d_0(s),d_1(s)]\big|d_1(s)-d_0(s)\big|}{D^2\min[d_0(s),d_1(s)]}\Big]^2\![\pi_1(a|s)\!-\!\pi_0(a|s)]^2\nonumber\\
\le& (2D^{-2})^2\max_s\big[|d_1(s)-d_0(s)|^2\big]\sum_{s,a}[\pi_1(a|s)-\pi_0(a|s)]^2\nonumber\\
\le& (2D^{-2})^2\|\pi_1-\pi_0\|^2\Big[\sum_s|d_1(s)-d_0(s)|\Big]^2\nonumber\\
\overset{(b)}{\le}&(2D^{-2})^2\|\pi_1-\pi_0\|^2\Big[\frac{\gamma\sqrt{|\mathcal{A}|}}{1-\gamma}\|\pi_1-\pi_0\|+\frac{\gamma\sqrt{|\mathcal{S}|}}{1-\gamma}\|p_{\pi_1}-p_{\pi_0}\|\Big]^2\nonumber\\
\overset{(c)}{\le}&(2D^{-2})^2\|\pi_1-\pi_0\|^2\Big[\frac{\gamma\sqrt{|\mathcal{A}|}}{1-\gamma}\|\pi_1-\pi_0\|+\frac{\gamma\epsilon_p\sqrt{|\mathcal{S}|}}{1-\gamma}\|\pi_1-\pi_0\|\Big]^2\nonumber\\
\le&(2D^{-2})^2\|\pi_1-\pi_0\|^4\Big[\frac{\gamma(\epsilon_p\sqrt{|\mathcal{S}|}+\sqrt{|\mathcal{A}|})}{1-\gamma}\Big]^2, \label{eq:d2pi_alpha_norm}
\end{align}
where (a) uses $d_{\alpha}(s)=\alpha d_1(s)+(1-\alpha)d_0(s)\ge\min[d_0(s),d_1(s)]\ge D$, (b) uses Lemma \ref{lemma:occup_Lip}, and (c) uses Assumption \ref{assum:sensitive}. 
\begin{align}
&d_0(s)d_1(s)\Big|\frac{d}{d\alpha}\Big[\frac{d_{\alpha}(s,a)}{d_{\alpha}^2(s)}\Big]\Big|\nonumber\\
=&\Big|\frac{d_0(s)d_1(s)}{d_{\alpha}^2(s)}[d_1(s,a)-d_0(s,a)]-\frac{2d_0(s)d_1(s)d_{\alpha}(s,a)}{d_{\alpha}^3(s)}[d_1(s)-d_0(s)]\Big|\nonumber\\
\le&\frac{d_0(s)d_1(s)}{d_{\alpha}^2(s)}\Big[|d_1(s,a)-d_0(s,a)|+\frac{2d_{\alpha}(s,a)}{d_{\alpha}(s)}|d_1(s)-d_0(s)|\Big]\nonumber\\
\le&\frac{\max[d_0(s),d_1(s)]\min[d_0(s),d_1(s)]}{\min^2[d_0(s),d_1(s)]}\big[|d_1(s,a)-d_0(s,a)|+2\pi_{\alpha}(a|s)|d_1(s)-d_0(s)|\big]\nonumber\\
\le&D^{-1}\big[|d_1(s,a)-d_0(s,a)|+2\pi_{\alpha}(a|s)|d_1(s)-d_0(s)|\big].\label{eq:dddd}
\end{align}
\begin{align}
&\frac{d}{d\alpha}[d_{\alpha}(s,a)p_{\alpha}(s'|s,a)]\nonumber\\
=&p_{\alpha}(s'|s,a)[d_1(s,a)-d_0(s,a)]+d_{\alpha}(s,a)\cdot\frac{d}{d\alpha}\pi_{\alpha}(a|s)\cdot\nabla_{\pi}p_{\pi_{\alpha}}(s'|s,a)\nonumber\\
=&p_{\alpha}(s'|s,a)[d_1(s,a)\!-\!d_0(s,a)]\!+\!\frac{d_{\alpha}(s,a)d_0(s)d_1(s)[\pi_1(a|s)\!-\!\pi_0(a|s)]}{d_{\alpha}^2(s)}\cdot\nabla_{\pi}p_{\pi_{\alpha}}(s'|s,a)\label{eq:pd_dalpha}
\end{align}
Then for any $\alpha,\alpha'\in [0,1]$, we have
\begin{align}
&\Big|\frac{d}{d\alpha}[d_{\alpha'}(s,a)p_{\alpha'}(s'|s,a)]-\frac{d}{d\alpha}[d_{\alpha}(s,a)p_{\alpha}(s'|s,a)]\Big|\nonumber\\
\overset{(a)}{\le}&|p_{\alpha'}(s'|s,a)-p_{\alpha}(s'|s,a)|\cdot|d_1(s,a)-d_0(s,a)|+d_0(s)d_1(s)|\pi_1(a|s)-\pi_0(a|s)|\cdot\nonumber\\
&\Big[\Big|\frac{d_{\alpha'}(s,a)}{d_{\alpha'}^2(s)}\Big|\|\nabla_{\pi}p_{\pi_{\alpha'}}(s'|s,a)-\nabla_{\pi}p_{\pi_{\alpha}}(s'|s,a)\|+\Big|\frac{d_{\alpha'}(s,a)}{d_{\alpha'}^2(s)}-\frac{d_{\alpha}(s,a)}{d_{\alpha}^2(s)}\Big|\|\nabla_{\pi}p_{\pi_{\alpha}}(s'|s,a)\|\Big]\nonumber\\
\overset{(b)}{\le}&\epsilon_p\|\pi_{\alpha'}-\pi_{\alpha}\||d_1(s,a)-d_0(s,a)|\nonumber\\
&+\pi_{\alpha'}(a|s)|\pi_1(a|s)-\pi_0(a|s)|\cdot\frac{\max[d_0(s),d_1(s)]\min[d_0(s),d_1(s)]}{\min[d_0(s),d_1(s)]}\cdot S_p\|\pi_{\alpha'}-\pi_{\alpha}\|\nonumber\\
&+D^{-1}\epsilon_p|\pi_1(a|s)-\pi_0(a|s)|\cdot\big[|d_1(s,a)-d_0(s,a)|+2\pi_{\alpha}(a|s)|d_1(s)-d_0(s)|\big]\cdot|\alpha'-\alpha|\nonumber\\
\overset{(c)}{\le}&\epsilon_p D^{-1}\|\pi_1-\pi_0\|\cdot|\alpha'-\alpha|\cdot|d_1(s,a)-d_0(s,a)|\nonumber\\
&+S_p\pi_{\alpha'}(a|s)\cdot|\pi_1(a|s)-\pi_0(a|s)|\cdot[d_0(s)+d_1(s)]\cdot D^{-1}\|\pi_1-\pi_0\|\cdot|\alpha'-\alpha|\nonumber\\
&+D^{-1}\epsilon_p|\pi_1(a|s)-\pi_0(a|s)|\cdot\big[|d_1(s,a)-d_0(s,a)|+2\pi_{\alpha}(a|s)|d_1(s)-d_0(s)|\big]\cdot|\alpha'-\alpha|\nonumber\\
\overset{(d)}{\le}& \ell_{dp}(s,a)|\alpha'-\alpha|,\label{eq:pd_smooth}
\end{align}
where (a) uses Eq. (\ref{eq:pd_dalpha}), (b) uses Assumptions \ref{assum:sensitive}-\ref{assum:smooth_pr}, 
$d_{\alpha'}(s,a)=d_{\alpha'}(s)\pi_{\alpha'}(a|s)$, $d_{\alpha'}(s)=\alpha'd_1(s)+(1-\alpha')d_0(s)\ge\min[d_0(s),d_1(s)]$ and Eq. (\ref{eq:dddd}), (c) uses Assumption \ref{assum:dmin} as well as Eq. (\ref{eq:dpi_alpha_norm}), (d) defines $\ell_{dp}(s,a)$ as the following Eq. (\ref{eq:ldp}) and uses $\pi_{\alpha}(a|s)=\frac{\alpha d_1(s)\pi_1(a|s)+(1-\alpha)d_0(s)\pi_0(a|s)}{\alpha d_1(s)+(1-\alpha)d_0(s)}\le \pi_0(a|s)+\pi_1(a|s)$. 
\begin{align}
\ell_{dp}(s,a)=&2D^{-1}\epsilon_p\|\pi_1-\pi_0\||d_1(s,a)-d_0(s,a)|\nonumber\\
&\!+\!2D^{-1}\epsilon_p[\pi_1(a|s)+\pi_0(a|s)]\cdot|\pi_1(a|s)-\pi_0(a|s)|\cdot|d_1(s)-d_0(s)|\nonumber\\
&\!+\!D^{-1}S_p[\pi_1(a|s)+\pi_0(a|s)]\!\cdot\!|\pi_1(a|s)-\pi_0(a|s)|\!\cdot\!\|\pi_1-\pi_0\|\cdot[d_0(s)+d_1(s)].\label{eq:ldp}
\end{align}

Denote $e_{\alpha}(s)=d_{\pi_{\alpha},p_{\alpha}}(s)-d_{\alpha}(s)$ as the error term due to the policy-dependent transition kernel  $p_{\alpha}=p_{\pi_\alpha}$\footnote{If $p_{\pi_\alpha}\equiv p$ does not depend on the policy $\pi_\alpha$, it can be easily verified that $e_{\alpha}(s)=0$ for all $s\in\mathcal{S}$.}. Note that the occupancy measure (\ref{eq:occup}) satisfies that the Bellman equation (\ref{eq:Bellman_occup}) repeated as follows.
\begin{align}
d_{\pi,p}(s')&=(1-\gamma)\rho(s')+\gamma\sum_{s,a}d_{\pi,p}(s)\pi(a|s)p(s'|s,a),\quad s'\in\mathcal{S}.\label{eq:Bellman_occup2}
\end{align}
Therefore, the error term $e_{\alpha}(s)$ satisfies the following recursion. 
\begin{align}
&e_{\alpha}(s')\nonumber\\
=&d_{\pi_{\alpha},p_{\alpha}}(s')-\alpha d_1(s')-(1-\alpha)d_0(s')\nonumber\\
=&\gamma\sum_{s,a}[d_{\pi_{\alpha},p_{\alpha}}(s)\pi_{\alpha}(a|s)p_{\alpha}(s'|s,a)-\alpha d_{\pi_1,p_1}(s)\pi_1(a|s)p_1(s'|s,a)\nonumber\\
&-(1-\alpha)d_{\pi_0,p_0}(s)\pi_0(a|s)p_0(s'|s,a)]\nonumber\\
=&\gamma\sum_{s,a}[e_{\alpha}(s)\pi_{\alpha}(a|s)p_{\alpha}(s'|s,a)+d_{\alpha}(s,a)p_{\alpha}(s'|s,a)-\alpha d_1(s,a)p_1(s'|s,a)\nonumber\\
&-(1-\alpha)d_0(s,a)p_0(s'|s,a)].\label{eq:e_alpha_recursion}
\end{align}
The above inequality implies that
\begin{align}
&\sum_{s'}|e_{\alpha}(s')|\nonumber\\
\le&\gamma\sum_{s,a,s'}\big[|e_{\alpha}(s)|\pi_{\alpha}(a|s)p_{\alpha}(s'|s,a)\nonumber\\
&+|d_{\alpha}(s,a)p_{\alpha}(s'|s,a)\!-\!\alpha d_1(s,a)p_1(s'|s,a)\!-\!
(1-\alpha)d_0(s,a)p_0(s'|s,a)|\big]\nonumber\\
\overset{(a)}{\le}&\gamma\sum_{s}|e_{\alpha}(s)|+\frac{\gamma\alpha(1-\alpha)}{2}\sum_{s,a,s'}\ell_{dp}(s,a)\nonumber\\
\overset{(b)}{\le}&\gamma\sum_{s}|e_{\alpha}(s)|+\frac{\gamma|\mathcal{S}|\alpha(1-\alpha)}{2}\Big[2D^{-1}\epsilon_p\|\pi_1-\pi_0\|\sum_{s,a}|d_1(s,a)-d_0(s,a)|\nonumber\\
&+4D^{-1}\epsilon_p\|\pi_1-\pi_0\|_{\infty}\sum_{s}|d_1(s)-d_0(s)|+4D^{-1}S_p\|\pi_1-\pi_0\|_{\infty}\cdot\|\pi_1-\pi_0\|\Big]\nonumber\\
\overset{(c)}{\le}&\gamma\sum_{s}|e_{\alpha}(s)|\!+\!\frac{\gamma|\mathcal{S}|\alpha(1\!-\!\alpha)}{2}\Big[6D^{-1}\epsilon_p\|\pi_1\!-\!\pi_0\|\!\cdot\!\frac{1}{1-\gamma}\Big(\sqrt{|\mathcal{A}|}\|\pi_1\!-\!\pi_0\|\!+\!\gamma\sqrt{|\mathcal{S}|}\|p_{\pi_1}\!-\!p_{\pi_0}\|\Big)\nonumber\\
&+4D^{-1}S_p\|\pi_1-\pi_0\|^2\Big]\nonumber\\
\overset{(d)}{\le}&\gamma\sum_{s}|e_{\alpha}(s)|+3D^{-1}\gamma|\mathcal{S}|\alpha(1-\alpha)\|\pi_1-\pi_0\|^2\Big[\frac{\epsilon_p}{1-\gamma}(\sqrt{|\mathcal{A}|}+\gamma\epsilon_p\sqrt{|\mathcal{S}|})+S_p\Big],\nonumber
\end{align}
where (a) uses Eq. (\ref{eq:pd_smooth}) which implies that $d_{\alpha}(s,a)p_{\alpha}(s'|s,a)$ is a Lipschitz smooth function with Lipschitz constant $\ell_{dp}(s,a)$ defined by Eq. (\ref{eq:ldp}), (b) uses Eq. (\ref{eq:ldp}), (c) uses $\|\pi_1-\pi_0\|_{\infty}\le\|\pi_1-\pi_0\|$ and Lemma \ref{lemma:occup_Lip}, and (d) uses Assumption \ref{assum:sensitive}. Rearranging the above inequality, we get
\begin{align}
\sum_{s}|e_{\alpha}(s)|\le \frac{3\gamma|\mathcal{S}|\alpha(1-\alpha)}{D(1-\gamma)^2}\|\pi_1-\pi_0\|^2\big[\epsilon_p\big(\sqrt{|\mathcal{A}|}+\gamma\epsilon_p\sqrt{|\mathcal{S}|}\big)+S_p(1-\gamma)\big].\label{eq:ealpha_norm}
\end{align} 
Therefore, for any reward function $r$, we have
\begin{align}
&J_{\lambda}(\pi_{\alpha},\pi_{\alpha},p_{\alpha},r)-\alpha J_{\lambda}(\pi_1,\pi_1,p_1,r)-(1-\alpha)J_{\lambda}(\pi_0,\pi_0,p_0,r)\nonumber\\
\overset{(a)}{=}&\frac{1}{1-\gamma}\sum_{s,a}\Big[d_{\pi_{\alpha},p_{\alpha}}(s,a)[r(s,a)-\lambda\log\pi_{\alpha}(a|s)]-\alpha d_1(s,a)[r(s,a)-\lambda\log\pi_1(a|s)]\nonumber\\
&-(1-\alpha)d_0(s,a)[r(s,a)-\lambda\log\pi_0(a|s)]\Big]\nonumber\\
=&\frac{1}{1-\gamma}\sum_{s,a}\Big[[d_{\pi_{\alpha},p_{\alpha}}(s,a)-d_{\alpha}(s,a)][r(s,a)-\lambda\log\pi_{\alpha}(a|s)]+d_{\alpha}(s,a)[r(s,a)-\lambda\log\pi_{\alpha}(a|s)]\nonumber\\
&-\alpha d_1(s,a)[r(s,a)-\lambda\log\pi_1(a|s)]-(1-\alpha)d_0(s,a)[r(s,a)-\lambda\log\pi_0(a|s)]\Big]\nonumber\\
\overset{(b)}{=}&\frac{1}{1-\gamma}\sum_{s,a}[d_{\pi_{\alpha},p_{\alpha}}(s)-d_{\alpha}(s)]\pi_{\alpha}(a|s)[r(s,a)-\lambda\log\pi_{\alpha}(a|s)]\nonumber\\
&+\frac{\lambda}{1-\gamma}\sum_{s,a}\Big[\alpha d_1(s,a)\log\frac{\pi_1(a|s)}{\pi_{\alpha}(a|s)}+(1-\alpha)d_0(s,a)\log\frac{\pi_0(a|s)}{\pi_{\alpha}(a|s)}\Big]\nonumber\\
\overset{(c)}{\ge}&-\!\frac{1+\lambda\log|\mathcal{A}|}{1-\gamma}\!\sum_{s}|e_{\alpha}(s)|\nonumber\\
&+\!\frac{\lambda}{1-\gamma}\!\sum_s\!\Big[\alpha d_1(s)\!\sum_a\!\Big(\!\pi_1(a|s)\log\frac{\pi_1(a|s)}{\pi_{\alpha}(a|s)}\Big)\!+\!(1\!-\!\alpha)d_0(s)\sum_a \Big(\!\pi_0(a|s)\log\frac{\pi_0(a|s)}{\pi_{\alpha}(a|s)}\Big)\Big]\nonumber\\
\overset{(d)}{\ge}&-\frac{1+\lambda\log|\mathcal{A}|}{1-\gamma}\frac{3\gamma|\mathcal{S}|\alpha(1-\alpha)}{D(1-\gamma)^2}\|\pi_1-\pi_0\|^2\big[\epsilon_p\big(\sqrt{|\mathcal{A}|}+\gamma\epsilon_p\sqrt{|\mathcal{S}|}\big)+S_p(1-\gamma)\big]\nonumber\\
&+\frac{\lambda}{1-\gamma}\sum_s \Big[\alpha d_1(s){\rm KL}[\pi_1(\cdot|s)\|\pi_{\alpha}(\cdot|s)]+(1-\alpha)d_0(s){\rm KL}[\pi_0(\cdot|s)\|\pi_{\alpha}(\cdot|s)]\Big]\nonumber\\
\overset{(e)}{\ge}&-\frac{3\gamma|\mathcal{S}|\alpha(1-\alpha)(1+\lambda\log|\mathcal{A}|)}{D(1-\gamma)^3}\|\pi_1-\pi_0\|^2\big[\epsilon_p\big(\sqrt{|\mathcal{A}|}+\gamma\epsilon_p\sqrt{|\mathcal{S}|}\big)+S_p(1-\gamma)\big]\nonumber\\
&+\frac{\lambda}{2(1-\gamma)}\sum_s \Big[\alpha d_1(s)\|\pi_1(\cdot|s)-\pi_{\alpha}(\cdot|s)\|_1^2+(1-\alpha)d_0(s)\|\pi_0(\cdot|s)-\pi_{\alpha}(\cdot|s)\|_1^2\Big]\nonumber\\
\overset{(f)}{=}&-\frac{3\gamma|\mathcal{S}|\alpha(1-\alpha)(1+\lambda\log|\mathcal{A}|)}{D(1-\gamma)^3}\|\pi_1-\pi_0\|^2\big[\epsilon_p\big(\sqrt{|\mathcal{A}|}+\gamma\epsilon_p\sqrt{|\mathcal{S}|}\big)+S_p(1-\gamma)\big]\nonumber\\
&+\frac{\lambda}{2(1-\gamma)}\sum_{s} \Big[\alpha d_1(s)\Big\|\frac{(1-\alpha)d_0(s)}{d_{\alpha}(s)}[\pi_1(\cdot|s)-\pi_0(\cdot|s)]\Big\|_1^2\nonumber\\
&+(1-\alpha)d_0(s)\Big\|\frac{\alpha d_1(s)}{d_{\alpha}(s)}[\pi_1(\cdot|s)-\pi_0(\cdot|s)]\Big\|_1^2\Big]\nonumber\\
\overset{(g)}{=}&\frac{\lambda\alpha(1-\alpha)}{2(1-\gamma)}\sum_{s} \frac{d_0(s)d_1(s)}{d_{\alpha}(s)}\|\pi_1(\cdot|s)-\pi_0(\cdot|s)\|_1^2\nonumber\\
&-\frac{3\gamma|\mathcal{S}|\alpha(1-\alpha)(1+\lambda\log|\mathcal{A}|)}{D(1-\gamma)^3}\|\pi_1-\pi_0\|^2\big[\epsilon_p\big(\sqrt{|\mathcal{A}|}+\gamma\epsilon_p\sqrt{|\mathcal{S}|}\big)+S_p(1-\gamma)\big]\nonumber\\
\overset{(h)}{\ge}&\frac{D\lambda\alpha(1-\alpha)}{2(1-\gamma)}\|\pi_1-\pi_0\|^2\nonumber\\
&-\frac{3\gamma|\mathcal{S}|\alpha(1-\alpha)(1+\lambda\log|\mathcal{A}|)}{D(1-\gamma)^3}\|\pi_1-\pi_0\|^2\big[\epsilon_p\big(\sqrt{|\mathcal{A}|}+\gamma\epsilon_p\sqrt{|\mathcal{S}|}\big)+S_p(1-\gamma)\big]\nonumber\\
\overset{(i)}{=}&\frac{\mu_1\alpha(1-\alpha)}{2}\|\pi_1-\pi_0\|^2,\label{eq:Jstrong_fixr}
\end{align}
where (a) uses Eq. (\ref{eq:Jtau}), (b) uses $d_{\pi_{\alpha},p_{\alpha}}(s,a)=d_{\pi_{\alpha},p_{\alpha}}(s)\pi_{\alpha}(a|s)$, $d_{\alpha}(s,a)=d_{\alpha}(s)\pi_{\alpha}(a|s)$ and $d_{\alpha}=\alpha d_1+(1-\alpha)d_0$, (c) uses $r(s,a)\in[0,1]$, $-\sum_a\pi_{\alpha}(a|s)\log\pi_{\alpha}(a|s)\in[0,\log|\mathcal{A}|]$ and $e_{\alpha}(s)=d_{\pi_{\alpha},p_{\alpha}}(s)-d_{\alpha}(s)$, (d) uses Eq. (\ref{eq:2grad_inprods}), (e) uses Pinsker's inequality, (f) uses $\pi_{\alpha}(a|s)=\frac{d_{\alpha}(s,a)}{d_{\alpha}(s)}=\frac{\alpha d_1(s)}{d_{\alpha}(s)}\pi_1(a|s)+\frac{(1-\alpha)d_0(s)}{d_{\alpha}(s)}\pi_0(a|s)$, (g) uses $d_{\alpha}(s)=\alpha d_1(s)+(1-\alpha)d_0(s)$, (h) uses Assumption \ref{assum:dmin} and $d_{\alpha}(s)\le \max[d_0(s),d_1(s)]$, and (i) defines the constant $\mu_1$ below. 
\begin{align}
\mu_1\overset{\rm def}{=}\frac{D\lambda}{1-\gamma}-\frac{6\gamma|\mathcal{S}|(1+\lambda\log|\mathcal{A}|)}{D(1-\gamma)^3}\big[\epsilon_p\big(\sqrt{|\mathcal{A}|}+\gamma\epsilon_p\sqrt{|\mathcal{S}|}\big)+S_p(1-\gamma)\big].\label{eq:mu1}
\end{align} 
Next, we begin to consider the policy-dependent reward $r_{\alpha}=r_{\pi_{\alpha}}$. Define the function $w(\alpha)=\alpha J_{\lambda}(\pi_1,\pi_1,p_1,r_{\alpha})+(1-\alpha) J_{\lambda}(\pi_0,\pi_0,p_0,r_{\alpha})$, which has the following derivative 
\begin{align}
w'(\alpha)=&J_{\lambda}(\pi_1,\pi_1,p_1,r_{\alpha})-J_{\lambda}(\pi_0,\pi_0,p_0,r_{\alpha})\nonumber\\
&+[\alpha\nabla_r J_{\lambda}(\pi_1,\pi_1,p_1,r_{\alpha})+(1-\alpha)\nabla_r J_{\lambda}(\pi_0,\pi_0,p_0,r_{\alpha})](\nabla_{\pi}r_{\pi_{\alpha}})\frac{d\pi_{\alpha}}{d\alpha}\label{eq:dw}
\end{align}
For any $0\le\alpha\le\alpha'\le 1$, we prove the smoothness of $w(\alpha)$ as follows.
\begin{align}
&|w'(\alpha')-w'(\alpha)|\nonumber\\
=&\Big|\int_{\alpha}^{\alpha'}\nabla_r[J_{\lambda}(\pi_1,\pi_1,p_1,r_{\Tilde{\alpha}})-J_{\lambda}(\pi_0,\pi_0,p_0,r_{\Tilde{\alpha}})](\nabla_{\pi} r_{\pi_{\Tilde{\alpha}}})\frac{d\pi_{\Tilde{\alpha}}}{d\Tilde{\alpha}} d\Tilde{\alpha}\nonumber\\
&+[\alpha'\nabla_r J_{\lambda}(\pi_1,\pi_1,p_1,r_{\alpha'})+(1-\alpha')\nabla_r J_{\lambda}(\pi_0,\pi_0,p_0,r_{\alpha'})](\nabla_{\pi}r_{\pi_{\alpha'}})\Big(\frac{d\pi_{\alpha'}}{d\alpha'}-\frac{d\pi_{\alpha}}{d\alpha}\Big)\nonumber\\
&+[\alpha'\nabla_r J_{\lambda}(\pi_1,\pi_1,p_1,r_{\alpha'})+(1-\alpha')\nabla_r J_{\lambda}(\pi_0,\pi_0,p_0,r_{\alpha'})](\nabla_{\pi}r_{\pi_{\alpha'}}-\nabla_{\pi}r_{\pi_{\alpha}})\frac{d\pi_{\alpha}}{d\alpha}\nonumber\\
&+\big\{\alpha'[\nabla_r J_{\lambda}(\pi_1,\pi_1,p_1,r_{\alpha'})-\nabla_r J_{\lambda}(\pi_1,\pi_1,p_1,r_{\alpha})]\nonumber\\
&+(1-\alpha')[\nabla_r J_{\lambda}(\pi_0,\pi_0,p_0,r_{\alpha'})-\nabla_r J_{\lambda}(\pi_0,\pi_0,p_0,r_{\alpha})]\big\}(\nabla_{\pi}r_{\pi_{\alpha}})\frac{d\pi_{\alpha}}{d\alpha}\nonumber\\
&+(\alpha'-\alpha)[\nabla_r J_{\lambda}(\pi_1,\pi_1,p_1,r_{\alpha})-\nabla_r J_{\lambda}(\pi_0,\pi_0,p_0,r_{\alpha})](\nabla_{\pi}r_{\pi_{\alpha}})\frac{d\pi_{\alpha}}{d\alpha}\Big|\nonumber\\
\overset{(a)}{\le}&\int_{\alpha}^{\alpha'} \frac{\epsilon_r\|\pi_1-\pi_0\|}{D(1-\gamma)^2}\big(\max_{s}\|\pi_1(\cdot|s)-\pi_0(\cdot|s)\|_1+\gamma\max_{s,a} \|p_1(\cdot|s,a)-p_0(\cdot|s,a)\|_1\big)d\Tilde{\alpha}\nonumber\\
&+\frac{\epsilon_r}{1-\gamma}\cdot 2D^{-2}\|\pi_1\!-\!\pi_0\|^2\Big[\frac{\gamma(\epsilon_p\sqrt{|\mathcal{S}|}+\sqrt{|\mathcal{A}|})}{1-\gamma}\Big]|\alpha'\!-\!\alpha| +\frac{S_r\|\pi_{\alpha'}-\pi_{\alpha}\|}{1-\gamma}\cdot D^{-1}\|\pi_1\!-\!\pi_0\|\nonumber\\
&+0+|\alpha'-\alpha|\cdot \frac{\epsilon_r\|\pi_1-\pi_0\|}{D(1-\gamma)^2}\big(\max_{s}\|\pi_1(\cdot|s)-\pi_0(\cdot|s)\|_1+\gamma\max_{s,a} \|p_1(\cdot|s,a)-p_0(\cdot|s,a)\|_1\big)\nonumber\\
\overset{(b)}{\le}&2|\alpha'-\alpha|\cdot \frac{\epsilon_r\|\pi_1-\pi_0\|}{D(1-\gamma)^2}\big(\sqrt{|\mathcal{A}|}\|\pi_1-\pi_0\|+\gamma\sqrt{|\mathcal{S}|}\|p_1-p_0\|\big)\nonumber\\
&+\frac{2\epsilon_r\|\pi_1-\pi_0\|^2}{D^2(1-\gamma)}\Big[\frac{\gamma(\epsilon_p\sqrt{|\mathcal{S}|}+\sqrt{|\mathcal{A}|})}{1-\gamma}\Big]|\alpha'-\alpha|+\frac{S_r\|\pi_1-\pi_0\|^2}{D^2(1-\gamma)}|\alpha'-\alpha|\nonumber\\
\overset{(c)}{\le}&\frac{2\epsilon_r\|\pi_1-\pi_0\|}{D(1-\gamma)^2}\big(\sqrt{|\mathcal{A}|}\|\pi_1-\pi_0\|+\gamma\epsilon_p\sqrt{|\mathcal{S}|}\|\pi_1-\pi_0\|\big)|\alpha'-\alpha|\nonumber\\
&+\frac{2\gamma\epsilon_r\|\pi_1-\pi_0\|^2}{D^2(1-\gamma)^2}\big(\sqrt{|\mathcal{A}|}+\epsilon_p\sqrt{|\mathcal{S}|}\big)|\alpha'-\alpha|+\frac{S_r(1-\gamma)\|\pi_1-\pi_0\|^2}{D^2(1-\gamma)^2}|\alpha'-\alpha|\nonumber\\
\overset{(d)}{\le}&\frac{4\epsilon_r(\sqrt{|\mathcal{A}|}+\gamma\epsilon_p\sqrt{|\mathcal{S}|})+S_r(1-\gamma)}{D^2(1-\gamma)^2}\|\pi_1-\pi_0\|^2|\alpha'-\alpha|,\nonumber
\end{align}
where (a) uses Assumptions \ref{assum:sensitive}-\ref{assum:smooth_pr}, $\|\nabla_r J_{\lambda}(\cdot,\cdot,\cdot,\cdot)\|\le \frac{1}{1-\gamma}$ (implied by Eq. (\ref{eq:Lr})) as well as Eqs. (\ref{eq:lr_all}), (\ref{eq:dpi_alpha_norm}) and (\ref{eq:d2pi_alpha_norm}), (b) uses Eq. (\ref{eq:dpi_alpha_norm}) and $\|x\|_1\le \sqrt{d}\|x\|$ for any $x\in\mathbb{R}^d$, (c) uses Assumption \ref{assum:sensitive}, and (d) uses $D,\gamma\in [0,1]$. The inequality above implies that $w(\alpha)$ is $\mu_2\|\pi_1-\pi_0\|^2$-Lipschitz smooth with the constant $\mu_2$ defined as follows. 
\begin{align}
\mu_2=\frac{4\epsilon_r(\sqrt{|\mathcal{A}|}+\epsilon_p\sqrt{|\mathcal{S}|})+S_r(1-\gamma)}{D^2(1-\gamma)^2}\label{eq:mu2}
\end{align}
Therefore, 
\begin{align}
&V_{\lambda,\pi_{\alpha}}^{\pi_{\alpha}}-\alpha V_{\lambda,\pi_1}^{\pi_1}-(1-\alpha)V_{\lambda,\pi_0}^{\pi_0}\nonumber\\
=&J_{\lambda}(\pi_{\alpha},\pi_{\alpha},p_{\alpha},r_{\alpha})-\alpha J_{\lambda}(\pi_1,\pi_1,p_1,r_1)-(1-\alpha)J_{\lambda}(\pi_0,\pi_0,p_0,r_0)\nonumber\\
\overset{(a)}{\ge}&\alpha J_{\lambda}(\pi_1,\pi_1,p_1,r_{\alpha})+(1-\alpha) J_{\lambda}(\pi_0,\pi_0,p_0,r_{\alpha})+\frac{\mu_1\alpha(1-\alpha)}{2}\|\pi_1-\pi_0\|^2\nonumber\\
&-\alpha J_{\lambda}(\pi_1,\pi_1,p_1,r_1)-(1-\alpha)J_{\lambda}(\pi_0,\pi_0,p_0,r_0)\nonumber\\
=&w(\alpha)-\alpha w(1)-(1-\alpha)w(0)+\frac{\mu_1\alpha(1-\alpha)}{2}\|\pi_1-\pi_0\|^2\nonumber\\
\overset{(b)}{\ge}&\frac{(\mu_1-\mu_2)\alpha(1-\alpha)}{2}\|\pi_1-\pi_0\|^2\nonumber\\
\overset{(c)}{=}&\frac{\mu\alpha(1-\alpha)}{2}\|\pi_1-\pi_0\|^2,\label{eq:J_sc_concave}
\end{align}
where (a) uses Eq. (\ref{eq:Jstrong_fixr}) with $r$ replaced by $r_{\alpha}$, (b) uses the fact proved above that $w(\alpha)$ is $\mu_2\|\pi_1-\pi_0\|^2$-Lipschitz smooth, and (c) defines the following constant $\mu$.
\begin{align}
\mu\overset{\rm def}{=}&\mu_1-\mu_2\nonumber\\
\overset{(a)}{=}&\frac{D\lambda}{1-\gamma}-\frac{6\gamma|\mathcal{S}|(1+\lambda\log|\mathcal{A}|)}{D(1-\gamma)^3}\big[\epsilon_p\big(\sqrt{|\mathcal{A}|}+\gamma\epsilon_p\sqrt{|\mathcal{S}|}\big)+S_p(1-\gamma)\big]\nonumber\\
&-\frac{S_r(1-\gamma)+4\epsilon_r(\sqrt{|\mathcal{A}|}+\epsilon_p\sqrt{|\mathcal{S}|})}{D^2(1-\gamma)^2},\label{eq:mu}
\end{align}
where (a) uses Eqs. (\ref{eq:mu1}) and (\ref{eq:mu2}). Rearranging Eq. (\ref{eq:J_sc_concave}), we obtain that
\begin{align}
\frac{V_{\lambda,\pi_{\alpha}}^{\pi_{\alpha}}-V_{\lambda,\pi_0}^{\pi_0}}{\alpha}\ge V_{\lambda,\pi_1}^{\pi_1}-V_{\lambda,\pi_0}^{\pi_0}+\frac{\mu(1-\alpha)}{2}\|\pi_1-\pi_0\|^2.\nonumber
\end{align}
Letting $\alpha\to +0$ above, we can prove the conclusion as follows. 
\begin{align}
&V_{\lambda,\pi_1}^{\pi_1}-V_{\lambda,\pi_0}^{\pi_0}+\frac{\mu}{2}\|\pi_1-\pi_0\|^2\nonumber\\
\le& \Big[\frac{d}{d\alpha}V_{\lambda,\pi_{\alpha}}^{\pi_{\alpha}}\Big]\Big|_{\alpha=0}\nonumber\\
\le& \sum_{s,a}\frac{\partial V_{\lambda,\pi_0}^{\pi_0}}{\partial \pi_0(s,a)}\Big[\frac{d}{d\alpha}\pi_{\alpha}(a|s)\Big]\Big|_{\alpha=0}\nonumber\\
\overset{(a)}{=}&\sum_s \frac{d_1(s)}{d_0(s)}\sum_a\frac{\partial V_{\lambda,\pi_0}^{\pi_0}}{\partial \pi_0(s,a)}[\pi_1(a|s)-\pi_0(a|s)]\nonumber\\
\le& \sum_s \frac{d_1(s)}{d_0(s)}\Big[\max_{a'}\frac{\partial V_{\lambda,\pi_0}^{\pi_0}}{\partial \pi_0(s,a')}-\sum_a\pi_0(a|s)\frac{\partial V_{\lambda,\pi_0}^{\pi_0}}{\partial \pi_0(s,a)}\Big]\nonumber\\
\overset{(b)}{\le}&D^{-1}\sum_{s,a}\frac{\partial V_{\lambda,\pi_0}^{\pi_0}}{\partial \pi_0(s,a)}[\pi_0^*(a|s)-\pi_0(a|s)]\nonumber\\
\le& D^{-1}\max_{\pi\in\Pi}\big\langle \nabla_{\pi_0}V_{\lambda,\pi_0}^{\pi_0}, \pi-\pi_0\big\rangle,\nonumber
\end{align}
where (a) uses Eq. (\ref{eq:dpi_alpha}), and (b) uses Assumption \ref{assum:dmin} as well as the following Eq. (\ref{eq:greedy}) where $\pi_0^*\in\Pi$ is defined as $\pi_0^*(a^*|s)=1$ for a certain $a^*\in{\arg\max}_{a'}\frac{\partial V_{\lambda,\pi_0}^{\pi_0}}{\partial \pi_0(s,a')}$ and $\pi_0^*(a'|s)=0$ for $a'\ne a^*$.
\begin{align}
    \sum_a\pi_0^*(a|s)\frac{\partial V_{\lambda,\pi_0}^{\pi_0}}{\partial \pi_0(s,a)}=\max_{a'}\frac{\partial V_{\lambda,\pi_0}^{\pi_0}}{\partial \pi_0(s,a')}\ge \sum_a\pi_0(a|s)\frac{\partial V_{\lambda,\pi_0}^{\pi_0}}{\partial \pi_0(s,a)}.\label{eq:greedy}
\end{align} 

\section{Proof of Corollary \ref{coro:stat2PO}}
Based on Theorem \ref{thm:ToOpt}, Eq. (\ref{eq:mu}) holds for any $\pi_0,\pi_1\in\Pi$ as repeated below. 
\begin{align}
V_{\lambda,\pi_1}^{\pi_1}\le& V_{\lambda,\pi_0}^{\pi_0}+D^{-1}\max_{\pi\in\Pi}\big\langle \nabla_{\pi_0}V_{\lambda,\pi_0}^{\pi_0},\pi-\pi_0\big\rangle-\frac{\mu}{2}\|\pi_1-\pi_0\|^2,\label{eq:ToOpt2}
\end{align}
In the above inequality, let $\pi_1\in{\arg\max}_{\pi\in\Pi} V_{\lambda,\pi}^{\pi}$ and $\pi_0=\pi$ is any a $D\epsilon$-stationary policy of interest. Then the inequality above becomes
\begin{align}
\max_{\Tilde{\pi}\in\Pi} V_{\lambda,\Tilde{\pi}}^{\Tilde{\pi}} \le& V_{\lambda,\pi}^{\pi}+D^{-1}\cdot D\epsilon-\frac{\mu}{2}\|\pi_1-\pi\|^2 \overset{\rm (a)}{\le}V_{\lambda,\pi}^{\pi}+\epsilon+|\mu||\mathcal{S}|,\nonumber
\end{align}
where (a) uses Lemma \ref{lemma:pi_diameter}. This implies that $\max_{\Tilde{\pi}\in\Pi} V_{\lambda,\Tilde{\pi}}^{\Tilde{\pi}}-V_{\lambda,\pi}^{\pi}\le \epsilon+|\mu||\mathcal{S}|$, that is, the $D\epsilon$-stationary policy $\pi$ is also an $(\epsilon+|\mu||\mathcal{S}|)$-PO policy. 

If $\mu\ge 0$, the inequality above further implies that $\max_{\Tilde{\pi}\in\Pi} V_{\lambda,\Tilde{\pi}}^{\Tilde{\pi}}-V_{\lambda,\pi}^{\pi}\le \epsilon$, that is, the $D\epsilon$-stationary policy $\pi$ is also an $\epsilon$-PO policy. 

Furthermore, suppose $\mu>0$ and there are two PO policies $\pi_0,\pi_1\in\Pi$, which should satisfy
\begin{align}
&V_{\lambda,\pi_1}^{\pi_1}=V_{\lambda,\pi_0}^{\pi_0}=\max_{\pi\in\Pi}V_{\lambda,\pi}^{\pi},\nonumber\\
&\max_{\pi\in\Pi}\big\langle \nabla_{\pi_0}V_{\lambda,\pi_0}^{\pi_0},\pi-\pi_0\big\rangle=0.\nonumber
\end{align}
Substituting the two equalities above into Eq. (\ref{eq:ToOpt}), we obtain that $\frac{\mu}{2}\|\pi_1-\pi_0\|^2\le 0$, which along with $\mu>0$ implies $\pi_1=\pi_0$, that is, the PO policy is unique. 

\section{Proof of Theorem \ref{thm:pi_ge}}\label{sec:proof_thm:pi_ge}
For any $\pi\in\Pi$, $p\in\mathcal{P}$, $r\in\mathcal{R}$, we have
\begin{align}
\frac{\partial J_{\lambda}(\pi,\pi,p,r)}{\partial\pi(a|s)}\overset{(a)}{=}&\frac{d_{\pi,p}(s)[Q_{\lambda}(\pi,\pi,p,r;s,a)-\lambda]}{1-\gamma}\nonumber\\
\overset{(b)}{=}&\frac{d_{\pi,p}(s)}{1-\gamma}\Big[r(s,a)-\lambda-\lambda\log\pi(a|s)+\gamma\sum_{s'}p(s'|s,a)V_{\lambda}(\pi,p,r;s')\Big],\label{eq:pi_grad}
\end{align}
where (a) uses Eqs. (\ref{eq:dJ5}), and (b) uses Eq. (\ref{eq:Qtau}). 

Then we have
\begin{align}
&\nabla_{\pi} J_{\lambda}(\pi,\pi,p,r)^{\top}(\pi'-\pi)\nonumber\\
=&\sum_s\Big[\frac{\partial J_{\lambda}(\pi,\pi,p,r)}{\partial\pi[a_{\max}(s)|s]}\big(\pi'[a_{\max}(s)|s]-\pi[a_{\max}(s)|s]\big) \nonumber\\
&+ \frac{\partial J_{\lambda}(\pi,\pi,p,r)}{\partial\pi[a_{\min}(s)|s]}\big(\pi'[a_{\min}(s)|s]-\pi[a_{\min}(s)|s]\big)\Big]\nonumber\\
=&\sum_s\Big\{\frac{d_{\pi,p}(s)}{1-\gamma}\big(\pi[a_{\max}(s)|s]-\pi[a_{\min}(s)|s]\big)\Big[r[s,a_{\min}(s)]-r[s,a_{\max}(s)]\nonumber\\
&+\lambda\log\frac{\pi[a_{\max}(s)|s]}{\pi[a_{\min}(s)|s]}+\gamma\sum_{s'}[p(s'|s,a_{\min}(s))-p(s'|s,a_{\max}(s))]V_{\lambda}(\pi,p,r;s')\Big]\Big\}\nonumber\\
\overset{(a)}{\ge}&\frac{1}{1-\gamma}\max_s\Big\{\!\big(\pi[a_{\max}(s)|s]\!-\!\pi[a_{\min}(s)|s]\big)\Big[\lambda\log\frac{\pi[a_{\max}(s)|s]}{\pi[a_{\min}(s)|s]}\!-\!1\!-\!\frac{\gamma(1+\lambda\log|\mathcal{A}|)}{1-\gamma}\Big]\!\Big\},\label{eq:inprod_ge}
\end{align}
where (a) uses $\pi[a_{\max}(s)|s]-\pi[a_{\min}(s)|s]\ge0$, $r(a|s)\in [0,1]$, $p(s'|s,a)\in [0,1]$ for any $s,a,s'$ and Lemma \ref{lemma:Jrange}. 

Consider the following two cases.

(Case I) If $\pi[a_{\min}(s)|s]\ge \frac{1}{2}\pi[a_{\max}(s)|s]$, then as $\pi[a_{\max}(s)|s]\ge \frac{1}{|\mathcal{A}|}$, we have $\pi[a_{\min}(s)|s]\ge \frac{1}{2|\mathcal{A}|}$. 

(Case II) $\pi[a_{\min}(s)|s]<\frac{1}{2}\pi[a_{\max}(s)|s]$, then as $\pi[a_{\max}(s)|s]\ge \frac{1}{|\mathcal{A}|}$, Eq. (\ref{eq:inprod_ge}) implies that
\begin{align}
&\nabla_{\pi} J_{\lambda}(\pi,\pi,p,r)^{\top}(\pi'-\pi)\nonumber\\
\ge&\max_s\Big\{\frac{\pi[a_{\max}(s)|s]}{2(1-\gamma)}\Big[\lambda\log\frac{1}{|\mathcal{A}|\pi[a_{\min}(s)|s]}-\frac{1+\gamma\lambda\log|\mathcal{A}|}{1-\gamma}\Big]\Big\}\nonumber\\
\ge&-\frac{1}{2|\mathcal{A}|(1-\gamma)}\Big[\lambda\log\big(|\mathcal{A}|\min_s\pi[a_{\min}(s)|s]\big)+\frac{1+\gamma\lambda\log|\mathcal{A}|}{1-\gamma}\Big], 
\end{align}
which further implies that for any $s\in\mathcal{S}$ and $a\in\mathcal{A}$, we have
\begin{align}
\pi(a|s)\ge& \pi[a_{\min}(s)|s]\nonumber\\
\ge&\frac{1}{|\mathcal{A}|}\exp\Big[-\frac{1/\lambda+\gamma\log|\mathcal{A}|}{1-\gamma}-\frac{2|\mathcal{A}|}{\lambda}(1-\gamma)\nabla_{\pi} J_{\lambda}(\pi,\pi,p,r)^{\top}(\pi'-\pi)\Big]\nonumber\\
\ge&\frac{1}{2|\mathcal{A}|^{1/(1-\gamma)}}\exp\Big[-\frac{1}{\lambda(1-\gamma)}-\frac{2|\mathcal{A}|}{\lambda}(1-\gamma)\nabla_{\pi} J_{\lambda}(\pi,\pi,p,r)^{\top}(\pi'-\pi)\Big],\label{eq:pi_ge3}
\end{align}
Note that in the two cases above, Eq. (\ref{eq:pi_ge3}) always holds. 

Furthermore, if Assumption \ref{assum:sensitive} holds and $p_{\pi}$, $r_{\pi}$ are differentiable functions of $\pi$, then we have
\begin{align}
&\big\|\nabla_{\pi} J_{\lambda}(\pi,\pi,p_{\pi},r_{\pi})-\nabla_{\pi} J_{\lambda}(\pi,\pi,p_{\Tilde{\pi}},r_{\Tilde{\pi}})|_{\Tilde{\pi}=\pi}\big\|\nonumber\\
=&\big\|\nabla_p J_{\lambda}(\pi,\pi,p_{\pi},r_{\pi}) \nabla_{\pi}p_{\pi}+\nabla_r J_{\lambda}(\pi,\pi,p_{\pi},r_{\pi}) \nabla_{\pi}r_{\pi}\big\|\nonumber\\
\le&\big\|\nabla_p J_{\lambda}(\pi,\pi,p_{\pi},r_{\pi}) \big\|\big\|\nabla_{\pi}p_{\pi}\big\|+\big\|\nabla_r J_{\lambda}(\pi,\pi,p_{\pi},r_{\pi}) \big\|\big\|\nabla_{\pi}r_{\pi}\big\|\nonumber\\
\overset{(a)}{\le}&\frac{\epsilon_p\sqrt{|\mathcal{S}|}(1+\lambda\log|\mathcal{A}|)}{(1-\gamma)^2}+\frac{\epsilon_r}{1-\gamma},\label{eq:dJ_diff}
\end{align}
where (a) uses Assumption \ref{assum:sensitive} as well as Eqs. (\ref{eq:Lp}) and (\ref{eq:Lr}). Therefore, 
\begin{align}
&\big[\nabla_{\pi} J_{\lambda}(\pi,\pi,p_{\Tilde{\pi}},r_{\Tilde{\pi}})|_{\Tilde{\pi}=\pi}\big]^{\top}(\pi'-\pi)\nonumber\\
=&\nabla_{\pi} J_{\lambda}(\pi,\pi,p_{\pi},r_{\pi})^{\top}(\pi'-\pi)-\big[\nabla_{\pi} J_{\lambda}(\pi,\pi,p_{\pi},r_{\pi})-\nabla_{\pi} J_{\lambda}(\pi,\pi,p_{\Tilde{\pi}},r_{\Tilde{\pi}})|_{\Tilde{\pi}=\pi}\big]^{\top}(\pi'-\pi)\nonumber\\
\le&\nabla_{\pi} J_{\lambda}(\pi,\pi,p_{\pi},r_{\pi})^{\top}(\pi'-\pi)+\big\|\nabla_{\pi} J_{\lambda}(\pi,\pi,p_{\pi},r_{\pi})-\nabla_{\pi} J_{\lambda}(\pi,\pi,p_{\Tilde{\pi}},r_{\Tilde{\pi}})|_{\Tilde{\pi}=\pi}\big\|\|\pi'-\pi\|\nonumber\\
\overset{(a)}{\le}&\nabla_{\pi} J_{\lambda}(\pi,\pi,p_{\pi},r_{\pi})^{\top}(\pi'-\pi)+\sqrt{2|\mathcal{S}|}\Big(\frac{\epsilon_p\sqrt{|\mathcal{S}|}(1+\lambda\log|\mathcal{A}|)}{(1-\gamma)^2}+\frac{\epsilon_r}{1-\gamma}\Big),\label{eq:inprod_le}
\end{align}
where (a) uses Eq. (\ref{eq:dJ_diff}) and Lemma \ref{lemma:pi_diameter}. Substituting $p=p_{\pi}$, $r=r_{\pi}$ and then Eq. (\ref{eq:inprod_le}) into Eq. (\ref{eq:pi_ge3}), we can prove Eq. (\ref{eq:pi_ge2}) as follows. 
\begin{align}
\pi(a|s)\ge&\frac{1}{2|\mathcal{A}|^{1/(1-\gamma)}}\exp\Big\{-\frac{1}{\lambda(1-\gamma)}-\frac{2|\mathcal{A}|}{\lambda}(1-\gamma)\cdot\nonumber\\
&\Big[\nabla_{\pi} J_{\lambda}(\pi,\pi,p_{\pi},r_{\pi})^{\top}(\pi'-\pi)+\sqrt{2|\mathcal{S}|}\Big(\frac{\epsilon_p\sqrt{|\mathcal{S}|}(1+\lambda\log|\mathcal{A}|)}{(1-\gamma)^2}+\frac{\epsilon_r}{1-\gamma}\Big)\Big]\Big\}\nonumber\\
=&\pi_{\min}\exp\Big[-\frac{2|\mathcal{A}|}{\lambda}(1-\gamma)\langle\nabla_{\pi} V_{\lambda,\pi}^{\pi},\pi'-\pi\rangle\Big],\nonumber
\end{align}
where the $=$ uses $V_{\lambda,\pi}^{\pi}=J_{\lambda}(\pi,\pi,p_{\pi},r_{\pi})$ and $\pi_{\min}$ defined as follows. 
\begin{align}
\pi_{\min}&\overset{\rm def}{=}\frac{1}{2|\mathcal{A}|^{1/(1-\gamma)}}\exp\Big\{-\frac{1}{\lambda(1-\gamma)}-\frac{2|\mathcal{A}|\sqrt{2|\mathcal{S}|}}{\lambda}\Big[\frac{\epsilon_p\sqrt{|\mathcal{S}|}(1+\lambda\log|\mathcal{A}|)}{1-\gamma}+\epsilon_r\Big]\Big\},\label{eq:pi_min}
\end{align}

\section{Proof of Theorem \ref{thm:V_Lip}}\label{sec:proof_thm:V_Lip}
For any policies $\pi,\pi'$, we have
\begin{align}
&|V_{\lambda,\pi'}^{\pi'}-V_{\lambda,\pi}^{\pi}|\nonumber\\
\le&|J_{\lambda}(\pi',p_{\pi'},r_{\pi'})-J_{\lambda}(\pi,p_{\pi},r_{\pi})|\nonumber\\ 
\le& |J_{\lambda}(\pi',p_{\pi'},r_{\pi'})-J_{\lambda}(\pi',p_{\pi'},r_{\pi})|+|J_{\lambda}(\pi',p_{\pi'},r_{\pi})-J_{\lambda}(\pi',p_{\pi},r_{\pi})|\nonumber\\
&+|J_{\lambda}(\pi',p_{\pi},r_{\pi})-J_{\lambda}(\pi,p_{\pi},r_{\pi})|\nonumber\\ 
\overset{(a)}{\le}& \frac{\|r_{\pi'}-r_{\pi}\|}{1-\gamma}+L_p\|p_{\pi'}-p_{\pi}\|+L_{\pi}\max_{s}\|\log\pi'(\cdot|s)-\log\pi(\cdot|s)\|\nonumber\\
\overset{(b)}{\le}& \Big(L_p\epsilon_p+\frac{\epsilon_r}{1-\gamma}\Big)\|\pi'-\pi\|+L_{\pi}\sqrt{\sum_{s}\|\log\pi'(\cdot|s)-\log\pi(\cdot|s)\|^2}\nonumber\\
\overset{(c)}{\le}& \Big(L_p\epsilon_p+\frac{\epsilon_r}{1-\gamma}\Big)\|\log\pi'-\log\pi\|+L_{\pi}\|\log\pi'-\log\pi\|\nonumber\\
\overset{(d)}{=}& L_{\lambda}\|\log\pi'-\log\pi\|,\label{eq:V_Lip_logpi}
\end{align}
where (a) uses Eqs. (\ref{eq:Lpi}), (\ref{eq:Lp}) and (\ref{eq:Lr}), (b) uses Assumption \ref{eq:sensitive}, (c) uses $|\log y-\log x|\le |y-x|$ for any $x,y\in\mathbb{R}$, and (d) defines the following constant. 
\begin{align}
L_{\lambda}=L_p\epsilon_p+\frac{\epsilon_r}{1-\gamma}+L_{\pi}=\frac{\sqrt{|\mathcal{A}|}(2-\gamma+\gamma\lambda\log|\mathcal{A}|)+\epsilon_p\sqrt{|\mathcal{S}|}(1+\lambda\log|\mathcal{A}|)+\epsilon_r(1-\gamma)}{(1-\gamma)^2}.\nonumber
\end{align}
\begin{align}
L_{\lambda}\!\overset{\rm def}{=}&L_p\epsilon_p+\frac{\epsilon_r}{1-\gamma}+L_{\pi}=\frac{\sqrt{|\mathcal{A}|}(2\!-\!\gamma\!+\!\gamma\lambda\log|\mathcal{A}|)\!+\!\epsilon_p\sqrt{|\mathcal{S}|}(1\!+\!\lambda\log|\mathcal{A}|)}{(1-\gamma)^2}+\frac{\epsilon_r}{1-\gamma}\label{eq:Ltau}
\end{align}

Note that for any $u,v\ge \Delta>0$, 
\begin{align}
|\log u-\log v|=&\log\max(u,v)-\log\min(u,v)\nonumber\\
=&\int_{\min(u,v)}^{\max(u,v)}\frac{1}{x}dx\le \frac{1}{\Delta}[\max(u,v)-\min(u,v)]=\frac{|u-v|}{\Delta}.\nonumber
\end{align}
Therefore, for any $\pi,\pi'\in\Pi_{\Delta}\overset{\rm def}{=}\{\pi\in\Pi:\pi(a|s)\ge\Delta\}$, we have
\begin{align}
\|\log\pi'-\log\pi\|^2=&\sum_{s,a}|\log\pi'(a|s)-\log\pi(a|s)|^2\nonumber\\
\le& \Delta^{-2}\sum_{s,a}|\pi'(a|s)-\pi(a|s)|^2=\Delta^{-2}\|\pi'-\pi\|^2.\nonumber
\end{align}
Substituting the above inequality into Eq. (\ref{eq:V_Lip_logpi}) proves the first inequality of Eq. (\ref{eq:V_Lip_logpi}). 

Next, we will prove the second inequality of Eq. (\ref{eq:V_Lip_logpi}) about the Lipschitz continuity of the following performative policy gradient.
\begin{align}
\nabla_{\pi}V_{\lambda,\pi}^{\pi}=&\nabla_{\pi}J_{\lambda}(\pi,\pi,p_{\pi},r_{\pi})\nonumber\\
=&\nabla_{\pi}J_{\lambda}(\pi,\pi,p_{\Tilde{\pi}},r_{\Tilde{\pi}})|_{\Tilde{\pi}=\pi}+(\nabla_{\pi}p_{\pi})\nabla_{p_{\pi}}J_{\lambda}(\pi,\pi,p_{\pi},r_{\pi})+(\nabla_{\pi}r_{\pi})\nabla_{r_{\pi}}J_{\lambda}(\pi,\pi,p_{\pi},r_{\pi}).\nonumber
\end{align}
For any $\pi, \pi'\in\Pi_{\Delta}$, we have 
\begin{align}
&\|\nabla_{\pi'}V_{\lambda,\pi'}^{\pi'}-\nabla_{\pi}V_{\lambda,\pi}^{\pi}\|\nonumber\\
\le&\big\|\nabla_{\pi'}J_{\lambda}(\pi',\pi',p_{\Tilde{\pi}},r_{\Tilde{\pi}})|_{\Tilde{\pi}=\pi'}-\nabla_{\pi}J_{\lambda}(\pi,\pi,p_{\Tilde{\pi}},r_{\Tilde{\pi}})|_{\Tilde{\pi}=\pi}\big\|\nonumber\\
&+\|\nabla_{\pi'}p_{\pi'}\|\cdot\|\nabla_{p_{\pi'}}J_{\lambda}(\pi',\pi',p_{\pi'},r_{\pi'})-\nabla_{p_{\pi}}J_{\lambda}(\pi,\pi,p_{\pi},r_{\pi})\|\nonumber\\
&+\|\nabla_{p_{\pi}}J_{\lambda}(\pi,\pi,p_{\pi},r_{\pi})\|\cdot\|\nabla_{\pi'}p_{\pi'}-\nabla_{\pi}p_{\pi}\|\nonumber\\
&+\|\nabla_{\pi'}r_{\pi'}\|\cdot\|\nabla_{r_{\pi'}}J_{\lambda}(\pi',\pi',p_{\pi'},r_{\pi'})-\nabla_{r_{\pi}}J_{\lambda}(\pi,\pi,p_{\pi},r_{\pi})\|\nonumber\\
&+\|\nabla_{r_{\pi}}J_{\lambda}(\pi,\pi,p_{\pi},r_{\pi})\|\cdot\|\nabla_{\pi'}r_{\pi'}-\nabla_{\pi}r_{\pi}\|\nonumber\\
\overset{(a)}{\le}&\Big(\frac{|\mathcal{A}|(1+2\lambda\log|\mathcal{A}|)}{(1-\gamma)^2}+\gamma L_{\pi}\Big)\max_{s}\|\log\pi'(\cdot|s)-\log\pi(\cdot|s)\| \nonumber\\
&+\Big[\frac{2(1+\lambda\log|\mathcal{A}|)}{(1-\gamma)^2}+\gamma L_p\Big]\sqrt{|\mathcal{S}||\mathcal{A}|}\|p_{\pi'}-p_{\pi}\| +\frac{\sqrt{|\mathcal{A}|}\|r_{\pi'}-r_{\pi}\|_{\infty}}{1-\gamma}\nonumber\\
&+\epsilon_p\Big[\ell_{\pi}\max_{s}\|\log\pi'(\cdot|s)-\log\pi(\cdot|s)\|+\ell_{p}\|p_{\pi'}-p_{\pi}\|+\frac{2-\gamma}{1-\gamma}\sqrt{|\mathcal{S}|}\|r_{\pi'}-r_{\pi}\|_{\infty}\Big]\nonumber\\
&+L_pS_p\|\pi'-\pi\| +\frac{\gamma\epsilon_r}{(1-\gamma)^2}\big(\max_{s}\|\pi'(\cdot|s)-\pi(\cdot|s)\|_1+\max_{s,a} \|p_{\pi'}(\cdot|s,a)-p_{\pi}(\cdot|s,a)\|_1\big)\nonumber\\
&+\frac{S_r}{1-\gamma}\|\pi'-\pi\|\nonumber\\
\overset{(b)}{\le}&\Big(\frac{|\mathcal{A}|(1+2\lambda\log|\mathcal{A}|)}{\Delta(1-\gamma)^2}+\frac{\gamma L_{\pi}}{\Delta}\Big)\|\pi'-\pi\| +\epsilon_p\sqrt{|\mathcal{S}||\mathcal{A}|}\Big[\frac{2(1+\lambda\log|\mathcal{A}|)}{(1-\gamma)^2}+\gamma L_p\Big]\|\pi'-\pi\|\nonumber\\
&+\frac{\epsilon_r\sqrt{|\mathcal{A}|}\|\pi'-\pi\|}{1-\gamma} +\epsilon_p\Big[\frac{\ell_{\pi}}{\Delta}\|\pi'-\pi\|+\ell_{p}\epsilon_p\|\pi'-\pi\|+\frac{2-\gamma}{1-\gamma}\epsilon_r\sqrt{|\mathcal{S}|}\|\pi'-\pi\|\Big]\nonumber\\
&+L_pS_p\|\pi'-\pi\| +\frac{\gamma\epsilon_r}{(1-\gamma)^2}\big(\sqrt{|\mathcal{S}|}\|\pi'-\pi\|+\epsilon_p\sqrt{|\mathcal{S}|}\|\pi'-\pi\|\big) +\frac{S_r}{1-\gamma}\|\pi'-\pi\|\nonumber\\
\overset{(c)}{\le}&\Big(\frac{|\mathcal{A}|(1+2\lambda\log|\mathcal{A}|)}{\Delta(1-\gamma)^2}+\frac{\gamma L_{\pi}}{\Delta}\Big)\|\pi'-\pi\| +\frac{\epsilon_p}{\Delta}\sqrt{\frac{|\mathcal{S}|}{|\mathcal{A}|}}\Big[\frac{2(1+\lambda\log|\mathcal{A}|)}{(1-\gamma)^2}+\gamma L_p\Big]\|\pi'-\pi\|\nonumber\\
&+\frac{\epsilon_r\|\pi'-\pi\|}{\Delta\sqrt{|\mathcal{A}|}(1-\gamma)} +\frac{\epsilon_p}{\Delta}\Big[\ell_{\pi} +\frac{\ell_{p}\epsilon_p}{|\mathcal{A}|}+\frac{2-\gamma}{|\mathcal{A}|(1-\gamma)}\epsilon_r\sqrt{|\mathcal{S}|}\Big]\|\pi'-\pi\|\nonumber\\
& +\frac{\gamma\epsilon_r\sqrt{|\mathcal{S}|}(1+\epsilon_p)}{\Delta|\mathcal{A}|(1-\gamma)^2}\|\pi'-\pi\| +\frac{L_pS_p+S_r/(1-\gamma)}{\Delta|\mathcal{A}|}\|\pi'-\pi\|\nonumber\\
\overset{(d)}{\le}&\Big(\frac{|\mathcal{A}|(1+2\lambda\log|\mathcal{A}|)}{\Delta(1-\gamma)^2}+\frac{\gamma\sqrt{|\mathcal{A}|}(2-\gamma+\gamma\lambda\log|\mathcal{A}|)}{\Delta(1-\gamma)^2}\Big)\|\pi'-\pi\|\nonumber\\
&+\frac{\epsilon_p}{\Delta}\sqrt{\frac{|\mathcal{S}|}{|\mathcal{A}|}}\Big[\frac{2(1+\lambda\log|\mathcal{A}|)}{(1-\gamma)^2}+\frac{\gamma\sqrt{|\mathcal{S}|}(1+\lambda\log|\mathcal{A}|)}{(1-\gamma)^2}\Big]\|\pi'-\pi\|\nonumber\\
&+\frac{\epsilon_p}{\Delta}\Big[\frac{\sqrt{|\mathcal{S}||\mathcal{A}|}(2+3\gamma\lambda\log|\mathcal{A}|)}{(1-\gamma)^3} +\frac{2\epsilon_p\gamma|\mathcal{S}|(1+\lambda\log|\mathcal{A}|)}{|\mathcal{A}|(1-\gamma)^3}+\frac{2-\gamma}{|\mathcal{A}|(1-\gamma)}\epsilon_r\sqrt{|\mathcal{S}|}\Big]\|\pi'-\pi\|\nonumber\\
&+\frac{\epsilon_r\sqrt{|\mathcal{A}|}(1-\gamma)+\gamma\epsilon_r\sqrt{|\mathcal{S}|}(1+\epsilon_p)}{\Delta|\mathcal{A}|(1-\gamma)^2}\|\pi'-\pi\| \nonumber\\
&+\frac{S_p\sqrt{|\mathcal{S}|}(1+\lambda\log|\mathcal{A}|)+S_r(1-\gamma)}{\Delta|\mathcal{A}|(1-\gamma)^2}\|\pi'-\pi\|\nonumber\\
\le&\frac{3|\mathcal{A}|(1+\lambda\log|\mathcal{A}|)}{\Delta(1-\gamma)^2}\|\pi'-\pi\|+\frac{\epsilon_p\sqrt{|\mathcal{S}||\mathcal{A}|}(5+6\lambda\log|\mathcal{A}|)}{\Delta(1-\gamma)^3}\|\pi'-\pi\|\nonumber\\
&+\frac{\epsilon_r\big[\sqrt{|\mathcal{A}|}(1-\gamma)+\sqrt{|\mathcal{S}|}(\gamma+2\epsilon_p)\big]+S_p\sqrt{|\mathcal{S}|}(1+\lambda\log|\mathcal{A}|)+S_r(1-\gamma)}{\Delta|\mathcal{A}|(1-\gamma)^2}\|\pi'-\pi\|\label{eq:dV_Lip_tmp},
\end{align} 
where (a) uses Eqs. (\ref{eq:Lp}), (\ref{eq:Lr}) and (\ref{eq:lp_all})-(\ref{eq:Lip_pi}) as well as Assumptions \ref{assum:sensitive}-\ref{assum:smooth_pr}, and (b) uses the following bounds for any $\pi,\pi'\in\Delta$, (c) uses $\Delta\le |\mathcal{A}|^{-1}$ (since for any $\pi\in\Pi_{\Delta}$, $1=\sum_{a}\pi(a|s)\ge \Delta|\mathcal{A}|$), (d) uses $L_{\pi}:=\frac{\sqrt{|\mathcal{A}|}(2-\gamma+\gamma\lambda\log|\mathcal{A}|)}{(1-\gamma)^2}$, $L_p:=\frac{\sqrt{|\mathcal{S}|}(1+\lambda\log|\mathcal{A}|)}{(1-\gamma)^2}$, $\ell_{\pi}:=\frac{\sqrt{|\mathcal{S}||\mathcal{A}|}(2+3\gamma\lambda\log|\mathcal{A}|)}{(1-\gamma)^3}$ and $\ell_p:=\frac{2\gamma|\mathcal{S}|(1+\lambda\log|\mathcal{A}|)}{(1-\gamma)^3}$ defined in Lemma \ref{lemma:J_lip}, (e) uses $\ell_{\lambda}$ defined by Eq. (\ref{eq:ell_tau}). 
\begin{align}
\max_{s}\|\log\pi'(\cdot|s)-\log\pi(\cdot|s)\|\le&\Delta^{-1}\max_{s}\|\pi'(\cdot|s)-\pi(\cdot|s)\|\le \Delta^{-1}\|\pi'-\pi\|,\nonumber\\
\|p_{\pi'}-p_{\pi}\|\overset{(a)}{\le}&\epsilon_p\|\pi'-\pi\|,\nonumber\\
\|r_{\pi'}-r_{\pi}\|_{\infty}\le&\|r_{\pi'}-r_{\pi}\| \overset{(a)}{\le} \epsilon_r\|\pi'-\pi\|,\nonumber\\
\max_{s}\|\pi'(\cdot|s)-\pi(\cdot|s)\|_1\le&\sqrt{|\mathcal{S}|}\max_{s}\|\pi'(\cdot|s)-\pi(\cdot|s)\|\le \sqrt{|\mathcal{S}|}\|\pi'-\pi\|,\nonumber\\
\max_{s,a} \|p_{\pi'}(\cdot|s,a)-p_{\pi}(\cdot|s,a)\|_1\le& \sqrt{|\mathcal{S}|}\max_{s,a} \|p_{\pi'}(\cdot|s,a)-p_{\pi}(\cdot|s,a)\|\nonumber\\
\le&\sqrt{|\mathcal{S}|}\|p_{\pi'}-p_{\pi}\|\overset{(a)}{\le}\epsilon_p\sqrt{|\mathcal{S}|}\|\pi'-\pi\|.\nonumber
\end{align}
Here, (a) uses Assumption \ref{assum:sensitive}. Finally, define the Lipschitz constant $\ell_{\lambda}$ as follows and thus Eq. (\ref{eq:dV_Lip_tmp}) implies the second inequality of Eq. (\ref{eq:V_Lip_logpi}) that $\|\nabla_{\pi'}V_{\lambda,\pi'}^{\pi'}-\nabla_{\pi}V_{\lambda,\pi}^{\pi}\|\le\frac{\ell_{\lambda}}{\Delta}\|\pi'-\pi\|$.
\begin{align}
\ell_{\lambda}\overset{\rm def}{=}&\frac{3|\mathcal{A}|(1+\lambda\log|\mathcal{A}|)}{(1-\gamma)^2}+\frac{\epsilon_p\sqrt{|\mathcal{S}||\mathcal{A}|}(5+6\lambda\log|\mathcal{A}|)}{(1-\gamma)^3}\nonumber\\
&+\frac{\epsilon_r\big[\sqrt{|\mathcal{A}|}(1-\gamma)+\sqrt{|\mathcal{S}|}(\gamma+2\epsilon_p)\big]}{|\mathcal{A}|(1-\gamma)^2}+\frac{S_p\sqrt{|\mathcal{S}|}(1+\lambda\log|\mathcal{A}|)+S_r(1-\gamma)}{|\mathcal{A}|(1-\gamma)^2}. \label{eq:ell_tau}
\end{align}

\section{Proof of Proposition \ref{prop:grad_err}}\label{sec:proof_prop_grad_err}
We prove the validity of the stochastic gradient (\ref{eq:0ppg}) first. For any $\pi\in\Pi_{\Delta}$, $s\in\mathcal{S}$ and $a\in\mathcal{A}$, we have $\pi(a|s)\ge \Delta$, so $\pi(a|s)\le 1-\Delta$ (since $\sum_{a'}\pi(a'|s)=1$). For any $u_i\in U_1$, we have $|u_i(a|s)|\le 1$. Therefore,
\begin{align}
(\pi\pm\delta u_i)(a|s) \ge \pi(a|s)-\delta|u_i(a|s)|\ge \Delta-\delta>0,
\end{align}
which means $\pi\pm\delta u_i\in\Pi$. Hence, $V_{\lambda,\pi'}^{\pi'}$ is well defined for $\pi'\in\{\pi+\delta u_i, \pi-\delta u_i\}$. 

Then we will prove the estimation error bound (\ref{eq:grad_err}). Based on Lemma \ref{lemma:orthoT}, there exists an orthogonal transformation $\mathcal{T}:\mathbb{R}^{|\mathcal{A}|}\!\to\! \mathcal{Z}_{|\mathcal{A}|-1}\!=\!\{z\!=\![z_1,\ldots,z_{|\mathcal{A}|}]\in\mathbb{R}^{|\mathcal{A}|}:\sum_i z_i\!=\!0\}$. 

Note that any $x\in\mathbb{R}^{|\mathcal{S}|(|\mathcal{A}|-1)}$ can be written as $x=[x_s]_{s\in\mathcal{S}}$, a concatenation of $|\mathcal{S}|$ vectors $x_s\in\mathbb{R}^{|\mathcal{A}|}$. Therefore, we can define the transformation $T: \mathbb{R}^{|\mathcal{S}|(|\mathcal{A}|-1)}\to \mathcal{L}_0\overset{\rm def}{=}\big\{u\in\mathbb{R}^{|\mathcal{S}||\mathcal{A}|}\!:u(\cdot|s)\in\mathcal{Z}_{|\mathcal{A}|-1}, \forall s\in\mathcal{S}\big\}$ as follows
\begin{align}
[T(x)](\cdot|s)=\mathcal{T}(x_s), \forall s\in\mathcal{S}
\end{align}
where $x_s\in\mathbb{R}^{|\mathcal{A}|}$ are extracted from $|\mathcal{A}|$ entries of $x=[x_s]_{s\in\mathcal{S}}$. For any $x=[x_s]_{s\in\mathcal{S}}, y=[y_s]_{s\in\mathcal{S}}\in\mathbb{R}^{|\mathcal{S}|(|\mathcal{A}|-1)}$ and $\alpha,\beta\in\mathbb{R}$, we can prove that $T$ is an orthogonal transformation as follows. 
\begin{align}
&[T(\alpha x+\beta y)](\cdot|s)=\mathcal{T}(\alpha x_s+\beta y_s)=\alpha\mathcal{T}(x_s)+\beta\mathcal{T}(y_s)=\alpha [T(x)](\cdot|s)+\beta [T(x)](\cdot|s)\nonumber\\
\Rightarrow &T(\alpha x+\beta y)=\alpha T(x)+\beta T(y).\nonumber
\end{align}
\begin{align}
\langle T(x),T(y)\rangle=&\sum_s\big\langle [T(x)](\cdot|s),[T(y)](\cdot|s)\big\rangle=\sum_s \langle\mathcal{T}(x_s),\mathcal{T}(y_s)\rangle=\sum_s \langle x_s,y_s\rangle=\langle x,y\rangle.\nonumber
\end{align}
Define the following set. 
\begin{align}
T^{-1}(\Pi_{\Delta}-|\mathcal{A}|^{-1})\overset{\rm def}{=}\{\pi\in\Pi_{\Delta}:T^{-1}(\pi-|\mathcal{A}|^{-1})\},\label{eq:Pi_inv}
\end{align}
where $\pi-|\mathcal{A}|^{-1}\in\mathbb{R}^{|\mathcal{S}||\mathcal{A}|}$ has entries $(\pi-|\mathcal{A}|^{-1})(a|s)=\pi(a|s)-|\mathcal{A}|^{-1}$, so $\pi-|\mathcal{A}|^{-1}\in\mathcal{L}_0$. Furthermore, since $\Pi_{\Delta}$ is a convex and compact set and $T^{-1}$ is an orthogonal transformation, $T^{-1}(\Pi_{\Delta}-|\mathcal{A}|^{-1})$ is a convex and compact subset of $\mathcal{L}_0$. 
 
Then for any $x\in T^{-1}(\Pi_{\Delta}-|\mathcal{A}|^{-1})$, we have $T(x)+|\mathcal{A}|^{-1}\in\Pi_{\Delta}$, so we can define the function $f_{\lambda}(x)\overset{\rm def}{=}V_{\lambda,T(x)+|\mathcal{A}|^{-1}}^{T(x)+|\mathcal{A}|^{-1}}$. 

Note that as $V_{\lambda,\pi}^{\pi}$ is a differentiable function of $\pi$, so for any $\pi'\in\Pi$ and fixed $\pi\in\Pi$ we have 
\begin{align}
\frac{V_{\lambda,\pi'}^{\pi'}-V_{\lambda,\pi}^{\pi}-\langle\nabla_{\pi}V_{\lambda,\pi}^{\pi}, \pi'-\pi\rangle}{\|\pi'-\pi\|}=& \frac{V_{\lambda,\pi'}^{\pi'}-V_{\lambda,\pi}^{\pi}-\langle{\rm proj}_{\mathcal{L}_0}(\nabla_{\pi}V_{\lambda,\pi}^{\pi}), \pi'-\pi\rangle}{\|\pi'-\pi\|}\nonumber\\
\to& 0\quad ({\rm~as~} \pi'\in\Pi {\rm~and~}\pi'\to\pi),\label{eq:pi_diffable}
\end{align}
where the above $=$ uses $\pi'-\pi\in\mathcal{L}_0$. Then, we can prove that $f_{\lambda}$ is differentiable with gradient $\nabla f_{\lambda}(x)=T^{-1}\big({\rm proj}_{\mathcal{L}_0}\nabla_{\pi}V_{\lambda,\pi}^{\pi}\big|_{\pi=T(x)+|\mathcal{A}|^{-1}}\big)$, since for any $x'\in T^{-1}(\Pi_{\Delta}-|\mathcal{A}|^{-1})$ and fixed $x\in T^{-1}(\Pi_{\Delta}-|\mathcal{A}|^{-1})$ we have 
\begin{align}
&\frac{f_{\lambda}(x')-f_{\lambda}(x)-\big\langle T^{-1}\big[{\rm proj}_{\mathcal{L}_0}\big(\nabla_{\pi}V_{\lambda,\pi}^{\pi}\big|_{\pi=T(x)+|\mathcal{A}|^{-1}}\big)\big], x'-x\big\rangle}{\|x'-x\|}\nonumber\\
\overset{(a)}{=}&\frac{1}{\big\|[T(x')+|\mathcal{A}|^{-1}]-[T(x)+|\mathcal{A}|^{-1}]\big\|}\Big[V_{\lambda,T(x')+|\mathcal{A}|^{-1}}^{T(x')+|\mathcal{A}|^{-1}}-V_{\lambda,T(x)+|\mathcal{A}|^{-1}}^{T(x)+|\mathcal{A}|^{-1}}\nonumber\\
&-\big\langle {\rm proj}_{\mathcal{L}_0}\big(\nabla_{\pi}V_{\lambda,\pi}^{\pi}\big|_{\pi=T(x)+|\mathcal{A}|^{-1}}\big), [T(x')+|\mathcal{A}|^{-1}]-[T(x)+|\mathcal{A}|^{-1}]\big\rangle\Big]\nonumber\\
\overset{(b)}{\to}& 0 {\rm~as~} x'\in T^{-1}(\Pi_{\Delta}-|\mathcal{A}|^{-1}) {\rm~and~}x'\to x, \label{eq:f_diffable}
\end{align}
where (a) uses the property of the orthogonal transformation $T$, and (b) uses Eq. (\ref{eq:pi_diffable}) and the fact that $x'\to x$ means $\big\|[T(x')+|\mathcal{A}|^{-1}]-[T(x)+|\mathcal{A}|^{-1}]\big\|=\|x'-x\|\to 0$. 

Furthermore, we will show that $f_{\lambda}(x)$ is a Lipscthiz continuous and Lipschitz smooth function of $x\in \Pi_{\Delta}$. For any $x, x'\in T^{-1}(\Pi_{\Delta}-|\mathcal{A}|^{-1})$, we have
\begin{align}
|f_{\lambda}(x')-f_{\lambda}(x)|=&\big|V_{\lambda,T(x')+|\mathcal{A}|^{-1}}^{T(x')+|\mathcal{A}|^{-1}}-V_{\lambda,T(x)+|\mathcal{A}|^{-1}}^{T(x)+|\mathcal{A}|^{-1}}\big|
\overset{(a)}{\le}\frac{L_{\lambda}}{\Delta}\|T(x')-T(x)\|\overset{(b)}{=}\frac{L_{\lambda}}{\Delta}\|x'-x\|,\nonumber
\end{align}
\begin{align}
\|\nabla f_{\lambda}(x')-\nabla f_{\lambda}(x)\|=&\big\|T^{-1}\big[{\rm proj}_{\mathcal{L}_0}\big(\nabla_{\pi}V_{\lambda,\pi}^{\pi}\big|_{\pi=T(x')}\big)\big]-T^{-1}\big[{\rm proj}_{\mathcal{L}_0}\big(\nabla_{\pi}V_{\lambda,\pi}^{\pi}\big|_{\pi=T(x)}\big)\big]\big\|\nonumber\\
\overset{(b)}{=}&\big\|{\rm proj}_{\mathcal{L}_0}\big(\nabla_{\pi}V_{\lambda,\pi}^{\pi}\big|_{\pi=T(x')+|\mathcal{A}|^{-1}}\big)-{\rm proj}_{\mathcal{L}_0}\big(\nabla_{\pi}V_{\lambda,\pi}^{\pi}\big|_{\pi=T(x)+|\mathcal{A}|^{-1}}\big)\big\|\nonumber\\
\le&\big\|\big(\nabla_{\pi}V_{\lambda,\pi}^{\pi}\big|_{\pi=T(x')+|\mathcal{A}|^{-1}}\big)-\big(\nabla_{\pi}V_{\lambda,\pi}^{\pi}\big|_{\pi=T(x)+|\mathcal{A}|^{-1}}\big)\big\|\nonumber\\
\overset{(a)}{\le}&\frac{\ell_{\lambda}}{\Delta}\|T(x')-T(x)\|\overset{(b)}{=}\frac{\ell_{\lambda}}{\Delta}\|x'-x\|,\nonumber
\end{align}
In both the inequalities above, (a) applies Theorem \ref{thm:V_Lip} to $T(x)+|\mathcal{A}|^{-1}, T(x')+|\mathcal{A}|^{-1}\in\Pi_{\Delta}$ and (b) uses the property of the orthogonal transformation $T$. The two inequalities above implies that $f_{\lambda}$ is an $\frac{L_{\lambda}}{\Delta}$-Lipschitz continuous and $\frac{\ell_{\lambda}}{\Delta}$-Lipschitz smooth function on $T^{-1}(\Pi_{\Delta}-|\mathcal{A}|^{-1})$. 

Denote 
\begin{align}
g_{\lambda,\delta}(\pi)\!=\!\frac{|\mathcal{S}|(|\mathcal{A}|\!-\!1)}{2N\delta}\sum_{i=1}^N \big({V}_{\lambda,\pi+\delta u_i}^{\pi+\delta u_i}\!-\!{V}_{\lambda,\pi-\delta u_i}^{\pi-\delta u_i}\big)u_i,\label{eq:gV}
\end{align}
which replaces $\hat{V}_{\lambda,\pi'}^{\pi'}$ with $V_{\lambda,\pi'}^{\pi'}$ in Eq. (\ref{eq:0ppg}). The estimation error of the performative policy gradient estimator above can be rewritten as follows for any $\pi\in\Pi_{\Delta}$. 
\begin{align}
&g_{\lambda,\delta}(\pi)-{\rm proj}_{\mathcal{L}_0}(\nabla_{\pi} V_{\lambda,\pi}^{\pi})\nonumber\\
\overset{(a)}{=}&\Big(\frac{|\mathcal{S}|(|\mathcal{A}|\!-\!1)}{2N\delta}\sum_{i=1}^N \big(V_{\lambda,\pi+\delta u_i}^{\pi+\delta u_i}\!-\!V_{\lambda,\pi-\delta u_i}^{\pi-\delta u_i}\big)u_i\Big)-{\rm proj}_{\mathcal{L}_0}(\nabla_{\pi} V_{\lambda,\pi}^{\pi})\nonumber\\
\overset{(b)}{=}&\Big(\frac{|\mathcal{S}|(|\mathcal{A}|\!-\!1)}{2N\delta}\sum_{i=1}^N \big(f_{\lambda}\big[T^{-1}(\pi-|\mathcal{A}|^{-1})+\delta T^{-1}(u_i)\big]\!-\!f_{\lambda}\big[T^{-1}(\pi-|\mathcal{A}|^{-1}])-\delta T^{-1}(u_i)\big]\big)\cdot\nonumber\\
&T^{-1}(u_i)\Big)-T^{-1}[{\rm proj}_{\mathcal{L}_0}(\nabla_{\pi} V_{\lambda,\pi}^{\pi})]\nonumber\\
\overset{(c)}{=}&\Big(\frac{|\mathcal{S}|(|\mathcal{A}|\!-\!1)}{2N\delta}\sum_{i=1}^N \big(f_{\lambda}\big[T^{-1}(\pi-|\mathcal{A}|^{-1})+\delta T^{-1}(u_i)\big]\!-\!f_{\lambda}\big[T^{-1}(\pi-|\mathcal{A}|^{-1}])-\delta T^{-1}(u_i)\big]\big)\cdot\nonumber\\
&T^{-1}(u_i)\Big)-\nabla f_{\lambda}[T^{-1}(\pi-|\mathcal{A}|^{-1})],\label{eq:gerr_mid}
\end{align} 
where (a) uses Eq. (\ref{eq:0ppg}), (b) uses $f_{\lambda}(x)\overset{\rm def}{=}V_{\lambda,T(x)+|\mathcal{A}|^{-1}}^{T(x)+|\mathcal{A}|^{-1}}$ and the property of the orthogonal transformation $T^{-1}$, (c) uses $\nabla f_{\lambda}(x)=T^{-1}\big({\rm proj}_{\mathcal{L}_0}\nabla_{\pi}V_{\lambda,\pi}^{\pi}\big|_{\pi=T(x)+|\mathcal{A}|^{-1}}\big)$. Note that in the above Eq. (\ref{eq:gerr_mid}), $\pi\in\Pi_{\Delta}$ and $u_i$ is uniformly distributed on the sphere $U_1\cap\mathcal{L}_0$ with $U_1\overset{\rm def}{=}\{u\in\mathbb{R}^{|\mathcal{S}||\mathcal{A}|}\!:\|u\|\!=\! 1\}$. 

Hence, $\pi\pm\delta u_i\in\Pi_{\Delta-\delta}$ which implies $T^{-1}(\pi-|\mathcal{A}|^{-1})\pm\delta T^{-1}(u_i)=T^{-1}(\pi\pm\delta u_i-|\mathcal{A}|^{-1})\in T^{-1}(\Pi_{\Delta-\delta}-|\mathcal{A}|^{-1})$. Also, $T^{-1}(u_i)$ is uniformly distributed on the sphere $T^{-1}(U_{1,0})=\mathbb{S}_{|\mathcal{S}|(|\mathcal{A}|-1)}=\{u\in\mathbb{R}^{|\mathcal{S}|(|\mathcal{A}|-1)}:\|u\|=1\}$. Therefore, we can apply Lemma \ref{lemma:gerr} to the above Eq. (\ref{eq:gerr_mid}) where the function $f_{\lambda}$ is an $\frac{L_{\lambda}}{\Delta-\delta}$-Lipschitz continuous and $\frac{\ell_{\lambda}}{\Delta-\delta}$-Lipschitz smooth function on $T^{-1}(\Pi_{\Delta-\delta}-|\mathcal{A}|^{-1})$, and obtain the following bound which holds with probability at least $1-\eta$. 
\begin{align}
&\|g_{\lambda,\delta}(\pi)-{\rm proj}_{\mathcal{L}_0}(\nabla_{\pi} V_{\lambda,\pi}^{\pi})\|\nonumber\\
\le&\frac{4L_{\lambda}|\mathcal{S}|(|\mathcal{A}|\!-\!1)}{3N(\Delta-\delta)}\log\Big(\frac{|\mathcal{S}|(|\mathcal{A}|\!-\!1)+1}{\eta}\Big)\!+\!\frac{L_{\lambda}|\mathcal{S}|(|\mathcal{A}|\!-\!1)}{\Delta-\delta}\sqrt{\frac{2}{N}\log\Big(\frac{|\mathcal{S}|(|\mathcal{A}|\!-\!1)\!+\!1}{\eta}\Big)}\!+\!\frac{\delta\ell_{\lambda}}{\Delta-\delta}\nonumber\\
\le&\frac{4L_{\lambda}|\mathcal{S}||\mathcal{A}|}{3N(\Delta-\delta)}\log\Big(\frac{|\mathcal{S}||\mathcal{A}|}{\eta}\Big)+\frac{L_{\lambda}|\mathcal{S}||\mathcal{A}|}{\Delta-\delta}\sqrt{\frac{2}{N}\log\Big(\frac{|\mathcal{S}||\mathcal{A}|}{\eta}\Big)}+\frac{\delta\ell_{\lambda}}{\Delta-\delta}.\label{eq:gerr_exactV}
\end{align}
Note that $|\hat{V}_{\lambda,\pi}^{\pi}-V_{\lambda,\pi}^{\pi}|\le\epsilon_V$ holds for any a certain policy $\pi$ with probability at least $1-\eta$. Therefore, with probability at least $1-2N\eta$, we have
\begin{align}
|\hat{V}_{\lambda,\pi'}^{\pi'}-V_{\lambda,\pi'}^{\pi'}|\le\epsilon_V, \forall \pi'\in\{\pi\pm\delta u_i\}_{i=1}^N\label{eq:eps_Vi}
\end{align}

Therefore, with probability at least $1-(2N+1)\eta$, Eqs. (\ref{eq:gerr_exactV}) and (\ref{eq:eps_Vi}) hold and thus we have
\begin{align}
&\|\hat{g}_{\lambda,\delta}(\pi)-{\rm proj}_{\mathcal{L}_0}(\nabla_{\pi} V_{\lambda,\pi}^{\pi})\|\nonumber\\
\le&\|\hat{g}_{\lambda,\delta}(\pi)-g_{\lambda,\delta}(\pi)\|+\|g_{\lambda,\delta}(\pi)-{\rm proj}_{\mathcal{L}_0}(\nabla_{\pi} V_{\lambda,\pi}^{\pi})\|\nonumber\\
\overset{(a)}{\le}& \Big\|\frac{|\mathcal{S}|(|\mathcal{A}|\!-\!1)}{2N\delta}\sum_{i=1}^N \big(\hat{V}_{\lambda,\pi+\delta u_i}^{\pi+\delta u_i}-V_{\lambda,\pi+\delta u_i}^{\pi+\delta u_i}\!-\!\hat{V}_{\lambda,\pi-\delta u_i}^{\pi-\delta u_i}+V_{\lambda,\pi-\delta u_i}^{\pi-\delta u_i}\big)u_i\Big\|\nonumber\\
&+\frac{4L_{\lambda}|\mathcal{S}||\mathcal{A}|}{3N(\Delta-\delta)}\log\Big(\frac{|\mathcal{S}||\mathcal{A}|}{\eta}\Big)+\frac{L_{\lambda}|\mathcal{S}||\mathcal{A}|}{\Delta-\delta}\sqrt{\frac{2}{N}\log\Big(\frac{|\mathcal{S}||\mathcal{A}|}{\eta}\Big)}+\frac{\delta\ell_{\lambda}}{\Delta-\delta}\nonumber\\
\overset{(b)}{\le}& \frac{|\mathcal{S}||\mathcal{A}|}{N\delta}\sum_{i=1}^N \big\|\big(\hat{V}_{\lambda,\pi+\delta u_i}^{\pi+\delta u_i}-V_{\lambda,\pi+\delta u_i}^{\pi+\delta u_i}\!-\!\hat{V}_{\lambda,\pi-\delta u_i}^{\pi-\delta u_i}+V_{\lambda,\pi-\delta u_i}^{\pi-\delta u_i}\big)u_i\big\|\nonumber\\
&+\frac{4L_{\lambda}|\mathcal{S}||\mathcal{A}|}{3N(\Delta-\delta)}\log\Big(\frac{|\mathcal{S}||\mathcal{A}|}{\eta}\Big)+\frac{L_{\lambda}|\mathcal{S}||\mathcal{A}|}{\Delta-\delta}\sqrt{\frac{2}{N}\log\Big(\frac{|\mathcal{S}||\mathcal{A}|}{\eta}\Big)}+\frac{\delta\ell_{\lambda}}{\Delta-\delta}\nonumber\\
\le& \frac{|\mathcal{S}||\mathcal{A}|}{N\delta}\sum_{i=1}^N \big(|\hat{V}_{\lambda,\pi+\delta u_i}^{\pi+\delta u_i}-V_{\lambda,\pi+\delta u_i}^{\pi+\delta u_i}|\!+\!|\hat{V}_{\lambda,\pi-\delta u_i}^{\pi-\delta u_i}+V_{\lambda,\pi-\delta u_i}^{\pi-\delta u_i}|\big)\nonumber\\
&+\frac{4L_{\lambda}|\mathcal{S}||\mathcal{A}|}{3N(\Delta-\delta)}\log\Big(\frac{|\mathcal{S}||\mathcal{A}|}{\eta}\Big)+\frac{L_{\lambda}|\mathcal{S}||\mathcal{A}|}{\Delta-\delta}\sqrt{\frac{2}{N}\log\Big(\frac{|\mathcal{S}||\mathcal{A}|}{\eta}\Big)}+\frac{\delta\ell_{\lambda}}{\Delta-\delta}\nonumber\\
\overset{(c)}{\le}&\frac{2|\mathcal{S}||\mathcal{A}|\epsilon_V}{\delta}+\frac{4L_{\lambda}|\mathcal{S}||\mathcal{A}|}{3N(\Delta-\delta)}\log\Big(\frac{|\mathcal{S}||\mathcal{A}|}{\eta}\Big)+\frac{L_{\lambda}|\mathcal{S}||\mathcal{A}|}{\Delta-\delta}\sqrt{\frac{2}{N}\log\Big(\frac{|\mathcal{S}||\mathcal{A}|}{\eta}\Big)}+\frac{\delta\ell_{\lambda}}{\Delta-\delta},\nonumber
\end{align}
where (a) uses Eqs. (\ref{eq:0ppg}), (\ref{eq:gdelta}) and (\ref{eq:gerr_exactV}), (b) uses Jensen's inequality that $\|\frac{1}{N}\sum_{i=1}^N x_i\|^2\le \frac{1}{N}\sum_{i=1}^N \|x_i\|^2$ for any vectors $\{x_i\}_{i=1}^N$ of the same dimensionality, (c) uses $|\hat{V}_{\lambda,\pi}^{\pi'}-V_{\lambda,\pi'}^{\pi'}|\le\epsilon_V$ for any policy $\pi'$. By replacing $\eta$ with $\frac{\eta}{3N}$ in the inequality above, we prove the error bound (\ref{eq:grad_err}) as follows which holds with probability at least $1-\eta$.
\begin{align}
&\|\hat{g}_{\lambda,\delta}(\pi)-{\rm proj}_{\mathcal{L}_0}(\nabla_{\pi} V_{\lambda,\pi}^{\pi})\|\nonumber\\
\le&\frac{2|\mathcal{S}||\mathcal{A}|\epsilon_V}{\delta}\!+\!\frac{4L_{\lambda}|\mathcal{S}||\mathcal{A}|}{3N(\Delta-\delta)}\log\Big(\frac{3N|\mathcal{S}||\mathcal{A}|}{\eta}\Big)\!+\!\frac{L_{\lambda}|\mathcal{S}||\mathcal{A}|}{\Delta-\delta}\sqrt{\frac{2}{N}\log\Big(\frac{3N|\mathcal{S}||\mathcal{A}|}{\eta}\Big)}\!+\!\frac{\delta\ell_{\lambda}}{\Delta-\delta} \label{eq:conclude_gerr}\\
=&\mathcal{O}\big(\frac{\epsilon_V}{\delta}+\frac{\log(N/\eta)}{\sqrt{N}}+\delta\big)\nonumber
\end{align}

\section{Proof of Proposition \ref{prop:2grad_inprods}}
For any $\pi\in\Pi_{\Delta}$, it is easily seen that the corresponding $\pi'$ defined by Eq. (\ref{eq:pi_pie}) also belongs to $\Pi_{\Delta}$. Therefore,
\begin{align}
\langle\nabla_{\pi} V_{\lambda,\pi}^{\pi},\pi'-\pi\rangle \le \max_{\Tilde{\pi}\in\Pi_{\Delta}}\langle\nabla_{\pi} V_{\lambda,\pi}^{\pi},\Tilde{\pi}-\pi\rangle\le \frac{D\lambda}{5|\mathcal{A}|(1-\gamma)}.\nonumber
\end{align}
Substituting the above inequality into Eq. (\ref{eq:pi_ge2}), we obtain that 
\begin{align}
\pi(a|s)\ge&\pi_{\min}\exp\Big[-\frac{2|\mathcal{A}|}{D\lambda}(1-\gamma)\langle\nabla_{\pi} V_{\lambda,\pi}^{\pi},\pi'-\pi\rangle\Big]\ge \frac{2\pi_{\min}}{3}\ge 2\Delta.\nonumber
\end{align}
Therefore, for any $\pi_2\in\Pi$, we can prove that $\frac{\pi_2+\pi}{2}\in\Pi_{\Delta}$ as follows. 
\begin{align}
\frac{\pi_2(a|s)+\pi(a|s)}{2}\ge \frac{0+2\Delta}{2}=\Delta.\nonumber
\end{align}
Therefore, we can prove 
Eq. (\ref{eq:2grad_inprods}) as follows.
\begin{align}
\max_{\pi_2\in\Pi}\langle\nabla_{\pi} V_{\lambda,\pi}^{\pi},\pi_2-\pi\rangle
=&2\max_{\pi_2\in\Pi}\Big\langle\nabla_{\pi} V_{\lambda,\pi}^{\pi},\frac{\pi_2+\pi}{2}-\pi\Big\rangle \overset{(a)}{\le}2\max_{\Tilde{\pi}\in\Pi_{\Delta}}\langle\nabla_{\pi} V_{\lambda,\pi}^{\pi},\Tilde{\pi}-\pi\rangle.\nonumber
\end{align}
where (a) uses $\frac{\pi_2+\pi}{2}\in\Pi_{\Delta}$. 

\section{Proof of Theorem \ref{thm:0ppg_rate}}\label{sec:proof_0ppg_rate}
If $\pi_t\in\Pi_{\Delta}$, then $\pi_{t+1}\in\Pi_{\Delta}$, since $\Pi_{\Delta}$ is a convex set and $\pi_{t+1}$ obtained by Eq. (\ref{eq:pi_update}) is a convex combination of $\pi_t, \Tilde{\pi}_t\in\Pi_{\Delta}$. Since $\pi_0\in\Pi_{\Delta}$, we have $\pi_t\in\Pi_{\Delta}$ for all $t$ by induction. Therefore, Proposition \ref{prop:grad_err} implies that the following bound holds simultaneously for all $\{\pi_t\}_{t=1}^T\subseteq\Pi_{\Delta}$ with probability at least $1-\eta$. 
\begin{align}
&\|\hat{g}_{\lambda,\delta}(\pi_t)-{\rm proj}_{\mathcal{L}_0}(\nabla_{\pi} V_{\lambda,\pi_t}^{\pi_t})\|\nonumber\\
\le& \frac{2|\mathcal{S}||\mathcal{A}|\epsilon_V}{\delta}+\frac{4L_{\lambda}|\mathcal{S}||\mathcal{A}|}{3TN(\Delta-\delta)}\log\Big(\frac{3TN|\mathcal{S}||\mathcal{A}|}{\eta}\Big)+\frac{L_{\lambda}|\mathcal{S}||\mathcal{A}|}{\Delta-\delta}\sqrt{\frac{2}{N}\log\Big(\frac{3TN|\mathcal{S}||\mathcal{A}|}{\eta}\Big)}+\frac{\delta\ell_{\lambda}}{\Delta-\delta}. \label{eq:grad_err_pit}
\end{align}
The bound above further implies that for any $\pi\in\Pi$, we have
\begin{align}
&\big|\big\langle \hat{g}_{\lambda,\delta}(\pi_t)-\nabla_{\pi} V_{\lambda,\pi_t}^{\pi_t}, \pi-\pi_t\big\rangle\big| \nonumber\\
\overset{(a)}{=}&\big|\big\langle \hat{g}_{\lambda,\delta}(\pi_t)-{\rm proj}_{\mathcal{L}_0}(\nabla_{\pi} V_{\lambda,\pi_t}^{\pi_t}), \pi-\pi_t\big\rangle\big|\nonumber\\
\le&\|\hat{g}_{\lambda,\delta}(\pi_t)-{\rm proj}_{\mathcal{L}_0}(\nabla_{\pi} V_{\lambda,\pi_t}^{\pi_t})\|\cdot\|\pi-\pi_t\|\nonumber\\
\overset{(b)}{\le}&\sqrt{2|\mathcal{S}|}\Big[\frac{2|\mathcal{S}||\mathcal{A}|\epsilon_V}{\delta}+\frac{4L_{\lambda}|\mathcal{S}||\mathcal{A}|}{3TN(\Delta-\delta)}\log\Big(\frac{3TN|\mathcal{S}||\mathcal{A}|}{\eta}\Big)\nonumber\\
&+\frac{L_{\lambda}|\mathcal{S}||\mathcal{A}|}{\Delta-\delta}\sqrt{\frac{2}{N}\log\Big(\frac{3TN|\mathcal{S}||\mathcal{A}|}{\eta}\Big)}+\frac{\delta\ell_{\lambda}}{\Delta-\delta}\Big], \label{eq:dire_err_pit}
\end{align}
where (a) uses $\Tilde{\pi}_t-\pi_t, \Tilde{\pi}-\pi_t\in\mathcal{L}_0$ for $\Tilde{\pi}_t, \Tilde{\pi}\in\Pi_{\Delta}$, and (b) uses Eq. (\ref{eq:grad_err_pit}) and Lemma \ref{lemma:pi_diameter}. 

Under the conditions above, we have
\begin{align}
&V_{\lambda,\pi_{t+1}}^{\pi_{t+1}}\nonumber\\
\overset{(a)}{\ge}&V_{\lambda,\pi_t}^{\pi_t}+\langle\nabla_{\pi}V_{\lambda,\pi_t}^{\pi_t}, \pi_{t+1}-\pi_t\rangle-\frac{\ell_{\lambda}}{2\Delta}\|\pi_{t+1}-\pi_t\|^2\nonumber\\
\overset{(b)}{=}&V_{\lambda,\pi_t}^{\pi_t}+\beta\langle\nabla_{\pi}V_{\lambda,\pi_t}^{\pi_t}, \Tilde{\pi}_t-\pi_t\rangle-\frac{\ell_{\lambda}\beta^2}{2\Delta}\|\Tilde{\pi}_t-\pi_t\|^2\nonumber\\
=&V_{\lambda,\pi_t}^{\pi_t}+\beta\langle\hat{g}_{\lambda,\delta}(\pi_t), \Tilde{\pi}_t-\pi_t\rangle+\beta\langle\nabla_{\pi}V_{\lambda,\pi_t}^{\pi_t}-\hat{g}_{\lambda,\delta}(\pi_t), \Tilde{\pi}_t-\pi_t\rangle-\frac{\ell_{\lambda}\beta^2}{2\Delta}\|\Tilde{\pi}_t-\pi_t\|^2\nonumber\\
\overset{(c)}{\ge}&V_{\lambda,\pi_t}^{\pi_t}+\beta\langle\hat{g}_{\lambda,\delta}(\pi_t), \Tilde{\pi}_t-\pi_t\rangle -\frac{\ell_{\lambda}|\mathcal{S}|\beta^2}{\Delta}-\beta\sqrt{2|\mathcal{S}|}\Big[\frac{2|\mathcal{S}||\mathcal{A}|\epsilon_V}{\delta}\nonumber\\
&+\frac{4L_{\lambda}|\mathcal{S}||\mathcal{A}|}{3TN(\Delta-\delta)}\log\Big(\frac{3TN|\mathcal{S}||\mathcal{A}|}{\eta}\Big)+\frac{L_{\lambda}|\mathcal{S}||\mathcal{A}|}{\Delta-\delta}\sqrt{\frac{2}{N}\log\Big(\frac{3TN|\mathcal{S}||\mathcal{A}|}{\eta}\Big)}+\frac{\delta\ell_{\lambda}}{\Delta-\delta}\Big],\label{eq:Verr_iter}
\end{align}  
where (a) uses the $\frac{\ell_{\lambda}}{\Delta}$-Lipschitz smoothness of $V_{\lambda,\pi}^{\pi}$ on $\Pi_{\Delta}$, (b) uses Eq. (\ref{eq:pi_update}), (c) uses Eq. (\ref{eq:dire_err_pit}) and Lemma \ref{lemma:pi_diameter}.

Rearranging and averaging Eq. (\ref{eq:Verr_iter}) over $t=0,1,\ldots,T-1$, we obtain that
\begin{align}
&\max_{\Tilde{\pi}\in\Pi_{\Delta}}\langle\hat{g}_{\lambda,\delta}(\pi_{\widetilde{T}}), \Tilde{\pi}-\pi_{\widetilde{T}}\rangle\nonumber\\
\overset{(a)}{=}&\langle\hat{g}_{\lambda,\delta}(\pi_{\widetilde{T}}), \Tilde{\pi}_{\widetilde{T}}-\pi_{\widetilde{T}}\rangle\nonumber\\
\overset{(b)}{\le}&\frac{1}{T}\sum_{t=0}^{T-1} \langle\hat{g}_{\lambda,\delta}(\pi_t), \Tilde{\pi}_t-\pi_t\rangle\nonumber\\
\le&\frac{V_{\lambda,\pi_T}^{\pi_T}-V_{\lambda,\pi_0}^{\pi_0}}{T\beta}+\frac{\ell_{\lambda}|\mathcal{S}|\beta}{\Delta} +\sqrt{2|\mathcal{S}|}\Big[\frac{2|\mathcal{S}||\mathcal{A}|\epsilon_V}{\delta}\nonumber\\
&+\frac{4L_{\lambda}|\mathcal{S}||\mathcal{A}|}{3TN(\Delta-\delta)}\log\Big(\frac{3TN|\mathcal{S}||\mathcal{A}|}{\eta}\Big)+\frac{L_{\lambda}|\mathcal{S}||\mathcal{A}|}{\Delta-\delta}\sqrt{\frac{2}{N}\log\Big(\frac{3TN|\mathcal{S}||\mathcal{A}|}{\eta}\Big)}+\frac{\delta\ell_{\lambda}}{\Delta-\delta}\Big]\nonumber\\
\le&\frac{1+\lambda\log|\mathcal{A}|}{T\beta(1-\gamma)}+\frac{\ell_{\lambda}|\mathcal{S}|\beta}{\Delta} +\sqrt{2|\mathcal{S}|}\Big[\frac{2|\mathcal{S}||\mathcal{A}|\epsilon_V}{\delta}\nonumber\\
&+\frac{4L_{\lambda}|\mathcal{S}||\mathcal{A}|}{3TN(\Delta-\delta)}\log\Big(\frac{3TN|\mathcal{S}||\mathcal{A}|}{\eta}\Big)+\frac{L_{\lambda}|\mathcal{S}||\mathcal{A}|}{\Delta-\delta}\sqrt{\frac{2}{N}\log\Big(\frac{3TN|\mathcal{S}||\mathcal{A}|}{\eta}\Big)}+\frac{\delta\ell_{\lambda}}{\Delta-\delta}\Big],\label{eq:PiDelta_rate}
\end{align}
where (a) uses Lemma \ref{lemma:wolfe} which means $\Tilde{\pi}_t$ satisfies Eq. (\ref{eq:pi_wolfe}) and (b) uses the output rule of Algorithm \ref{alg:0ppg} that $\widetilde{T}\in\mathop{\arg\min}_{0\le t\le T-1}\langle\hat{g}_{\lambda,\delta}(\pi_t), \Tilde{\pi}_t-\pi_t\rangle$. Therefore, 
\begin{align}
&\max_{\Tilde{\pi}\in\Pi_{\Delta}}\big\langle\nabla_{\pi}V_{\lambda,\pi_{\widetilde{T}}}^{\pi_{\widetilde{T}}}, \Tilde{\pi}-\pi_{\widetilde{T}}\big\rangle \nonumber\\
=&\max_{\Tilde{\pi}\in\Pi_{\Delta}}\big[\big\langle\nabla_{\pi}V_{\lambda,\pi_{\widetilde{T}}}^{\pi_{\widetilde{T}}}-\hat{g}_{\lambda,\delta}(\pi_{\pi_{\widetilde{T}}}), \Tilde{\pi}-\pi_{\widetilde{T}}\big\rangle + \big\langle\hat{g}_{\lambda,\delta}(\pi_{\pi_{\widetilde{T}}}), \Tilde{\pi}-\pi_{\widetilde{T}}\big\rangle\big]\nonumber\\
\overset{(a)}{\le}&\frac{1+\lambda\log|\mathcal{A}|}{T\beta(1-\gamma)}+\frac{\ell_{\lambda}|\mathcal{S}|\beta}{\Delta} +2\sqrt{2|\mathcal{S}|}\Big[\frac{2|\mathcal{S}||\mathcal{A}|\epsilon_V}{\delta}\nonumber\\
&+\frac{4L_{\lambda}|\mathcal{S}||\mathcal{A}|}{3TN(\Delta-\delta)}\log\Big(\frac{3TN|\mathcal{S}||\mathcal{A}|}{\eta}\Big)+\frac{L_{\lambda}|\mathcal{S}||\mathcal{A}|}{\Delta-\delta}\sqrt{\frac{2}{N}\log\Big(\frac{3TN|\mathcal{S}||\mathcal{A}|}{\eta}\Big)}+\frac{\delta\ell_{\lambda}}{\Delta-\delta}\Big],\label{eq:rate_mid}
\end{align}
where (a) uses Eqs. (\ref{eq:dire_err_pit}) and (\ref{eq:PiDelta_rate}). 

Use the following hyperparameter choices for Algorithm \ref{alg:0ppg}.
\begin{align}
\Delta=&\frac{\pi_{\min}}{3},\label{eq:Delta}\\
\beta=&\frac{D\Delta\epsilon}{12\ell_{\lambda}|\mathcal{S}|}=\frac{D\pi_{\min}\epsilon}{36\ell_{\lambda}|\mathcal{S}|}=\mathcal{O}(\epsilon),\label{eq:beta}\\
T=&\frac{12(1+\lambda\log|\mathcal{A}|)}{D\epsilon\beta(1-\gamma)}=\frac{432\ell_{\lambda}|\mathcal{S}|(1+\lambda\log|\mathcal{A}|)}{\pi_{\min}D^2(1-\gamma)\epsilon^2}=\mathcal{O}(\epsilon^{-2})\label{eq:T}\\
\delta=&\frac{D\Delta\epsilon}{48\sqrt{2|\mathcal{S}|}\ell_{\lambda}}=\frac{D\pi_{\min}\epsilon}{144\sqrt{2|\mathcal{S}|}\ell_{\lambda}}=\mathcal{O}(\epsilon)\overset{(a)}{\le}\frac{\Delta}{2},\label{eq:delta}\\
\epsilon_V=&\frac{D\delta\epsilon}{48|\mathcal{S}||\mathcal{A}|\sqrt{2|\mathcal{S}|}}=\frac{\pi_{\min}D^2\epsilon^2}{13824\ell_{\lambda}|\mathcal{S}|^2|\mathcal{A}|}=\mathcal{O}(\epsilon^2)\label{eq:epsV}\\
N=&\frac{663552L_{\lambda}^2|\mathcal{S}|^3|\mathcal{A}|^2}{D^2\pi_{\min}^2\epsilon^2}\log\max\Big(\frac{165888L_{\lambda}^2|\mathcal{S}|^3|\mathcal{A}|^2}{D^2\pi_{\min}^2\epsilon^2}, \frac{1296\ell_{\lambda}|\mathcal{S}|^2|\mathcal{A}|(1+\lambda\log|\mathcal{A}|)}{D^2\eta\pi_{\min}(1-\gamma)\epsilon^2}\Big)\nonumber\\
&+2\log\Big(\frac{3|\mathcal{S}||\mathcal{A}|}{\eta}\Big)+3\nonumber\\
=&\mathcal{O}[\epsilon^{-2}\log(\eta^{-1}\epsilon^{-1})]\label{eq:N}
\end{align}
where (a) uses $\epsilon\le 24\sqrt{2|\mathcal{S}|}\ell_{\lambda}/D$. With the hyperparameter choices above, we obtain the following inequalities (\ref{eq:term1})-(\ref{eq:term3}).
\begin{align}
&2\sqrt{2|\mathcal{S}|}\cdot\frac{L_{\lambda}|\mathcal{S}||\mathcal{A}|}{\Delta-\delta}\sqrt{\frac{2}{N}\log\Big(\frac{3TN|\mathcal{S}||\mathcal{A}|}{\eta}\Big)}\nonumber\\
\overset{(a)}{\le}&\frac{24L_{\lambda}|\mathcal{S}|^{1.5}|\mathcal{A}|}{\pi_{\min}}\sqrt{\frac{\log N}{N}+\frac{1}{N}\log\Big(\frac{1296\ell_{\lambda}|\mathcal{S}|^2|\mathcal{A}|(1+\lambda\log|\mathcal{A}|)}{\eta\pi_{\min}D^2(1-\gamma)\epsilon^2}\Big)}\nonumber\\
\overset{(b)}{\le}&\frac{24L_{\lambda}|\mathcal{S}|^{1.5}|\mathcal{A}|}{\pi_{\min}}\sqrt{\Tilde{\epsilon}+\frac{\Tilde{\epsilon}}{4}}\nonumber\\
=&\frac{12\sqrt{5}L_{\lambda}|\mathcal{S}|^{1.5}|\mathcal{A}|}{\pi_{\min}}\cdot\frac{D\pi_{\min}\epsilon}{\sqrt{165888}L_{\lambda}|\mathcal{S}|^{1.5}|\mathcal{A}|}\le\frac{D\epsilon}{12},\label{eq:term1}
\end{align}
where (a) uses Eq. (\ref{eq:T}) and $\delta\le\Delta/2=\pi_{\min}/6$ implied by Eqs. (\ref{eq:Delta}) and (\ref{eq:delta}), (b) uses Eq. (\ref{eq:N}) and its implication that $N\ge 4\Tilde{\epsilon}^{-1}\log(\Tilde{\epsilon}^{-1})$ with $\Tilde{\epsilon}=\frac{\pi_{\min}^2\epsilon^2}{165888D^2L_{\lambda}^2|\mathcal{S}|^3|\mathcal{A}|^2}\le 0.5$ (since $\epsilon\le \frac{288DL_{\lambda}|\mathcal{S}|^{1.5}|\mathcal{A}|}{\pi_{\min}}$), which implies $\frac{\log N}{N}\le \Tilde{\epsilon}$ based on Lemma \ref{lemma:logx_by_x}. 
\begin{align}
\frac{1}{TN}\log\Big(\frac{3TN|\mathcal{S}||\mathcal{A}|}{\eta}\Big)=&\frac{\log(TN)}{TN}+\frac{1}{TN}\log\Big(\frac{3|\mathcal{S}||\mathcal{A}|}{\eta}\Big)\overset{(a)}{\le}\frac{1}{2}+\frac{1}{2}=1,\label{eq:term2}
\end{align}
where (a) uses $NT\ge N\ge \max\Big[3,2\log\Big(\frac{3|\mathcal{S}||\mathcal{A}|}{\eta}\Big)\Big]$ and Lemma \ref{lemma:logx_by_x}. 
\begin{align}
2\sqrt{2|\mathcal{S}|}\cdot\frac{4L_{\lambda}|\mathcal{S}||\mathcal{A}|}{3TN(\Delta-\delta)}\log\Big(\frac{3TN|\mathcal{S}||\mathcal{A}|}{\eta}\Big)\overset{(a)}{\le}& 2\sqrt{2|\mathcal{S}|}\cdot\frac{\sqrt{2}L_{\lambda}|\mathcal{S}||\mathcal{A}|}{\Delta-\delta}\sqrt{\frac{1}{TN}\log\Big(\frac{3TN|\mathcal{S}||\mathcal{A}|}{\eta}\Big)} \nonumber\\
\overset{(b)}{\le}& \frac{D\epsilon}{12}\label{eq:term3}
\end{align}
where (a) uses $\frac{4}{3}<\sqrt{2}$ and $y\le \sqrt{y}$ for $y=\frac{1}{TN}\log\Big(\frac{3TN|\mathcal{S}||\mathcal{A}|}{\eta}\Big)\le 1$ (Eq. (\ref{eq:term2})), and (b) uses $T\ge 1$ and Eq. (\ref{eq:term1}). By substituting the hyperparameter choices (\ref{eq:Delta})-(\ref{eq:N}) as well as Eqs. (\ref{eq:term1}) and (\ref{eq:term3}) into Eq. (\ref{eq:rate_mid}), we have 
\begin{align}
&\max_{\Tilde{\pi}\in\Pi_{\Delta}}\big\langle\nabla_{\pi}V_{\lambda,\pi_{\widetilde{T}}}^{\pi_{\widetilde{T}}}, \Tilde{\pi}-\pi_{\widetilde{T}}\big\rangle \nonumber\\
\le&\frac{1+\lambda\log|\mathcal{A}|}{T\beta(1-\gamma)}+\frac{\ell_{\lambda}|\mathcal{S}|\beta}{\Delta} +2\sqrt{2|\mathcal{S}|}\Big[\frac{2|\mathcal{S}||\mathcal{A}|\epsilon_V}{\delta}\nonumber\\
&+\frac{4L_{\lambda}|\mathcal{S}||\mathcal{A}|}{3TN(\Delta-\delta)}\log\Big(\frac{3TN|\mathcal{S}||\mathcal{A}|}{\eta}\Big)+\frac{L_{\lambda}|\mathcal{S}||\mathcal{A}|}{\Delta-\delta}\sqrt{\frac{2}{N}\log\Big(\frac{3TN|\mathcal{S}||\mathcal{A}|}{\eta}\Big)}+\frac{\delta\ell_{\lambda}}{\Delta-\delta}\Big]\nonumber\\
\le&\frac{1+\lambda\log|\mathcal{A}|}{\beta(1-\gamma)}\frac{\epsilon\beta(1-\gamma)}{12D(1+\lambda\log|\mathcal{A}|)} + \frac{\ell_{\lambda}|\mathcal{S}|}{\Delta}\cdot \frac{\Delta\epsilon}{12D\ell_{\lambda}|\mathcal{S}|}\nonumber\\
&+\frac{4\sqrt{2|\mathcal{S}|}|\mathcal{S}||\mathcal{A}|}{\delta}\cdot \frac{\delta\epsilon}{48D|\mathcal{S}||\mathcal{A}|\sqrt{2|\mathcal{S}|}} +\frac{\epsilon}{12D}+\frac{\epsilon}{12D}+\frac{2\sqrt{2|\mathcal{S}|}\ell_{\lambda}}{\Delta/2}\cdot\frac{\Delta\epsilon}{48\sqrt{2|\mathcal{S}|}D\ell_{\lambda}}\nonumber\\
=&\frac{D\epsilon}{2}\overset{(a)}{\le} \frac{D\lambda}{5|\mathcal{A}|(1-\gamma)},\nonumber
\end{align}
where (a) uses $\epsilon\le \frac{2\lambda D^2}{5|\mathcal{A}|(1-\gamma)}$. Then based on Proposition \ref{prop:2grad_inprods}, the inequality above implies that 
\begin{align}
\max_{\Tilde{\pi}\in\Pi}\big\langle\nabla_{\pi}V_{\lambda,\pi_{\widetilde{T}}}^{\pi_{\widetilde{T}}}, \Tilde{\pi}-\pi_{\widetilde{T}}\big\rangle\le D\epsilon,\nonumber
\end{align}
which means $\pi_{\widetilde{T}}$ is a $D\epsilon$-stationary policy. Then if $\mu\ge 0$, Corollary \ref{coro:stat2PO} implies that $\pi_{\widetilde{T}}$ is also an $\epsilon$-PO policy.

\section{Adjusting Our Results to the Existing Quadratic Regularizer}\label{sec:QuadReg}
In Section \ref{sec:alg_all}, we have proposed a 0-FW algorithm and obtain its finite-time convergence result to the desired PO policy for our entropy-regularized value function (\ref{eq:Vfunc}). We will briefly show that 0-FW algorithm can also converge to PO for the existing performative reinforcement learning defined by the value function (\ref{eq:Vfunc_general}) with quadratic regularizer $\mathcal{H}_{\pi'}(\pi)=\frac{1}{2}\|d_{\pi,p_{\pi'}}\|^2$ \citep{mandal2023performative,rank2024performative,pollatos2025corruption}. The \textit{performative value function} can be rewritten as the following $\lambda$-strongly concave function of $d_{\pi,p_{\pi}}$. 
\begin{align}
V_{\lambda,\pi}^{\pi}=\langle d_{\pi,p_{\pi}}, r_{\pi}\rangle-\lambda \|d_{\pi,p_{\pi}}\|^2. \label{eq:Vfunc_quad}
\end{align}
We can prove the \textit{performative value function} above also satisfies Theorem \ref{thm:ToOpt} (gradient dominance) with a different $\mu$, following the same proof logic, since both regularizers $\mathcal{H}_{\pi}(\pi)$ are strongly convex functions of $d_{\pi,p_{\pi}}$ which implies that $V_{\lambda,\pi_{\alpha}}^{\pi_{\alpha}}$ is a $\mu$-strongly concave function of $\alpha$ as shown in the proof of Theorem \ref{thm:ToOpt} in Appendix \ref{sec:proof_thm:ToOpt}. By direct calculation, we can also show that $V_{\lambda,\pi}^{\pi}$ above is a Lipschitz continuous and Lipschitz smooth function of $\pi\in\Pi$. With these two properties, we can follow the proof logic of Theorem \ref{thm:0ppg_rate} to show that the 0-FW algorithm (with the same procedure as that of Algorithm \ref{alg:0ppg} except the different values of $V_{\lambda,\pi_{\alpha}}^{\pi_{\alpha}}$ in the policy evaluation step) converges to a stationary policy of the \textit{performative value function} (\ref{eq:Vfunc_quad}), which by gradient dominance is a PO policy when the new value of $\mu$ satisfies $\mu\ge 0$. 


\end{document}